\def\eqref#1{equation~\ref{#1}}
\def\1{\bm{1}}
\DeclareMathAlphabet{\mathsfit}{\encodingdefault}{\sfdefault}{m}{sl}
\SetMathAlphabet{\mathsfit}{bold}{\encodingdefault}{\sfdefault}{bx}{n}
\newtheorem*{restatetheorem}{Theorem}
\newtheorem*{restatecorollary}{Corollary}
\theoremstyle{plain}
\newtheorem{theorem}{Theorem}[section]
\newtheorem{proposition}[theorem]{Proposition}
\newtheorem{lemma}[theorem]{Lemma}
\newtheorem{corollary}[theorem]{Corollary}
\theoremstyle{definition}
\newtheorem{definition}[theorem]{Definition}
\newtheorem{example}[]{Example}
\theoremstyle{remark}
\newtheorem*{remark*}{Remark}
\definecolor{softred}{RGB}{255, 153, 153} 
\definecolor{darkred}{rgb}{0.6,0,0}
\DeclareRobustCommand{\circled}[1]{%
  {\tikz[baseline=(char.base)]{\node[shape=circle,draw,inner sep=0.6pt] (char) {\fontsize{10pt}{10pt}\selectfont #1};}}
}
\newcommand{\var}{\text{Var}}
\title{When Machine Learning Gets Personal:\\ Evaluating Prediction and Explanation}
\author{
Louisa Cornelis$^{1,*}$ \quad
Guillermo Bernárdez$^{1}$ \quad
Haewon Jeong$^{1}$ \quad
Nina Miolane$^{1}$ \\
$^{1}$University of California, Santa Barbara, USA\\
$^{*}$Correspondence to: \texttt{louisacornelis@ucsb.edu}
}
\begin{document}

\maketitle

\begin{abstract}
In high-stakes domains like healthcare, users often expect that sharing personal information with machine learning systems will yield tangible benefits, such as more accurate diagnoses and clearer explanations of contributing factors. However, the validity of this assumption remains largely unexplored. We propose a unified framework to quantify how \textit{personalizing a model} influences both prediction and explanation. We show that its impacts on prediction and explanation can diverge: a model may become more or less explainable even when prediction is unchanged. For practical settings, we study a standard hypothesis test for detecting personalization effects on demographic groups. We derive a finite-sample lower bound on its probability of error as a function of group sizes, number of personal attributes, and desired benefit from personalization. This provides actionable insights, such as which dataset characteristics are necessary to test an effect, or the maximum effect that can be tested given a dataset. We apply our framework to real-world tabular datasets using feature-attribution methods, uncovering scenarios where effects are fundamentally untestable due to the dataset statistics. Our results highlight the need for joint evaluation of prediction and explanation in personalized models and the importance of designing models and datasets with sufficient information for such evaluation.

\end{abstract}

\section{Introduction}\label{sec:introduction}

In critical domains like healthcare and education, machine learning models are increasingly personalized by incorporating input attributes that encode personal characteristics. These attributes can be sensitive and linked to historical bias, such as sex or race, or costly, for example requiring expert-administered medical assessments. 
When users provide personal attributes to a model, they implicitly expect improved predictions, but does personalization consistently meet that expectation?

Personalization can indeed enhance predictive accuracy. For instance, cardiovascular risk prediction models often perform better when including sex \citep{womencvd1,womencvd2, womencvd3} and race \citep{racecvd}. This is because men, women, and different racial groups exhibit different heart disease patterns. For example, hypertension is more common in African American populations~\citep{flack2003epidemiology}. Hence, personalization enhances clinical predictions by capturing meaningful biological and sociocultural variation.

However, personalization can also pose risks. Including sensitive attributes such as race, gender, or age can amplify biases in machine learning and perpetuate damaging inequality. For example, \citet{Obermeyer2019} showed that a health algorithm relying on health care costs, an attribute shaped by racial inequities, systematically underestimated illness in Black patients compared to equally sick white patients. This reduced their access to extra care by over half.

Generally, personalization may benefit overall accuracy while harming specific groups, making such risks harder to detect.  In sleep apnea classification, adding age and sex improved overall performance but increased errors for older women and younger men~\citep{suriyakumar2023personalizationharmsreconsideringuse}. 
Similar group disparities have been observed in explainable machine learning, where some users receive less faithful or reliable explanations than others~\citep{Balagopalan_2022, Dai_2022}. These gaps matter: when explanations are less faithful to the true model logic, they can give users an inaccurate picture of how the model makes decisions, leading to misplaced trust or missed warning signs. \cite{Dai_2022} illustrate this in a healthcare setting where explanations for men correctly reveal the model’s reliance on a spurious feature, helping doctors override bad predictions, while explanations for women hide the spurious reasoning and instead highlight clinically relevant cues, causing doctors to trust incorrect predictions and resulting in higher misdiagnosis rates.
However, these studies did not examine whether personalization itself contributes to explanation disparities, making it critical to assess whether model personalization may reduce explanation quality for some users. Hence, before personalizing a model, practitioners must consider if it delivers consistent gains across demographic groups in both prediction and explanation—see Figure~\ref{fig:overview}.

\begin{wrapfigure}{r}{0.36\textwidth}
    \vspace{-15pt}
    \centering
    \includegraphics[width=1\linewidth]{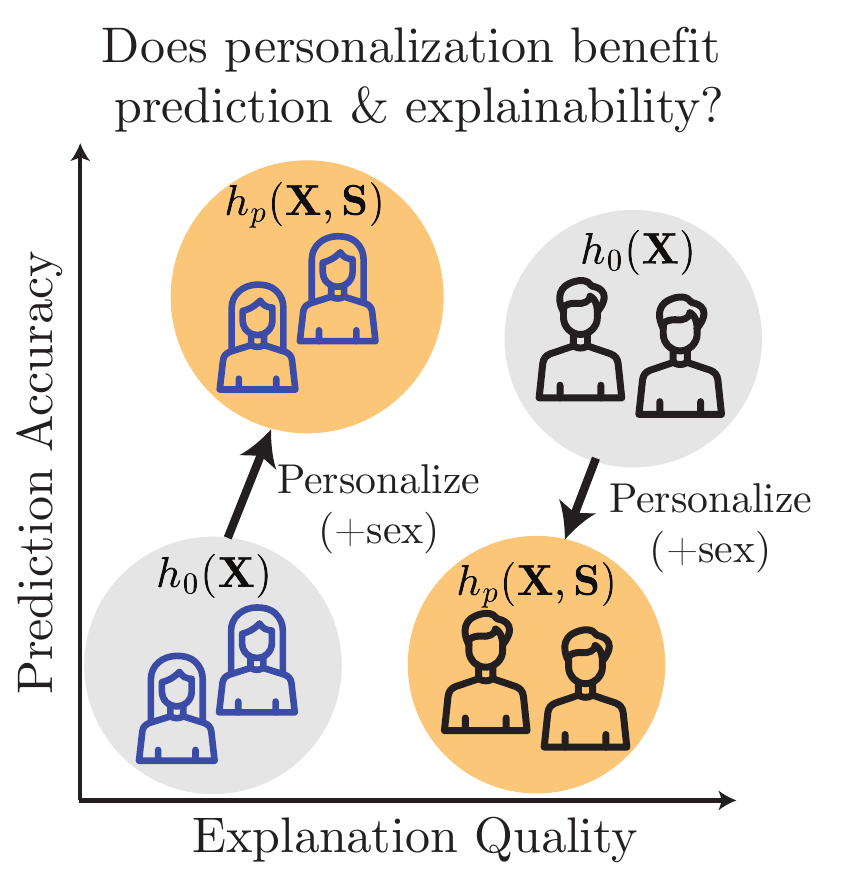}
    \vspace{-1.6em}
    \caption{Impact of personalization on prediction and explanation: some groups benefit, others are harmed. $h_0$ is a generic model, $h_p$ is a personalized model that takes an additional group attribute, $\mathbf{S}$.}
    \label{fig:overview}
    \vspace{-30pt}
\end{wrapfigure}
This showcases the need for a quantitative framework to rigorously and fairly assess the benefits and risks of personalization. We focus on two key goals of machine learning models in high-stakes settings like healthcare: \textit{(i)} making accurate predictions and \textit{(ii)} providing explanations for them.
Our central question is: \textit{how reliably can we evaluate whether personalization improves prediction accuracy and explanation quality, both overall and across groups?}


\textbf{Contributions.} We propose a comprehensive study of the impact of personalization for prediction accuracy and explanation quality in machine learning models. Specifically:
\begin{enumerate}[leftmargin=*] 
    \setlength\itemsep{5pt}
    \setlength\parskip{0em} 

    \item We show that even when personalization does not improve prediction, it can enhance or degrade explainability in terms of how sufficient and comprehensive an explanation is. This highlights the need to evaluate both independently to ensure fairness in settings where accuracy and interpretability are critical (Section \ref{sec:explainability_analysis}).
  
    \item We derive distribution-aware limits on when personalization cannot be reliably tested, showing how many attributes or samples are needed in finite datasets. Our theory extends prior work beyond binary classification to general supervised learning, revealing key differences between evaluating prediction and explanation in classification versus regression~(Section~\ref{sec:validation}).\
\end{enumerate}
\vspace{-11pt}

\begin{enumerate}[resume, leftmargin=*]  
    \item We apply our proposed framework to real-world tabular datasets on classification and regression tasks, finding empirical evidence that personalization can affect explanation and prediction differently (Table \ref{tab:full_results}). We illustrate how group-level gains from personalization are fundamentally untestable, thereby precluding statistical justification across different scenarios (Section~\ref{sec:results}).

\end{enumerate}

Overall, we offer a cautionary perspective on the promise of personalized medicine and the personalization of machine learning in other critical domains. Even when personalizing a machine learning model could be beneficial, it might be impossible to reliably prove it—thus limiting its practical use.



\section{Related Work}\label{sec:related}

Studies that investigate how personalizing machine learning models influences group outcomes \citep{suriyakumar2023personalizationharmsreconsideringuse} are limited to a narrow subset of performance measures and do not address explanation quality as described next. Extended related works are in Appendix \ref{sec:related_works}.

\textbf{Theory.} Few works theoretically characterize the impact of personalization. \citet{monteiro2022epistemic} define the Benefit of Personalization (BoP) as the smallest gain in performance that any group can expect from personalization. While the \textit{definition} of BoP applies to any supervised learning task and ``performance'' measure, the theory supporting its use is confined to binary performance measures, such as accuracy in binary classification (0/1 loss) or false negative and positive rates (Bernoulli variables). Hence, it does not extend to continuous metrics like regression accuracy or explanation quality for regression and fails to provide a complete framework. Moreover, the theorems make unrealistic assumptions about dataset statistics (e.g., demographic groups of equal size) that further restrict their applicability in real-world settings. The general impact of personalization therefore remains theoretically uncharacterized.

\textbf{Empirical Evidence.} While the impact of personalization on explanation quality has never been measured, a few empirical studies have evaluated the fairness of explanations.

Specifically, \citet{Balagopalan_2022} train a human-interpretable model to imitate a black-box model, and quantify explanation quality using fidelity, defined as agreement with the black-box predictions. They find that fidelity (and thus explanation quality and reliability) varies across groups, but their experiments are restricted to binary classifiers, and to fidelity as the only explanation method.
By contrast, \citet{Dai_2022} evaluate various post hoc explanation methods across different evaluation metrics. They show that explanations can vary in quality across demographic groups, leading to fairness concerns, though their experiments are also restricted to binary classifiers.
Neither work considers regression tasks or examines how personalization would affect differences in explanation quality across groups. 
These constraints limit the practical relevance of existing empirical results, as real-world scenarios do not always align with such settings.

\textbf{Link to Fairness.} Fairness in machine learning aims to mitigate biased outcomes affecting individuals or groups~\citep{mehrabi2022surveybiasfairnessmachine}. Past works have defined individual fairness, which seeks similar performance for similar individuals \citep{dwork2011fairnessawareness}, or group fairness \citep{dwork2018groupfairness, hardt2016equalityopportunitysupervisedlearning}, which seeks similar performance across different groups. Within this literature, most methods, metrics, and analyses are intended for classification tasks \citep{fairness_classification}. As for the fair regression literature, authors focus on designing fair learning methods \citep{pmlr-v80-hebert-johnson18a, berk2017convexframeworkfairregression, Fukuchi2013PredictionWM, pérezsuay2017fairkernellearning, Calders2013ControllingAE}, such as multicalibration,  or defining fairness criteria for regression tasks \citep{gursoy2022errorparityfairnesstesting, agarwal2019fairregressionquantitativedefinitions}. By contrast, our approach does not require equal performance across individuals or groups. Instead, we study a relaxed fairness notion: ensuring that no group is systematically harmed by personalization. We propose a framework to evaluate whether this weaker fairness criterion is satisfied, both theoretically and empirically, rather than proposing corrective algorithms.


\section{Background: Benefit of Personalization Framework}\label{sec:framework}

Let $\mathcal{X}, \mathcal{S}, \mathcal{Y}$ denote, respectively, the input feature, group attribute, and outcome spaces. A \textit{personalized model} $h_p: \mathcal{X} \times \mathcal{S} \rightarrow \mathcal{Y}$ aims to predict an outcome variable $y \in \mathcal{Y}$ using both an input feature vector $x \in \mathcal{X}$ and a vector of group attributes $s \in \mathcal{S}$. In contrast, a \textit{generic model} $h_0: \mathcal{X} \rightarrow \mathcal{Y}$ does not use group attributes. We consider that a fixed data distribution $P=P_{\mathbf{X}, \mathbf{S}, \mathbf{Y}}$ is given, and that $h_0$ and $h_p$ are trained to minimize a loss over a training dataset \( \mathcal{D}_{train} \).

\textbf{Cost.} We first evaluate how a model $h$ (generic or personalized) performs for a given group.

\begin{definition} [Expected Group Cost] \label{def:model_cost}
The expected cost of model $h$ for the group $s \in \mathcal{S}$ as measured by the cost function  
$\mathrm{cost}$ is defined as:
$
C(h, s)  \triangleq \mathbb{E}_P[\mathrm{cost}(h, \tilde{\mathbf{X}}, \mathbf{Y}) \mid \mathbf{S}=s],
$
where $\tilde{\mathbf{X}} = \mathbf{X}$ for a generic model $h_0$, and $\tilde{\mathbf{X}} = (\mathbf{X}, \mathbf{S})$ for a personalized model $h_p$.
\end{definition}

In what follows, we use cost and expected cost interchangeably, with the convention that lower cost means better performance. In practice, the cost is evaluated over a set,
$\mathcal{D}$, that is independent from the train set. Costs of interest are shown in Table~\ref{tab:costs}: top rows focus on prediction accuracy (loss and evaluation metrics), while bottom ones address explanation quality (sufficiency and incomprehensiveness). As explanation metrics are less common than accuracy metrics, we review them next.

\begin{table*}[t]
\centering
\vspace{-10pt}
\caption{\textbf{Costs of model $h$ for group $s$ used to evaluate the impact of personalization} on data $(\tilde{\mathbf{X}}, \mathbf{Y})$ where $\tilde{\mathbf{X}} = \mathbf{X}$ for a generic model $h_0$, $\tilde{\mathbf{X}} = (\mathbf{X}, \mathbf{S})$ for a personalized model $h_p$, while $\tilde{\mathbf{X}}_{ \backslash J}$ denotes the input when removing the most important features and $\tilde{\mathbf{X}}_{J}$ is its complement (see Section~\ref{sec:explainability_analysis}). Personalization benefits group $s \in \mathcal{S}$ if $C(h_0, s) - C(h_p, s) > 0$ and harms if $C(h_0, s) - C(h_p, s) < 0$. Incomprehensiveness is abbreviated as Incomp. }
\resizebox{0.8\columnwidth}{!}{
\renewcommand{\arraystretch}{1.8}
\begin{tabular}{|cl|l|l|}
\hline
\rowcolor[HTML]{EFEFEF} 
\multicolumn{2}{|c|}{\cellcolor[HTML]{EFEFEF}\textbf{$C(h, s)$}} &
  \multicolumn{1}{c|}{\cellcolor[HTML]{EFEFEF}\textbf{Classification}} &
  \multicolumn{1}{c|}{\cellcolor[HTML]{EFEFEF}\textbf{Regression}} \\ \hline
\multicolumn{1}{|c|}{\cellcolor[HTML]{EFEFEF}} &
  \textbf{Loss} &
  $\Pr(h(\tilde{\mathbf{X}}) \neq \mathbf{Y} \mid \mathbf{S} = s)$ &
  $\mathbb{E}\left[\|h(\tilde{\mathbf{X}}) - \mathbf{Y}\|^2 \mid \mathbf{S} = s \right]$ \\ \cline{2-4} 
\multicolumn{1}{|c|}{\multirow{-2}{*}{\cellcolor[HTML]{EFEFEF}\textbf{\makebox[0pt][c]{\rotatebox{90}{Predict}}}}} &
  \textbf{Evaluation metric} &
  $-\text{AUC}(h(\tilde{\mathbf{X}}), \mathbf{Y} \mid \mathbf{S} =s)$ &
  $-R^2(h( \tilde{\mathbf{X}}), \mathbf{Y} \mid \mathbf{S} = s)$ \\ \hline
\multicolumn{1}{|c|}{\cellcolor[HTML]{EFEFEF}} &
  \textbf{Sufficiency} &
  $\Pr(h(\tilde{\mathbf{X}}) \neq h(\tilde{\mathbf{X}}_J) \mid \mathbf{S} = s)$ &
  $\mathbb{E}\left[\|h(\tilde{\mathbf{X}}) - h(\tilde{\mathbf{X}}_J)\|^2 \mid \mathbf{S} = s \right]$ \\ \cline{2-4} 
\multicolumn{1}{|c|}{\multirow{-2}{*}{\cellcolor[HTML]{EFEFEF}\textbf{\makebox[0pt][c]{\rotatebox{90}{Explain}}}}} &
  \textbf{Incomp.} &
  $-\Pr\left(h(\tilde{\mathbf{X}}) \neq h(\tilde{\mathbf{X}}_{\backslash J}) \mid \mathbf{S}=s \right)$ &
  $-\mathbb{E}\left[\| h(\tilde{\mathbf{X}}) - h(\tilde{\mathbf{X}}_{\backslash J}) \|^2 \mid \mathbf{S} = s \right]$ \\ \hline
\end{tabular}
}
\vspace{-5pt}
\label{tab:costs}
\end{table*}

\textbf{Cost for Explainability.} We assume access to an \textit{auxiliary explanation method} that assigns importance scores to input features—e.g., based on the magnitude of input gradients. Then, the \textit{explanation quality metric} measures whether the features with the highest importance scores are actually meaningful (see \citet{Nauta_2023} for a review). We use \textit{sufficiency} and \textit{incomprehensiveness} as explanation quality metrics to illustrate our framework and apply our framework using Integrated Gradients, DeepLIFT and Shapley Value Sampling. These metrics quantify the change in prediction when the most important features are removed or retained. For a comprehensive discussion of our rationale in selecting these metrics, see Appendix \ref{sec:related_works}. We emphasize that importance is defined relative to the explanation method, not to any ground truth. This is by design: the goal is not to assume a known set of truly important features, but to assess how well a given explanation method identifies features that meaningfully affect the model’s prediction.

\textbf{Benefit of Personalization.} We can quantify the impact of a personalized model in terms of the benefit of personalization, defined next:

\begin{definition}[Group Benefit of Personalization (G-BoP) \citep{monteiro2022epistemic}] \label{BoP} The gain from personalizing a model can be measured by
$
    \ \operatorname{G-BoP}(h_0, h_p, s)  \triangleq C(h_0,s) - C(h_p, s),
$
comparing the costs of the generic $h_0$ and personalized models $h_p$ for group $s \in \mathcal{S}$. By convention, $\operatorname{G-BoP} > 0$ if the personalized model performs better than the generic one. 
\end{definition}

 We use $\operatorname{G-BoP}_P$ and $\operatorname{G-BoP}_X$ to refer to $\operatorname{G-BoP}$ for prediction and explanation respectively -- see Appendix~\ref{sec:emprical_bop} Table \ref{tab:personalization_benefits} for concrete examples. For example, for prediction evaluation in regression using MSE $\operatorname{G-BoP}$ is: $\mathbb{E}\left[\|h_0(\mathbf{X}) - \mathbf{Y}\|^2 \mid \mathbf{S} = s \right] - \mathbb{E}\left[\|h_p(\mathbf{X}, s) - \mathbf{Y}\|^2 \mid \mathbf{S} = s \right]$, and for incomprehensiveness $\mathbb{E}\left[
\| h_p(\mathbf{X}, s) - h_p(\mathbf{X}_{\backslash J},s_{\backslash J}) \|^2 \mid \mathbf{S} = s \right] - \mathbb{E}\left[
\| h_0(\mathbf{X}) - h_0(\mathbf{X}_{\backslash J}) \|^2 \mid \mathbf{S} = s \right]$. To evaluate whether all groups benefit from personalization, or if any are harmed, we use the following definition as our final assessment metric:

\begin{definition} [Benefit of Personalization (BoP) \citep{monteiro2022epistemic}] \label{total BoP}The BoP is defined as:
$
\gamma\left(h_0, h_p\right) \triangleq
\min _{s \in \mathcal{S}}
(\operatorname{G-BoP}(h_0, h_p, s)),
$
i.e., the minimum group BoP value across groups \(s \in S\) to capture the worst group improvement, or degradation, resulting from personalization. 
\end{definition}

A positive $\gamma$ indicates that all groups receive better performance with respect to the cost function. Contrary to this, a negative $\gamma$ reflects that at least one group is disadvantaged by personalization. When $\gamma$ is small or negative, the practitioner might want to reconsider the use of personalized attributes in terms of fairness with respect to all groups. When $\gamma$ is used to evaluate improvement in prediction and explanation, it is referred to as $\gamma_P$ and $\gamma_X$, respectively.

\begin{remark*} The definitions of $\operatorname{G-BoP}$ and $\gamma$ were originally introduced in \citet{monteiro2022epistemic}. While formally applicable to any cost function, these definitions have only been studied and used with binary costs—such as 0-1 classification loss or false positive/negative rates—due to a theoretical gap that prevents their use with continuous costs, including an analysis of prediction and explanation for regression tasks. Since a holistic analysis of prediction and explanation across machine learning tasks is our primary focus, addressing this gap is central to our contribution in Section~\ref{sec:validation}.
\end{remark*}

\section{Impact of Personalization on Prediction And Explainability}\label{sec:explainability_analysis}

This section provides the first formal analysis showing that personalization's effect on prediction does not determine its effect on explainability, highlighting the need to evaluate both. A common intuition in machine learning is that if personalization improves prediction, it should also improve the quality of explanations derived from the model. This intuition is reflected in the XAI literature, where evaluation practices often conflate model accuracy with explanation correctness. For example, a recent survey notes that “these commentaries relate to the inherent coupling of evaluating the black box’ predictive accuracy with explanation quality. As pointed out by Robnik-Šikonja and Bohanec \citep{RobnikSikonja2018PerturbationBasedEO}, the correctness of an explanation and the accuracy of the predictive model may be orthogonal” \citep{Nauta_2023}. This assumption also appears implicitly in many high-stakes applications, where explanations from high-performing models are used to draw insights about real-world structure \citep{Elmarakeby2021Biologically, Chereda2021Explaining}. Despite its prevalence, this presumed connection between predictive performance and explanation quality has not been formally analyzed in the context of personalization.


Theorems \ref{thm:Bop_to_BopXpos} and \ref{thm:Bop_to_BopXneg}  prove that prediction gains and explanation gains can diverge, demonstrating that gains in prediction performance (measured by $\text{BoP}_P$) and gains in explanation quality (measured by $\text{BoP}_X$) need not align. Theorem \ref{thm:BopX_to_Bop} provides a partial converse, identifying an additive setting where the two align. Though idealized, this boundary case clarifies when practitioners can trust prediction and explanation to align. Proofs are in Appendix~\ref{sec:proof-counterexamples}.

\textbf{No Prediction Benefit Does Not Imply No Explainability Benefit.} The following theorem shows that a personalized model may match a generic model in accuracy, yet offer better explanation. Thus, focusing only on prediction can overlook significant interpretability gains.

\begin{theorem} \label{thm:Bop_to_BopXpos}
    There exists a data distribution $P_{\mathbf{X}, \mathbf{S}, \mathbf{Y}}$ such that the Bayes optimal classifiers $h_0$ and $h_p$ satisfy $\gamma_P(h_0, h_p) = 0$  (with $\gamma_P$ measured by 0-1 loss) and $\gamma_X(h_0, h_p) > 0$ (with $\gamma_X$ measured by sufficiency and incomprehensiveness).
\end{theorem}

\begin{example} \label{example_Thm4.1}
    We illustrate Theorem \ref{thm:Bop_to_BopXpos} with a real-world example. Consider a model with many input features that are partially redundant, for instance, a loan approval model that uses credit score, income, and debt-to-income ratio. Adding a personal feature that is highly correlated with existing features may not change the predictions. However, it can alter the explanation if that feature is the most direct or informative input. For example, adding a binary feature like "pre-approved by another bank", which is strongly correlated with existing features, may leave predictions unchanged, but an explainer might now assign most importance to this new feature because it provides a clearer justification. Figure~\ref{fig:proof_fig_suff2} illustrates the construction behind the proof for sufficiency, where both generic $h_0$ and personalized $h_p$ models predict perfectly (left side), yet only keeping the most important feature for each (right side) shows that the personalized model is more explainable. For this distribution, $\operatorname{G-BoP}_P(h_0, h_p, s) = 0$ and $\operatorname{G-BoP}_X(h_0, h_p, s) > 0$ for each group $s$, so all groups are impacted similarly by personalization. Figure~\ref{fig:proof_fig_incomp} illustrates the proof for incomprehensiveness. 
\end{example}

\textbf{No Prediction Harm Does Not Imply No Explainability Harm.} A personalized model may match a generic model in accuracy yet offer worse explanations. Thus, focusing only on predictive performance can obscure significant harms to explainability.

\begin{theorem} \label{thm:Bop_to_BopXneg}
    There exists a data distribution $P_{\mathbf{X}, \mathbf{S}, \mathbf{Y}}$ such that the Bayes optimal classifiers $h_0$ and $h_p$ satisfy $\gamma_P = 0$  (with $\gamma_P$ measured by 0-1 loss) and $\gamma_X < 0$ (with $\gamma_X$ measured by incomprehensiveness).
\end{theorem}

\begin{example} \label{example_Thm4.2} To illustrate Theorem \ref{thm:Bop_to_BopXneg}, consider a pneumonia detection model using chest X-ray findings that perfectly predict outcomes. Adding white blood cell count leaves accuracy unchanged, but the personalized model now splits importance between X-ray findings and white blood cell count. The explanation is worse because it's now split across two features, making it less clear which feature drives the decision, even though the X-ray alone was already perfectly predictive. \end{example}

Additionally, Theorem \ref{thm:different} proves this phenomenon for both sufficiency and incomprehensiveness by showing how personalization might not alter predictive accuracy across groups, but it might affect explainability differently for different groups.


\begin{theorem} \label{thm:different}
    There exists a data distribution $P_{\mathbf{X}, \mathbf{S}, \mathbf{Y}}$ such that the Bayes optimal classifiers $h_0$ and $h_p$ satisfy $\operatorname{G-BoP}_{P}(h_0, h_p, s)=0$ (measured by 0-1 loss) for all groups $s$, but some groups have $\operatorname{G-BoP}_{X}(h_0, h_p, s)>0$ while others have $\operatorname{G-BoP}_{X}(h_0, h_p, s)<0$ (measured by sufficiency and incomprehensiveness).
\end{theorem}

Figure~\ref{fig:proof_fig_th2_long_suff} illustrates the proof for sufficiency, where both generic $h_0$ and personalized $h_p$ models predict perfectly (left), yet only keeping the most important feature for each (right) shows that the personalized model is more explainable for the group $(s'=1,s=0)$, and less explainable for group $(s'=0,s=1)$. Figure~\ref{fig:proof_fig_th2_long} illustrates the proof for incomprehensiveness.


Together, Theorems~\ref{thm:Bop_to_BopXpos}, ~\ref{thm:Bop_to_BopXneg} and \ref{thm:different} show that knowing $\gamma_P = 0$ provides no information about $\gamma_X$. This motivates the need to evaluate both prediction and explainability, as we offer to do in Section~\ref{sec:validation}.


\textbf{No Explainability Benefit Can Imply No Prediction Benefit.} We now ask the converse: can a lack of explainability benefit imply no predictive benefit? We show this is true for a simple additive model, as long as two notions of explainability measures --sufficiency and incomprehensiveness-- do not see any benefit.
\begin{theorem}\label{thm:BopX_to_Bop}
Assume that $h_0$ and $h_p$ are Bayes optimal regressors and $P_{\mathbf{X}, \mathbf{S}, \mathbf{Y}}$ follows an additive model, i.e., $
    \mathbf{Y} = \alpha_1 \mathbf{X_1} + \cdots + \alpha_t \mathbf{X_t} + \alpha_{t+1} \mathbf{S_1} + \cdots + \alpha_{t+k} \mathbf{S_k} + \epsilon, 
$
where $\mathbf{X_1}, \cdots, \mathbf{X_t}$ and $\mathbf{S_1}, \cdots, \mathbf{S_k}$ are independent, and $\epsilon$ is independent random noise. Then, if for $s\in \mathcal{S}$ we have $\operatorname{G-BoP}_{\text{suff}}(h_0, h_p, s) = \operatorname{G-BoP}_{\text{incomp}}(h_0, h_p, s) = 0$, then $\operatorname{G-BoP_P}(h_0, h_p, s) = 0$. Consequently, if, for all groups $s$, $\operatorname{G-BoP}_{\text{suff}}(h_0, h_p, s)= \operatorname{G-BoP}_{\text{incomp}}(h_0, h_p, s)= 0$, then $\gamma_P = 0$.
\end{theorem}

This theorem demonstrates that under an additive model, if there is no benefit in explanation quality, then there is also no benefit in prediction accuracy. Figure \ref{fig:4_2} illustrates this proof. Additionally, we get the following corollary:

\begin{corollary}
    Under the assumptions of Theorem~\ref{thm:BopX_to_Bop}, if for $s\in\mathcal{S}$, we have $\operatorname{G-BoP}_P(h_0, h_p, s) \neq 0$, then it also holds that $\operatorname{G-BoP}_{\text{suff}}(h_0, h_p, s) \neq 0$ or $\operatorname{G-BoP}_{\text{incomp}}(h_0, h_p, s) \neq 0$. Consequently, if $\gamma_P \neq 0$, then there exists a group $s \in \mathcal{S}$ such that $\operatorname{G-BoP}_{\text{suff}}(h_0, h_p, s) \neq 0$ or $\operatorname{G-BoP}_{\text{incomp}}(h_0, h_p, s)\neq 0$.
\end{corollary}

This theorem shows that if personalization affects prediction, it must also affect explanation for at least one explainability measure and one demographic group. This result establishes a rare direct link between explanation and prediction, in a simplified linear setting. Proving this for general models remains an open question. 




\section{Testing Personalization’s Impact on Prediction and Explanation}\label{sec:validation}

Having emphasized the importance of evaluating both prediction and explainability, we now introduce a methodology to assess them in practice.
The true BoP $\gamma$, defined over the whole data distribution, is inaccessible and needs to be estimated from finite samples. Then, if its estimate $\hat{\gamma}$ is positive, one must consider whether the true $\gamma$ is also likely to be positive. In scenarios where personalization incurs a price—such as requesting sensitive user information—one should determine how large $\hat{\gamma}$ must be to ensure that the true benefit exceeds a desired threshold $\gamma \geq \epsilon$. This section analyzes the validity of BoP hypothesis testing and provides guidelines for its application.

\subsection{Validity of Hypothesis Tests}

\textbf{Hypothesis Tests.} Given an audit dataset $\mathcal{D}$ with $k$ binary group attributes, we want to know whether personalization improves each group by at least $\epsilon>0$. We formalize the null and the alternative hypotheses using a standard framework for the BoP~\citep{monteiro2022epistemic}:
\begin{align*}
H_0: \gamma(h_0, h_p; \mathcal{D}) \leq 0  
\Leftrightarrow &
\ \text{Personalized $h_p$ does not bring any gain for at least one group,} \\
H_1: \gamma(h_0, h_p; \mathcal{D}) 
\geq \epsilon 
\Leftrightarrow &
\ \text{Personalized $h_p$ yields at least $\epsilon$ improvement for all groups.}
\end{align*}
Importantly, $H_0$ and $H_1$ are not complementary, because we want to reject the null in favor of the alternative if the impact is both positive \textit{and} practically meaningful, i.e., $\geq \epsilon$. With these hypotheses, we ask: can we rule out that there is no harm \textit{and} assert a meaningful benefit of at least $\epsilon$?

The improvement $\epsilon$ is in cost function units and represents the improvement for the group that benefits the least from the personalized model. The value $\epsilon$ is domain-specific and should be chosen by the practitioner. For example, in healthcare, if personalization requires time-intensive and sensitive inputs—like mental health assessments—it may only be justified if it improves diagnostic accuracy by at least a few points, making $\epsilon$ a clinically and ethically meaningful threshold. In such cases, $\epsilon$ becomes a threshold for balancing speed and clinical value. 


Once $\epsilon$ is chosen, the practitioner may run the hypothesis test by computing the estimate $\hat \gamma$ on $\mathcal{D}$ and follow the rule: 
$
\hat \gamma \geq \epsilon \Rightarrow$ \textit{Reject $H_0$: Conclude that personalization yields at least $\epsilon$ improvement for all groups}.
We note that different testing strategies could also be used. To capture this generality, we define a decision function $\Psi : (h_0, h_p, \mathcal{D}, \epsilon) \rightarrow \{0,1\}$, where $\Psi = 1$ indicates rejection of $H_0$. In our case, $\Psi(h_0, h_p, \mathcal{D}, \epsilon) = (\hat{\gamma} \geq \epsilon)$. Regardless of its specific form, our goal is to assess the validity of \textit{any} test aiming to evaluate the impact of personalization $\gamma$.

\textbf{Invalidity of the Tests: Probability of Error.} We quantify the (in)validity of a test in terms of its probability of error, defined as:
$
    P_e = \frac{1}{2}\left(
    \text{Pr}(\text{Rejecting $H_0$} | \text{$H_0$ is true}) +\text{Pr}(\text{Failing to reject $H_0$} | \text{$H_1$ is true}) \right).
$ The probability of error is a composite measure that weights Type I and Type II errors, under the assumption that $H_0$ and $H_1$ are equiprobable with probability $1/2$.

We propose to derive a minimax lower bound on the probability of error $P_e$. This involves considering the worst-case data distributions that maximize $P_e$ and the best possible decision function $\Psi$ that minimizes it. Notably, a high lower bound guarantees a high error probability for \textit{any} test with $H_0$ and $H_1$ on the BoP, flagging settings where testing the impact of personalization is unreliable.

\begin{theorem}\label{th:lower_bound}
Consider $k$ binary group attributes, $\mathcal{S} \triangleq\{0,1\}^k$, that specify $d \triangleq|\mathcal{S}|=2^k$ groups, each containing $m_j$ individuals, $j=1, .., d$. Let \( H_0 \) (resp. \( H_1 \)) denote the data distributions under which the generic model \( h_0 \) (resp. the personalized model \( h_p \)) performs better, i.e., \( \gamma \leq 0 \) (resp. \( \gamma \geq \epsilon \)). Denote $H_0'$ (resp. $H_1'$) the data distributions that are mixtures of components in $H_0$ (resp. $H_1$). Then, there exists $P \in H_0'$ (resp. $Q \in H_1'$), for which the individual benefit of personalization $\mathbf{B} = \text{cost}(h_0, \tilde{\mathbf{X}}, \mathbf{Y}) - \text{cost}(h_p, \tilde{\mathbf{X}}, \mathbf{Y}),$ follows a probability density \( p \) (resp. \( p^\epsilon \) for one group), where $\mathbb{E}_{p}[\mathbf{B}] = 0$, and $\mathbb{E}_{p^{\epsilon}}[\mathbf{B}] = \epsilon$, such that:
\begin{equation} \label{eq:lower_bound}
    \min _{\Psi} 
    \max _{\substack{P_0 \in H_0' \\ P_1 \in H_1'}}
        P_e 
        \geq \frac{1}{2}\left( 1 - \frac{1}{2\sqrt{d}}\left[
             \frac{1}{d}
            \sum_{j=1}^d \left( \mathbb{E}_{p^\epsilon}\Bigg[
           \frac{p^{\epsilon}(\mathbf{B})}{p(\mathbf{B})}\Bigg]\right)^{m_j}
            -1
            \right]^{\frac{1}{2}}\right).
\end{equation}
\end{theorem}

The proof for Theorem \ref{th:lower_bound} is in Appendix \ref{subsec:any_distribution}. We note that the lower bound cannot exceed $1/2$, which is expected. In a binary hypothesis test with equal priors, a decision rule corresponding to random guessing would yield $P_e = 1/2$. Consequently, the optimal probability of error (the minimum over all decision rules $\Psi$) must lie in the interval $[0, 1/2]$.

Crucially, this lower bound can be tailored to the practitioner's specific use case, i.e., to the distribution of the individual benefit $\mathbf{B}$ under $H_0$ and $H_1$. For example, if $\mathbf{B}$ is known or observed to follow a Laplace distribution with scale $b$, the practitioner should choose $p = \text{Laplace}(0, b)$ and $p^{\epsilon} = \text{Laplace}(\epsilon, b)$. Figure~\ref{fig:flowtree} shows the expression of the lower bound for the Laplace distribution  (proof provided in Appendix \ref{sec:proof-real-valued-laplace}). If none of these standard distributions provided in the appendix are a good match for the BoP distribution, Theorem \ref{th:lower_bound} remains valid for any distribution, as long as its probability density function is known. In such cases, practitioners may use flexible density estimation tools, such as normalizing flows, to approximate the PDF from data and apply Theorem \ref{th:lower_bound} directly. The next corollary expresses it for distributions in the exponential family, which we use to find a bound for when $\mathbf{B}$ follows a Gaussian distribution (see Appendix \ref{sec:proof-real-valued}).

\begin{corollary}\label{prop:lower_bound_exponential_fam} The lower bound in Th.~\ref{th:lower_bound} for distributions $p, p^\epsilon$ in the exponential family (parameter $\theta$, moment generating function $M$) is: 
     $\min _{\Psi} 
     \max_{\substack{P_0 \in H_0' \\ P_1 \in H_1'}}
         P_e 
         \geq 
        \frac{1}{2}\left( 1 - \frac{1}{2\sqrt{d}} 
        \Bigg[ \frac{1}{d} \sum_{j=1}^d
            \left(\frac{M_p(2\Delta\theta)}{M_p(\Delta \theta)^2}\right)^{m_j}
            -
            1
            \Bigg]^{\frac{1}{2}} \right)$
 with $\Delta \theta = \theta^\epsilon - \theta$.
\end{corollary}

\begin{figure*}
    \centering
    \includegraphics[width=1\linewidth]{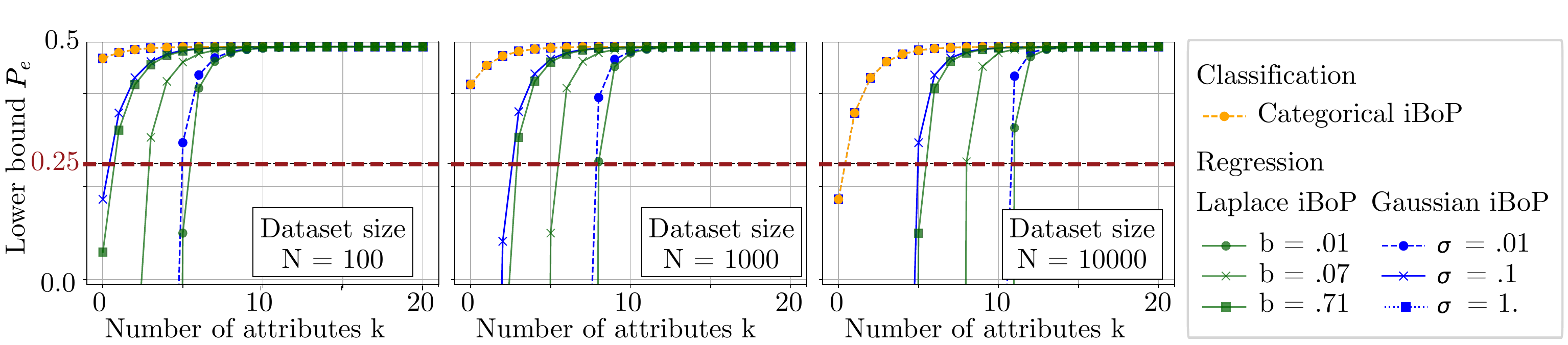}
\caption{\textbf{Testing personalization for prediction and explanation depends on learning task.}
Lower bound on the probability of error $P_e$ with respect to number of personal attributes $k$, for dataset sizes $N = 10^2$, $10^3$, and $10^4$ with $\epsilon=0.01$. In classification (orange), the bound is fixed by the categorical nature of the individual BoP (iBoP) and is identical for prediction and explanation. In regression (green and blue), $P_e$ depends on the spread of individual BoPs—parameterized by variance $\sigma^2$ (Gaussian) or scale $b$ (Laplace). Smaller variance or scale allows more attributes before testing becomes unreliable ($P_e \geq 0.25$). Computed for $m = \lfloor N/d \rfloor$ samples per group with $d=2^k$ groups.}
\vspace{-8pt}
    \label{fig:pe_versus_k}
\end{figure*}

The proof for Corollary \ref{prop:lower_bound_exponential_fam} is in Appendix \ref{sec:exp_family_proof}. These results generalize and tighten an existing bound for categorical distribution only~\citep{monteiro2022epistemic} (see Appendix \ref{sec:proof-binary})  and provide the first general framework to evaluate the (in)validity of hypothesis tests on personalization for prediction and explanation, and across supervised machine learning tasks. 

\begin{remark*}
    These bounds apply to any metric that can be formulated as an evaluation score and used to compare model performance across subgroups (across classification and regression)--i.e., our statistical testing tools are not tied to any particular explainability or performance measure or method. 
\end{remark*}

\textbf{Experimental Design: Group Attributes, Sample Size, and Detectable Gain.} We investigate how probability of error depends on the dataset, and how it determines the practitioner's ability to test the impact of personalization. For example, with a fixed number of individuals $N$, a larger number of personal attributes $k$ increases the number of groups $d = 2^k$, reducing the number of samples per group, which increases the risk of error. Accordingly, 
if the practitioner commits to a fixed $k$ to test a desired gain $\epsilon$ (resp. fixed $k$ and $N$), they need a minimum group size $m$ to keep the error bound below a desired level, as shown next.

\begin{corollary}\label{cor:m}
To ensure $\min \max P_e \leq v$ for a chosen threshold $v$, equal group sizes must satisfy $m \geq m_{\min}$, where:
$m_{min} = \frac{\log\left(4 \cdot 2^k (1 - 2v)^2 + 1\right)}{\log\left(1 + \epsilon^2\right)}$ for a categorical BoP, $m_{min} = \frac{\sigma^2}{\epsilon^2}\log\left(1 + 4 \cdot 2^k (1 - 2v)^2\right)$ for a Gaussian BoP of variance $\sigma^2$, and $m_{\min} = \frac{b}{\epsilon} \log\left(1+2^{2 + k} \left(1 - 2v\right)^2\right)$ for a Laplace BoP of scale $b$.
\end{corollary}

App. \ref{app:H} provides practitioners with another dataset-specific feasibility check: Corollary \ref{cor:k} bounds the maximum number of attributes that can be used before the lower bound error exceeds 25\%. 

\subsection{Practical Considerations when Testing Prediction and Explanation}

We examine how the lower bound in Theorem~\ref{th:lower_bound} depends on the distribution of individual BoPs $\mathbf{B}$, and how this determines the practitioner's ability to test for prediction or explanation gains.

\textbf{Testing Prediction and Explanation in Classification Tasks.}
When the task is classification with 0-1 loss, the individual BoPs follow categorical distributions with values in $\{-1, 0, 1\}$:
\begin{equation*}
\mathbf{B}_P = (h_0(\mathbf{X}) \neq \mathbf{Y}) - (h_p(\mathbf{X}, \mathbf{S}) \neq \mathbf{Y}), \quad
\mathbf{B}_X = (h_0(\mathbf{X}) \neq h_0(\mathbf{X}_J)) - (h_p(\mathbf{X}, \mathbf{S}) \neq h_p(\mathbf{X}_J, \mathbf{S}_J))
\end{equation*}
for prediction and explanation (e.g., sufficiency), respectively --see costs in Table~\ref{tab:costs}. In this setting, the lower bound in Theorem~\ref{th:lower_bound} is identical for prediction and explanation (see Figure~\ref{fig:flowtree}, bottom): either both are testable, or neither is.

Figure~\ref{fig:pe_versus_k} shows the lower bound on the probability of error $P_e$ as a function of $k$, for typical dataset sizes in medical settings $N \in \{10^2, 10^3, 10^4\}$. In classification (orange curves), even a small number of personal attributes $k$ leads to high error lower bounds. For instance, at $N=100$ and $k=1$, the bound already exceeds 40\%, making reliable testing impossible for both prediction and explanation.

\textbf{Testing Prediction and Explanation in Regression Tasks.}
In regression, the situation is more nuanced. For instance, with MSE loss, we have continuously valued individual BoP random variables:
\begin{equation*}
\mathbf{B}_P = |h_0(\mathbf{X}) - \mathbf{Y}|^2 - |h_p(\mathbf{X}, \mathbf{S})- \mathbf{Y}|^2, \quad
\mathbf{B}_X = |h_0(\mathbf{X}) - h_0(\mathbf{X}_J)|^2 - |h_p(\mathbf{X}, \mathbf{S}) - h_0(\mathbf{X}_J, \mathbf{S}_J)|^2,
\end{equation*}
for prediction and explanation, respectively. Suppose these follow Laplace distributions with scales $b_P$ and $b_X$. Then, the lower bounds will differ for prediction and explanation (Figure~\ref{fig:flowtree}, bottom): one could be testable while the other is not, highlighting an asymmetry absent in the classification case.

As illustrated in Figure~\ref{fig:pe_versus_k}, smaller scale values ($b$) allow for a larger number of personal attributes $k_{\max}$ to be tested without theoretical barriers. Unlike classification, there is no proof that regression tasks cannot support reliable testing of personalization for dataset sizes encountered in medical settings $N \in \{10^2, 10^3, 10^4\}$, even with many personal attributes $k$.

\section{Case Studies: Evaluating Personalization on Real Datasets}\label{sec:results}

\begingroup
\setlength{\textfloatsep}{25pt}
\begin{figure}[!t] 
    \vspace{-10pt}
    \centering
    \includegraphics[page=3, width=\textwidth]{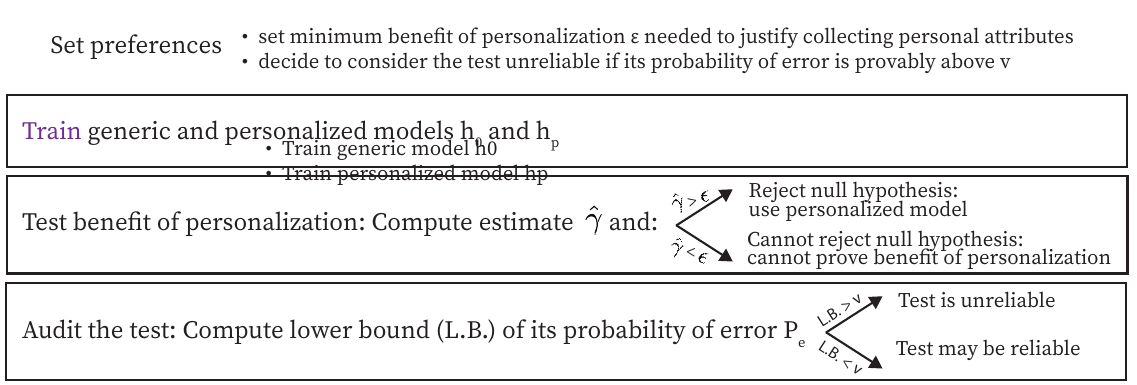}
    \caption{\textbf{Summary of the steps to test BoP for prediction and explanation}. 
}
    \vspace{-20pt}
    \label{fig:flowtree}
\end{figure}

We illustrate how to use our results to investigate the impact of personalization on prediction and explanation, to reveal the many cases where reliable testing is in fact impossible. This section focuses on one real-world healthcare scenario, while other scenarios are provided in Appendix \ref{sec:Additional_Experiments}. \textbf{Remark.} Across these hypothesis tests we always evaluate if there is a benefit of personalization, i.e. $\gamma > \epsilon > 0$, but interested practitioners may want to evaluate whether an existing machine learning model could harm one group. In that case the hypothesis test should be flipped, i.e. $\gamma < \epsilon < 0$.

\textbf{Healthcare Scenario.} Consider MIMIC-III (Medical Information Mart for Intensive Care) \citep{mimic-3}, a dataset of patients admitted to critical care units at a large tertiary hospital --containing vital signs, medications, lab results, diagnoses, imaging reports, and outcomes such as length of stay. Suppose that a practitioner has developed a deep learning model to predict a patient's length of stay (regression) or whether the length of stay exceeds 3 days (classification) -- see details in Appendix \ref{subsec:mimic_plots}. They are wondering whether their model should be personalized by including (or not) two personal attributes: $\mathrm{Age} \times \mathrm{Race}  \in \{\mathrm{\mathrm{18-45}}, \mathrm{45+}\} \times \{\mathrm{White (W)}, \mathrm{NonWhite (NW)}\}$. However, they are concerned this could disadvantage some groups, not only by reducing prediction accuracy but also by limiting the ability to uncover factors that explain critical care duration. We provide a step-by-step procedure to use our framework to evaluate the benefit of personalization (summarized in Figure~\ref{fig:flowtree}).

\textbf{\mbox{\circled{1}} Select $\epsilon$ and $v$.} The practitioner first chooses the minimum improvement they expect from personalization—$\epsilon_P$ for prediction and $\epsilon_X$ for explanation (e.g., $\epsilon_P=\epsilon_X=0.002$). They then set a tolerance threshold $v$ for the probability of error beyond which they will not trust the hypothesis test (e.g., $v = 25\%$). 

\begin{wrapfigure}{r}{0.65\textwidth}
    \centering  
    \vspace{-10pt}
    \includegraphics[width=01\linewidth]{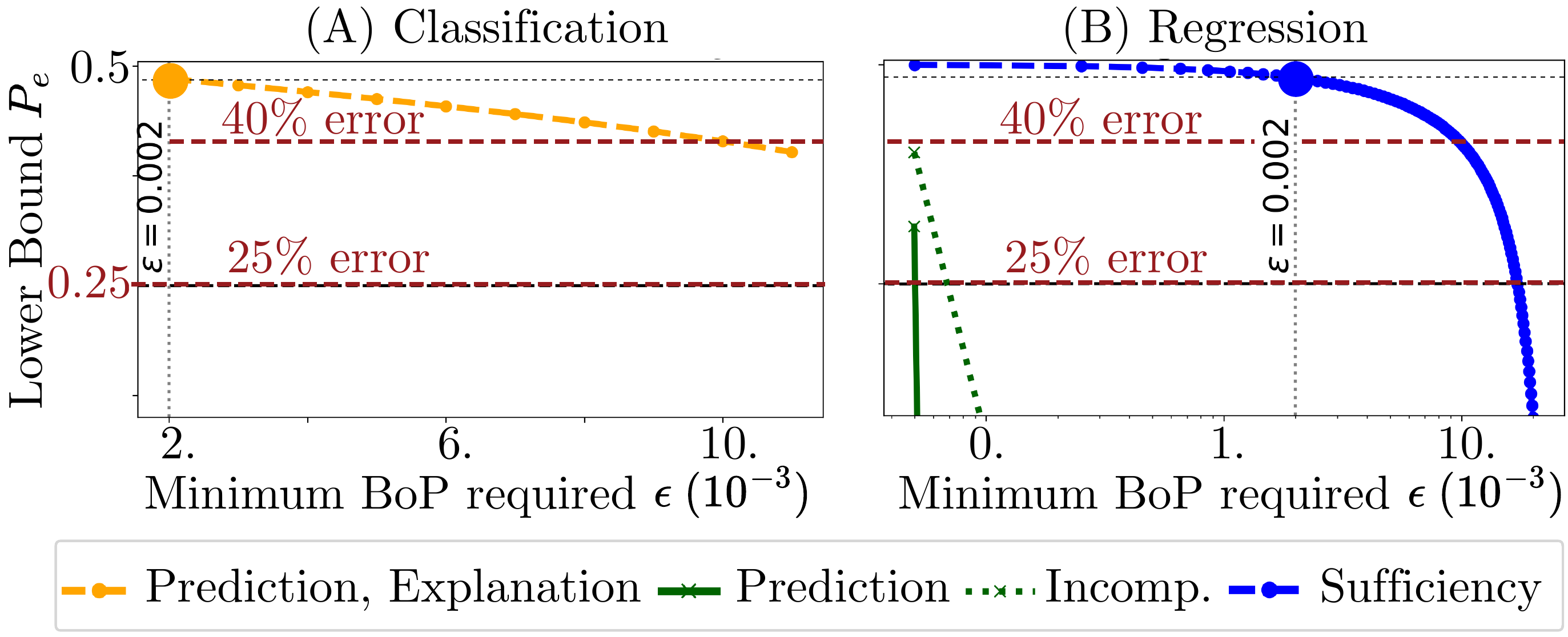}
    \caption{\textbf{Lower bound of $P_e$ vs. $\epsilon$ on MIMIC-III}: classification (A) and regression with Laplace (green) and Gaussian (blue) models for the individual BoPs (B). At the minimum BoP set in this case study ($\epsilon = 0.002$), testing personalization for prediction and explanation is impossible for classification (same for sufficiency for regression) as $P_e \geq 40\%$ regardless of the hypothesis test. 
    }
    \vspace{-9pt}
    \label{fig:validation}
\end{wrapfigure}

\textbf{\mbox{\circled{2}} Report empirical benefits of personalization.} The practitioner trains $h_0$ and $h_p$ (with additional attributes age and race) and reports empirical personalization benefits in Table~\ref{tab:full_results} (0–1 loss for classification, MSE for regression). They utilize the Integrated Gradients explainer method and evaluate it using the sufficiency and incomprehensiveness metrics.  Across tasks, some groups seem to show benefits for prediction but harm for explanation, and vice versa.  This should not be surprising given the results of Section~\ref{sec:explainability_analysis}, which show that prediction and explanation gains can diverge.

\textbf{\mbox{\circled{3}} Perform hypothesis test.} The practitioner assesses whether $\hat{\gamma}$ exceeds $\epsilon_P$ or $\epsilon_X$. It does for all metrics with a positive $\hat{\gamma}$, hence they can reject the null hypothesis for these cases.

\textbf{\mbox{\circled{4}} Assess reliability of the results.} Next, the practitioner assesses whether the empirical results are statistically meaningful using the framework from Section~\ref{sec:validation}.  \textit{For the classification model}, the lower bound on the probability of error exceeds 40\% (Figure~\ref{fig:validation}, $\epsilon = 0.002$), indicating that it is not even possible to test whether personalization helps or harms performance. As a result, the practitioner would likely retain the generic classifier. \textit{For the regression model}, they examine the distributions of individual BoPs, $\mathbf{B}_P$ and $\mathbf{B}_X$ (Figure~\ref{fig:flowtree}, bottom, and Appendix~\ref{subsec:mimic_plots}). Sufficiency is best fit by Gaussians with varying variances; prediction and incomprehensiveness align with Laplace distributions of different scales. The corresponding lower bounds on error exceed 40\% for sufficiency—making it untestable—but fall below 5\% for prediction and incomprehensiveness (Figure~\ref{fig:validation}, $\epsilon = 0.002$). Now, we provide insights that were gained from applying our framework to this scenario, and others in Appendix \ref{sec:Additional_Experiments}.

\begin{table*}[t]
\vspace{-10pt}
\centering
\small
\caption{Benefits of personalization ($\hat{C}(h_0) - \hat{C}(h_p)$) on the MIMIC-III test set for predicting length of stay (LOS): regression or classification (LOS > 3 days).
Incomprehensiveness is abbreviated as incomp. and population as pop.  
Values that are worsened by $h_p$ are colored red.}
\resizebox{\textwidth}{!}{
\begin{tabular}{l|cccc|cccc}
\toprule
& \multicolumn{4}{c|}{\textbf{Classification}} & \multicolumn{4}{c}{\textbf{Regression}} \\
\textbf{Group} 
& $n$ & \textbf{Prediction} & \textbf{Incomp.} & \textbf{Sufficiency}
& $n$ & \textbf{Prediction} & \textbf{Incomp.} & \textbf{Sufficiency} \\
\midrule
White, 45+      & 8443  & 0.0063  & \textcolor{darkred}{-0.0226}  & 0.0053
                & 8379  & 0.0021  & \textcolor{darkred}{-0.0906}  & 0.1914 \\
White, 18–45    & 1146  & 0.0044  & 0.0489                        & 0.0244
                & 1197  & 0.0023  & 0.1219                        & 0.2223 \\
NonWhite, 45+   & 3052  & \textcolor{darkred}{-0.0026} & \textcolor{darkred}{-0.0023} & 0.0029
                & 3044  & 0.0108  & \textcolor{darkred}{-0.0501}  & 0.3494 \\
NonWhite, 18–45 & 696   & \textcolor{darkred}{-0.0216} & 0.0560                        & 0.0072
                & 717   & 0.0212  & 0.0441                        & 0.3293 \\
All Pop.        & 13337 & 0.0026  & \textcolor{darkred}{-0.0077}  & 0.0065
                & 13337 & 0.0051  & \textcolor{darkred}{-0.0550}  & 0.2376 \\
\midrule
\textbf{Minimal BoP}
                & 13337 & \textcolor{darkred}{-0.0216}  & \textcolor{darkred}{-0.0226}  & 0.0029
                & 13337 & 0.0021  & \textcolor{darkred}{-0.0906}  & 0.1914 \\
\bottomrule
\end{tabular}
}
\label{tab:full_results}
\vspace{-10pt}
\end{table*}

\textbf{Insight: A high empirical benefit of personalization $\hat{\gamma}$ can be misleading.} As shown in Table \ref{tab:full_results} and lower bounded in \circled{4}, 
the regression experiment reports the largest apparent benefit for sufficiency ($\hat{\gamma}=0.1914$), yet the data did not permit a valid test, making the result inconclusive. Prediction showed a much smaller benefit ($\hat{\gamma}=0.0021$), but our analysis found no barriers to testing, and the null was rejected. This shows that large $\hat{\gamma}$ does not guarantee a valid conclusion; empirical values must be paired with our framework to assess validity.

\textbf{Insight: The choice of improvement threshold $\epsilon$ is key.}
Increasing $\epsilon$ reduces the lower bound on the probability of error $P_e$, making hypothesis testing potentially less unreliable (Figure~\ref{fig:validation}), but also raises the bar for rejecting the null, requiring a larger $\hat{\gamma}$. Thus, $\epsilon$ trades off test validity against ability to detect effects. In real-world applications, $\epsilon$ reflects the minimum performance gain a practitioner needs to justify collecting costly personal data (e.g., genetic markers). For instance, $\epsilon=0.002$ in length-of-stay prediction accuracy may be worthwhile. Converting this back to the original units, the improvement is approximately 0.06 days per patient. Although small at the individual level, across 100 patients per day this amounts to a cumulative gain of about 6 patient-days of hospital stay estimated more accurately each day. This illustrates that $\epsilon$ should be set based on practical value, not statistical convenience, even though higher $\epsilon$ values tend to make hypothesis tests more reliable. 

\textbf{Insight: Results do not depend on the explanation method.} Table~\ref{tab:full_results} reports results with Integrated Gradients~\citep{sundararajan2017axiomaticattributiondeepnetworks}. Since our framework applies to any explanation method, we test whether this choice affects the evaluation of the impact of personalization. Appendix~\ref{sec:Additional_Experiments} analyzes Shapley Value Sampling~\citep{shapely} and DeepLIFT~\citep{deeplift}, finding substantial agreement across the methods---though effect sizes differ.

\textbf{Insight: Personalization is hard to evaluate across medical datasets.} To show the practicality of the framework, we also include experiments on the UCI Heart Dataset \citep{heart_disease_45} and the MIMIC-III Kidney injury cohort \citep{suriyakumar2023personalizationharmsreconsideringuse}, again utilizing a range of explanation methods (see App. \ref{sec:Additional_Experiments}). Using the same $\epsilon$ as above, no test is valid for the S.V.S explainer on the Heart dataset (see caption of Table \ref{tab:edited_results_regression}). This shows the difficulty of reliably evaluating personalization. Generally, this analysis points to a limitation of personalized medicine and healthcare: while personalization may yield improvements, demonstrating them reliably can be infeasible---restricting applicability.

\vspace{-7pt}
\section*{Concluding Remarks}\label{sec:conclusion}

We present a unified framework for evaluating the benefits of personalization with respect to both prediction accuracy and explanation quality, facilitating nuanced decisions regarding the use of personal attributes. Our analysis shows that in many practical settings, particularly classification tasks, the statistical conditions required to validate personalization are often unmet. As a result, even when personalization shows empirical gains, meaningful validation may not be feasible. 

\textbf{Limitations \& Future Work.} While we relax several assumptions relative to prior work, our theoretical results still rely on assumptions not always met in practice; further reducing them remains an important direction. Additionally, while we focused on explanation quality due to its importance in clinical adoption, our results in Section~\ref{sec:validation} extend to other goals. Future work can build on this framework to evaluate additional desiderata such as fairness, robustness, and uncertainty calibration. 


\newpage


\subsubsection*{Acknowledgments}

Nina Miolane acknowledges support from NSF Grant 2313150 and the NSF CAREER Award 240158. Guillermo Bernárdez acknowledges support from the Chan Zuckerberg Initiative, the UC Noyce Foundation, and Arlequin AI. Louisa Cornelis acknowledges support from the Chan Zuckerberg Initiative, the UC Noyce Foundation, and the NSF Graduate Research Fellowship Program (NSF GRFP) Grant 2139319.

\bibliography{references}
\bibliographystyle{iclr2026/iclr2026_conference}

\newpage

\appendix

\newpage

\onecolumn

\section{Extended Related Works}\label{sec:related_works}

We provide additional extended works about explainability methods and fairness of recourse below.

\textbf{Explainability} Typical approaches to model explanation involve measuring how much each input feature contributes to the model's output, highlighting important inputs to promote user trust. This process often involves using gradients or hidden feature maps to estimate the importance of inputs \citep{simonyan2014deepinsideconvolutionalnetworks, smilkov2017smoothgradremovingnoiseadding,sundararajan2017axiomaticattributiondeepnetworks, yuan2022explainabilitygraphneuralnetworks}. 
For instance, gradient-based methods use backpropagation to compute the gradient of the output with respect to inputs, with higher gradients indicating greater importance \citep{sundararajan2017axiomaticattributiondeepnetworks, yuan2022explainabilitygraphneuralnetworks}. We focus on feature-attribution explanations as they remain the most widely used form of post hoc interpretability in practice \citep{Nauta_2023}. To reflect a range of underlying assumptions, we employ three distinct and widely adopted explainers: Integrated Gradients (gradient-based), DeepLIFT (backpropagation-based), and Shapley value sampling (perturbation-based).

The quality of these explanations is often evaluated using the principle of $\emph{faithfulness}$ \citep{lyu2024faithfulmodelexplanationnlp, dasgupta2022frameworkevaluatingfaithfulnesslocal, jacovi-goldberg-2020-towards}, which measures how accurately an explanation represents the reasoning of the underlying model. Two key aspects of faithfulness are $\emph{sufficiency}$ and $\emph{comprehensiveness}$ \citep{deyoung-etal-2020-eraser, yin2022sensitivitystabilitymodelinterpretations}; the former assesses whether the inputs deemed important are adequate for the model's prediction, and the latter examines if these features capture the essence of the model's decision-making process. We selected these metrics as they are widely-adopted, model-agnostic measures that directly assess explanation faithfulness through standard perturbation-based evaluation \citep{serrano-smith-2019-attention}, aligning with established principles of correctness and completeness in the explainability literature \citep{Nauta_2023}.

\textbf{Explanations in Practice: Medical Domain}
Explainable AI methods are widely deployed in the medical domain, and clinicians routinely interact with explanations when interpreting AI outputs. A recent review identified 454 medical AI articles published between 2018–2022, with 93 analyzed in depth, showing extensive use of explainable AI techniques across diagnostic and clinical decision-support applications \citep{ALI2023107555, Salih2024ARO}.
A growing body of work shows that explainable AI is already shaping consequential medical decisions across multiple clinical domains. In obstetrics, explainable decision-support systems for gestational diabetes significantly influence clinicians’ choices and advice-taking behavior, demonstrating that explanations directly affect medical judgment \citep{app122010323}. In dermatology, domain-specific explanations increase diagnostic accuracy, confidence, and trust, highlighting clinicians’ willingness to adopt explainable AI systems in practice \citep{Chanda_2024}. In radiology, physicians achieve their highest diagnostic accuracy when receiving AI advice paired with visual explanatory annotations, with non-experts benefiting most from explainable guidance \citep{2023NatSR}. In cardiology, explainable AI methods are used to select and justify heart-failure survival prediction models, with explainability explicitly enabling clinicians to understand model reasoning and make more informed treatment decisions \citep{MorenoSanchez2023HeartFailureXAI}. Together, these studies demonstrate that explanations influence diagnosis, trust, and decision pathways in real clinical environments—underscoring the importance of evaluating whether explanations faithfully reflect model behavior.

 \textbf{Fairness of Recourse} A related line of work examines fairness of algorithmic recourse, which studies whether different demographic groups face unequal effort to obtain favorable outcomes from a predictive model. \cite{recourse} show that recourse burden can vary sharply across groups, even when recommended actions look formally identical, either because the recourse itself differs or because the real-world effort required to carry it out is unequal. This line of work demonstrates that fair prediction does not guarantee fair recourse. Our framework offers a complementary perspective: instead of analyzing post-hoc interventions, we study when personalization produces unequal benefits or harms across groups in prediction and explanation. Like the recourse literature, our results highlight that different desiderata, here, prediction benefit and explanation benefit, can diverge and therefore must be evaluated jointly.

\section{BoP}\label{sec:emprical_bop}




In the following table, we show how these abstract definitions can be used to measure BoP for both predictions and explanations, each across both classification and regression tasks. The empirical population and group BoP are defined as: $\operatorname{\hat{BoP}}(h_0, h_p) = \hat{C}(h_0) - \hat{C}(h_p)$ and $\operatorname{\hat{BoP}}(h_0, h_p, s) = \hat{C}(h_0, s) - \hat{C}(h_p, s)$, respectively.

\begin{table}[h]
\caption{Formal definitions of the benefit of personalization for prediction and explanation metrics, evaluated for subgroup $s$. The generic model $h_0$ takes input $\mathbf{X}$, while the personalized model $h_p$ takes input $(\mathbf{X},\mathbf{S})$; this corresponds to the quantity previously denoted as $\tilde{\mathbf{X}}$ when referring to an unspecified model $h$ in Table \ref{tab:costs}. For the explanation metrics, $\mathbf{X}_{\setminus J}$ denotes the input obtained when removing the most important features, and $\mathbf{X}_J$ denotes the complementary set of features that are kept. Likewise, $\mathbf{s}_{\setminus J}$ and $\mathbf{s}_J$ denote the removed and retained subsets of the personalized attribute $s$. Higher BoP values in each row indicate a greater benefit of personalization for subgroup $s$.}

\centering
\resizebox{\textwidth}{!}{
\begin{tabular}{|c|c|}
\hline
\textbf{Evaluation Type} & \textbf{Benefit of personalization for group $s$} \\
\hline
Predict (Classification, 0-1 loss) & 
$\Pr(h_0(\mathbf{X}) \neq \mathbf{Y} \mid \mathbf{S} = s) - \Pr(h_p(\mathbf{X}, s) \neq \mathbf{Y} \mid \mathbf{S} = s)$ \\
\hline
Predict (Regression, MSE) & 
$\mathbb{E}\left[\|h_0(\mathbf{X}) - \mathbf{Y}\|^2 \mid \mathbf{S} = s \right] - \mathbb{E}\left[\|h_p(\mathbf{X}, s) - \mathbf{Y}\|^2 \mid \mathbf{S} = s \right]$ \\
\hline
Explain (Sufficiency, classification, 0-1 loss) & 
$\Pr(h_0(\mathbf{X}) \neq h_0(\mathbf{X}_J) \mid \mathbf{S} = s) - \Pr(h_p(\mathbf{X}, s) \neq h_p(\mathbf{X}_J, s_J) \mid \mathbf{S} = s)$ \\
\hline
Explain (Sufficiency, regression, MSE) & 
$\mathbb{E}\left[\|h_0(\mathbf{X}) - h_0(\mathbf{X}_J)\|^2 \mid \mathbf{S} = s \right] - \mathbb{E}\left[\|h_p(\mathbf{X}, s) - h_p(\mathbf{X}_J, s_J)\|^2 \mid \mathbf{S} = s \right]$ \\
\hline
Explain (Incomprehensiveness, classification, 0-1 loss) & 
$\Pr\left(h_p(\mathbf{X}, s) \neq h_p(\mathbf{X}_{\backslash J}, s_{\backslash J}) \mid \mathbf{S}=s \right) - \Pr\left(h_0(\mathbf{X}) \neq h_0(\mathbf{X}_{\backslash J}) \mid \mathbf{S}=s \right)$ \\
\hline
Explain (Incomprehensiveness, regression, MSE) & 
$\mathbb{E}\left[
\| h_p(\mathbf{X}, s) - h_p(\mathbf{X}_{\backslash J},s_{\backslash J}) \|^2 \mid \mathbf{S} = s \right] - \mathbb{E}\left[
\| h_0(\mathbf{X}) - h_0(\mathbf{X}_{\backslash J}) \|^2 \mid \mathbf{S} = s \right]$ \\
\hline
\end{tabular}}
\label{tab:personalization_benefits}
\end{table}



\section{Comparison BoP for Prediction and BoP for Explainability Proofs}\label{sec:proof-counterexamples}

In this section, we present the full proofs comparing the impact of personalization on prediction accuracy versus explanation quality, highlighting situations under which their effects diverge or align.

\subsection{Proof for Theorem \ref{thm:Bop_to_BopXpos}}

\begin{figure}
    \centering
    \includegraphics[width=\linewidth]{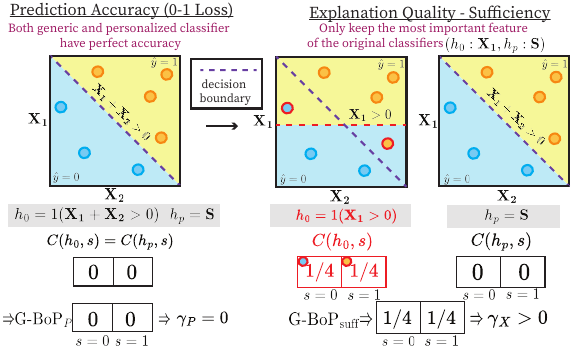}
    \vspace{-2em}
    \caption{Comparing a generic model ($h_0$) and a personalized model ($h_p$) on prediction and explanation (sufficiency). Top-left: The generic model $h_0$ uses both $\mathbf{X_1}$ and $\mathbf{X_2}$ for predictions, with its decision boundary defined by $\mathbf{X_1} + \mathbf{X_2} > 0$. The personalized model, $h_p$, has access to the group attribute $\mathbf{S}$ (defined as $\mathbf{S}=\mathds{1} (\mathbf{X_1} + \mathbf{X_2} >0)$), and its prediction rule is to output $\mathbf{S}$. Bottom-left: Since both classifiers achieve perfect accuracy (on both groups $s=0$ and $s=1$), the Group Benefit of Personalization ($G-BoP_P$) is 0 on both groups, and thus: $\gamma_P = 0$. Top-right: In the sufficiency evaluation, where only the most important feature is kept, $h_p$ achieves perfect prediction since it relies solely on $\mathbf{S}$, reaching a sufficiency cost of 0 for each group. In contrast, $h_0$, using only $\mathbf{X_1}$, now makes prediction errors and has a worst sufficiency cost of $\frac{1}{4}$ for each group. Bottom-right: Since the personalized model has better sufficiency than the generic model, the G-BoP is positive and equal to $\frac{1}{4}$ for both groups, and hence  $\gamma_x = \frac{1}{4} > 0$. Hence, personalization can enhance explainability even though prediction accuracy remains the same.}
    \label{fig:proof_fig_suff2}
    \vspace{-0.5em}
\end{figure}

We provide the proof for theorem \ref{thm:Bop_to_BopXpos} for two metrics of explanation quality: sufficiency and incomprehensiveness, from Table~\ref{tab:costs}. The proof for sufficiency is illustrated in Figure~\ref{fig:proof_fig_suff2}. The proof for incomprehensivess is illustrated in Figure~\ref{fig:proof_fig_incomp}

\begin{proof}
    Let $\mathbf{X} = (\mathbf{X_1}, \mathbf{X_2})$ where $\mathbf{X_1}$ and  $\mathbf{X_2}$ are independent and each follows $\text{Unif}(-\frac{1}{2},\frac{1}{2})$. Let us define one binary personal attribute $s \in \{0, 1 \}$ as $\mathbf{S} = \mathds{1}(\mathbf{X_1} + \mathbf{X_2} > 0) $ and assume that we seek to predict $\mathbf{Y} = \mathbf{S}$. Then, $h_0(x) = \mathds{1}(\mathbf{X_1} + \mathbf{X_2} > 0) $ and $h_p(x) = \mathds{1}(\mathbf{S} > 0) $ are the generic and personalized classifiers of interest.
    
    \paragraph{Prediction.} Both classifiers achieve perfect accuracy. Therefore, $\operatorname{BoP}_P(h_0, h_p) = 0$. 

    In particular, they also achieve perfect accuracy when we restrict the input $\mathbf{X}$ to any subgroup, subgroup $s=0$ or subgroup $s=1$, such that:
    \begin{align*}
        \operatorname{G-BoP}_P(h_0, h_p, s=0) = \operatorname{G-BoP}_P(h_0, h_p, s=1) = \operatorname{BoP}_P(h_0, h_p) = 0,\\
        \Rightarrow \gamma_P(h_0, h_p) = \min_{s \in \{0, 1\}} \operatorname{G-BoP}_P(h_0, h_p, s) = 0.
    \end{align*}

   \paragraph{Explanation (sufficiency).} We now test sufficiency by evaluating the accuracy of classifiers using only the important feature.

For model $h_0$, its important feature set $J_0$ is either $\{\mathbf{X_1} \}$ or $\{ \mathbf{X_2} \}$. Without loss of generality, let $J_0 = \{ \mathbf{X_1} \}$. For the personalized model, $J_p = \{ \mathbf{S} \}$. 

    For sufficiency, we compute:
    \begin{align}
        \text{Pr}(h_0(\mathbf{X}) \neq h_0(\mathbf{X_{J_0}}) ) &= \text{Pr}(\mathbf{X_1} + \mathbf{X_2} \leq 0 | \mathbf{X_1} > 0) \text{Pr}(\mathbf{X_1} > 0) \nonumber \\
        & \;\; + \text{Pr}(\mathbf{X_1} + \mathbf{X_2} > 0 | \mathbf{X_1} \leq 0) \text{Pr}(\mathbf{X_1}\leq 0) \label{eq:h0_suff} \\
        &= \frac{1}{4}, \nonumber
    \end{align}
    where the computation per group also gives:
    \begin{align*}
         \text{Pr}(h_0(\mathbf{X}) \neq h_0(\mathbf{X_{J_0}}) | s= 0 ) =\text{Pr}(h_0(\mathbf{X}) \neq h_0(\mathbf{X_{J_0}}) | s= 1 ) =\frac{1}{4}.
    \end{align*}
     On the other hand, the sufficiency for $h_p$ is 
    \begin{align*}
        \text{Pr}(h_p(\mathbf{X},\mathbf{S}) \neq h_p(\mathbf{X_{J_p}}, \mathbf{S_{J_p}})) = 0, 
    \end{align*}
    as $J_p = \{ \mathbf{S} \}$ is sufficient to make a prediction for $h_p$. 
     The computation per group also gives $0$, since the model makes perfect predictions independently of the value taken by $\mathbf{S}$.

    Thus, $\operatorname{BoP}_X$ in terms of sufficiency is also $\frac{1}{4}$.         Computing this quantity per group gives:
    \begin{align}
    \operatorname{G\text{-}BoP}_X(h_0, h_p, s=0) &= \operatorname{G\text{-}BoP}_X(h_0, h_p, s=1) = \frac{1}{4}, \nonumber \\
    \Rightarrow \gamma_{\text{suff}}(h_0, h_p) &= \min_{s \in \{0, 1\}} \operatorname{G\text{-}BoP}_X(h_0, h_p, s) = \frac{1}{4}.
    \end{align}

\begin{figure}
    \centering
    \includegraphics[width=\linewidth]{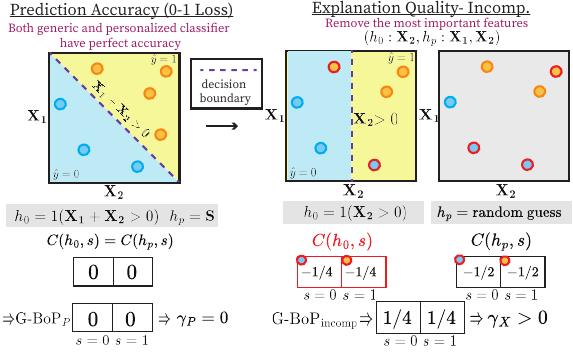}
    \vspace{-2em}
    \caption{Comparing a generic model ($h_0$) and a personalized model ($h_p$) on prediction and explanation (incomprehensiveness). Both achieve perfect accuracy, but $h_p$ relies solely on $\mathbf{S} = 1(\mathbf{X_1} + \mathbf{X_2} >0)$, yielding higher incomprehensiveness. Hence, personalization can improve explainability even when accuracy is unchanged: here, $\gamma_P = 0$ and $\gamma_X > 0$.}
    \label{fig:proof_fig_incomp}
    \vspace{-0.5em}
\end{figure}

    \paragraph{Explanation (incomprehensiveness)}

    Incomprehensiveness is the opposite of comprehensiveness. For clarity, we provide the computations for comprehensiveness first.
    
    Comprehensiveness of $h_0$ is 
    \begin{align}
        \text{Pr}(h_0(\mathbf{X}) \neq h_0(\mathbf{X_{\setminus J_0}}) ) &= \text{Pr}(\mathbf{X_1} + \mathbf{X_2} \leq 0 | \mathbf{X_2} > 0) \text{Pr}(\mathbf{X_2} > 0) \nonumber \\
        &+ \;\; \text{Pr}(\mathbf{X_1} + \mathbf{X_2} > 0 | \mathbf{X_2} \leq 0) \text{Pr}(\mathbf{X_2} \leq 0) \label{eq:h0_comp} \\ 
        &= \text{Pr}(\mathbf{X_1} + \mathbf{X_2} \leq 0 | \mathbf{X_2} > 0) \cdot \frac{1}{2} + \text{Pr}(\mathbf{X_1} + \mathbf{X_2} > 0 | \mathbf{X_2} \leq 0) \cdot \frac{1}{2} \nonumber  \\ 
        &= \text{Pr}(\mathbf{X_1} + \mathbf{X_2} \leq 0 | \mathbf{X_2} > 0) \quad \text{(due to symmetry of the distribution)} \nonumber \\
        &= \int_{x_2 >0, x_1+x_2 \leq 0} \text{Pr}(x_1, x_2) dx_1 dx_2 \nonumber / \text{Pr} ( \mathbf{X_2} >0)\\
        &= 2 \cdot  \int_{x_2=0}^{\frac{1}{2}} \text{Pr}(x_2) \int_{x_1 \leq -x_2} \text{Pr}(x_1) dx_1 dx_2  \nonumber \\ 
        &= 2 \cdot  \int_{x_2=0}^{\frac{1}{2}} \text{Pr}(x_2) (-x_2+\frac{1}{2}) dx_2  \nonumber \\ 
         &= 2 \cdot \left[ -\frac{1}{2} x_2^2 + \frac{1}{2} x_2 \right]_{0}^{\frac{1}{2}} \nonumber \\ 
        &= \frac{1}{4}. \nonumber 
    \end{align}
    Hence, incomprehensiveness of $h_0$ is $-\frac{1}{4}$.

    Computing this quantity per group gives, by symmetry of the problem:
\begin{align}
\text{Pr}(h_0(\mathbf{X}) \neq h_0(\mathbf{X_{\setminus J_0}}) \mid s = 0) 
&= \text{Pr}(h_0(\mathbf{X}) \neq h_0(\mathbf{X_{\setminus J_0}}) \mid s = 1) \notag \\
&= \frac{1}{2} \, \text{Pr}(h_0(\mathbf{X}) \neq h_0(\mathbf{X_{\setminus J_0}})) \notag \\
&= \frac{1}{4}.
\end{align}
       Hence, incomprehensiveness per group is also $-\frac{1}{4}$.

    For $h_p$, comprehensiveness is: 
    \begin{align*}
        \text{Pr}(h_p(\mathbf{X},\mathbf{S}) \neq h_p(\mathbf{X_{\setminus J_p}}, \mathbf{S_{\setminus J_p}}) ) = \frac{1}{2},
    \end{align*}
    as without $\mathbf{S}$, $h_p$ can only make a random guess. Hence, incomprehensiveness for each group is $-\frac{1}{2}$.

   Computing this quantity per group also gives $\frac{1}{2}$ since $h_p$ makes a random guess independently of the subgroup considered:
\begin{align}
\text{Pr}(h_p(\mathbf{X}) \neq h_p(\mathbf{X_{\setminus J_p}}) \mid s = 0) 
&= \text{Pr}(h_p(\mathbf{X}) \neq h_p(\mathbf{X_{\setminus J_p}}) \mid s = 1) \notag \\
&= \text{Pr}(h_p(\mathbf{X}) \neq h_p(\mathbf{X_{\setminus J_p}})) \notag \\
&= \frac{1}{2}.
\end{align}
       while the incomprehensiveness per group is therefore $-\frac{1}{2}$.

    Hence, $\operatorname{BoP}_X$ in terms of incomprehensiveness is $\frac{1}{4}$. 
    
       Computing this quantity per group gives:
\begin{align}
\operatorname{G\text{-}BoP}_X(h_0, h_p, s=0) 
&= \operatorname{G\text{-}BoP}_X(h_0, h_p, s=1) = \frac{1}{4}, \notag \\
\Rightarrow \gamma_{\text{incomp}}(h_0, h_p) 
&= \min_{s \in \{0, 1\}} \operatorname{G\text{-}BoP}_X(h_0, h_p, s) = \frac{1}{4}.
\end{align}
    
\end{proof}
\subsection{Proof for Theorem \ref{thm:Bop_to_BopXneg}:}

We provide the proof for Theorem~\ref{thm:Bop_to_BopXneg}, for explainability incomprehensiveness.

\begin{proof}
    Let $\mathbf{X} = (\mathbf{X})$ where $\mathbf{X}$ follows $\text{Unif}(-\frac{1}{2},\frac{1}{2})$. Define one binary personal attribute $s \in \{0,1\}$ as $\mathbf{S} = \mathbf{X}$  and assume that the true label that we seek to predict is $\mathbf{Y} = \mathbf{X} > 0$. We define the classifiers of interest as:
\[
h_0(\mathbf{X}) = \mathds{1}(\mathbf{X}>0)   ,  h_p(\mathbf{X}, \mathbf{S}) = \frac{1}{2} (\mathbf{X} + \mathbf{S}) .
\]

\paragraph{Prediction.} Both $h_0$ and $h_p$ are perfectly aligned with the ground truth and yield $\hat{y} = \mathbf{Y}$. Therefore, they achieve perfect accuracy. In particular, they also achieve perfect accuracy when we restrict the input $\mathbf{X}$ to any subgroup, subgroup $s=0$ or subgroup $s=1$, such that:
    \begin{align*}
        \operatorname{G-BoP}_P(h_0, h_p, s=0) = \operatorname{G-BoP}_P(h_0, h_p, s=1) = \operatorname{BoP}_P(h_0, h_p) = 0,\\
        \Rightarrow \gamma_P(h_0, h_p) = \min_{s \in \{0, 1\}} \operatorname{G-BoP}_P(h_0, h_p, s) = 0.
    \end{align*}
Therefore, $\operatorname{BoP}_P(h_0, h_p) = 0$. 

\paragraph{Explanation (sufficiency).} For $h_0$, the most important feature is $\mathbf{X}$, while for $h_p$, the most important feature is $\mathbf{S}$.

We now test sufficiency by evaluating the accuracy of classifiers using only the important feature.

\begin{itemize}
    \item For $h_0$, keeping $\mathbf{X}$ results in the original predictor. Therefore, prediction does not change at all and the feature is maximally sufficient for both groups ($\operatorname{G-BoP}_{\text{suff}} = 0$ for $s=0$ and $s=1$, hence $\gamma_{X} = 0$.
    
    \item For $h_p$, keeping $\mathbf{S}$ does not change the prediction output because $\frac{1}{2}\mathbf{X}>0 = \mathbf{X} >0$. Therefore, prediction does not change at all and the feature is maximally sufficient for both groups ($\operatorname{G-BoP}_{\text{suff}} = 0$ for $s=0$ and $s=1$, hence $\gamma_{X} = 0$
\end{itemize}
Therefore, $\operatorname{BoP_X} = 0$ for sufficiency.

\paragraph{Explanation (incomprehensiveness)}

In this setting, we evaluate incomprehensiveness by measuring the degradation in model predictions when the most important feature is removed.

\begin{itemize}
\item \textbf{Removing $\mathbf{X}$ from $h_0$:} For $h_0$, incomprehensiveness is: 
    \begin{align*}
        \text{Pr}(h_0(\mathbf{X}) \neq h_p()) ) = \frac{1}{2},
    \end{align*}
    as without $\mathbf{X}$, $h_0$ can only make a random guess. Hence, incomprehensiveness for each group is $\frac{1}{2}$ and $\gamma_X = \frac{1}{2}$.

\item \textbf{Removing $\mathbf{S}$ from $h_p$:} For $h_p$, we compute:
\begin{align}
\Pr\!\big(h_p(\mathbf{X},\mathbf{S}) \neq h_p(\mathbf{X})\big) 
&= \Pr(\mathbf{X}+\mathbf{S} \leq 0 \mid \mathbf{X} > 0)\Pr(\mathbf{X} > 0) \notag \\
&\quad + \Pr(\mathbf{X}+\mathbf{S} > 0 \mid \mathbf{X} \leq 0)\Pr(\mathbf{X} \leq 0) \notag \\
&= \tfrac{1}{4}. \label{eq:suff_hp}
\end{align}
 where the computation per group also gives:
    \begin{align*}
         \text{Pr}(h_p(\mathbf{X}, \mathbf{S}) \neq h_p (\mathbf{X}) | s= 0 ) =\text{Pr}(h_p(\mathbf{X}, \mathbf{S}) \neq h_p (\mathbf{X}) | s= 1 ) =\frac{1}{4}.
    \end{align*}
Hence, $\gamma_{X} = \frac{1}{4}$.
\end{itemize}
Therefore, $\operatorname{BoP-X} = - \frac{1}{4}$.

\end{proof}

\subsection{Proof for Theorem \ref{thm:different}:}

We provide the proof for Theorem~\ref{thm:different}, for two measures of explainability evaluation: sufficiency and incomprehensiveness, as illustrated in Figure~\ref{fig:proof_fig_th2_long_suff} and Figure~\ref{fig:proof_fig_th2_long}. Figure~\ref{fig:proof_fig_th2_long_suff} illustrates the proof for sufficiency, where both generic $h_0$ and personalized $h_p$ models predict perfectly (left), yet only keeping the most important feature for each (right) shows that the personalized model is more explainable for the group $(s'=1,s=0)$, and less explainable for group $(s'=0,s=1)$. Figure~\ref{fig:proof_fig_th2_long} illustrates the proof for incomprehensiveness.

\begin{figure}
    \includegraphics[width=1\linewidth]{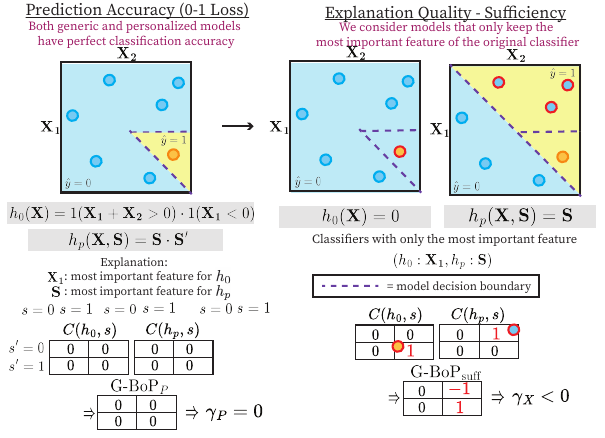}
    \vspace{-2em}
    \caption{Comparing a generic model ($h_0$) and a personalized model ($h_p$) on prediction and explanation (sufficiency). Top-left: The generic model $h_0$ uses both $\mathbf{X_1}$ and $\mathbf{X_2}$ for predictions with its decision boundary defined by $1(\mathbf{X_1} + \mathbf{X_2} > 0) \cdot 1(\mathbf{X_1} < 0)$. The personalized model, $h_p$ instead predicts using the binary group attributes $s \in {0,1}$ and $s’ \in {0,1}$ via the rule $s \cdot s’$. Bottom-left: Both classifiers achieve perfect accuracy across all four groups, hence $\gamma_P = 0$. Top-right: Sufficiency evaluation reveals a difference in explanation quality. For $h_0$, keeping only the top feature $\mathbf{X_1}$ results in a constant prediction $h_0(\mathbf{X_1}) = 0$, causing an error for the group $s = s’ = 1$ (orange circle). For $h_p$, keeping only $\mathbf{S}$ yields $h_p(\mathbf{S}) = \mathbf{S}$, which fails to recover the true $\mathbf{Y}$ for the group $(s = 1, s' = 0) $ (blue circles). Bottom-right: Thus, the G-BoP is positive for $s=s’=1$ but negative for $s=1, s’=0$, yielding $\gamma_X < 0$. This shows that even with identical predictive performance, the models rely on different features, and personalization can reduce sufficiency-based explainability for some groups.}
    \label{fig:proof_fig_th2_long_suff}
    \vspace{-0.5em}
\end{figure}

\begin{proof}
    Let $\mathbf{X} = (\mathbf{X_1}, \mathbf{X_2})$ where $\mathbf{X_1}$ and $\mathbf{X_2}$ are independent and follow $\text{Unif}(-1,1)$. Define two binary personal attributes $s \in \{0,1\}$ and  $s' \in \{0,1\}$ such that the true label that we seek to predict is $\mathbf{Y} = \mathbf{S} \cdot \mathbf{S'}$. We define the classifiers of interest as:
\[
h_0(\mathbf{X}) = \mathds{1}(\mathbf{X_1} + \mathbf{X_2} > 0) \cdot \mathds{1}(\mathbf{X_2} < 0), \quad h_p(\mathbf{X}, \mathbf{S}) = \mathbf{S} \cdot \mathbf{S'}.
\]

\paragraph{Prediction.} Both $h_0$ and $h_p$ are perfectly aligned with the ground truth and yield $\hat{y} = \mathbf{Y}$. Therefore, they achieve perfect accuracy. In particular, this holds for both values of $\mathbf{S}$ and $\mathbf{S'}$:

\begin{table}[h!]
\centering
\caption*{$\operatorname{G-BoP}_P$}
\begin{tabular}{|c|c|c|}
\hline
$s' \backslash s$ & $s=0$ & $s=1$ \\
\hline
$s'=0$ & 0 & 0 \\
\hline
$s'=1$ & 0 & 0 \\
\hline
\end{tabular}
\end{table}

Such that we get:
\begin{align*}
    \gamma_P(h_0, h_p) = \min_{s, s' \in \{0, 1\}} \operatorname{G-BoP}_P(h_0, h_p, s) = 0.
\end{align*}

\paragraph{Explanation (sufficiency).} For $h_0$, the most important feature is $\mathbf{X_1}$, while for $h_p$, the most important feature is $\mathbf{S}$.

We now test sufficiency by evaluating the accuracy of classifiers using only the important feature.

\begin{itemize}
    \item For $h_0$, keeping only $\mathbf{X_1}$ results in a constant predictor $h_0(\mathbf{X_1}) = 0$. This fails to recover $\hat{y}$ when $s = 1$ and $s' = 1$ (red orange dot), leading to an error for the subgroup $(s=1, s'=1)$, while the three other subgroups still enjoy perfect prediction.
    
    \item For $h_p$, keeping only $\mathbf{S}$ yields $h_p(\mathbf{S}) = \mathbf{S}$, which fails to recover $\hat{y}$ when $s=1$ and $s'=0$ (red blue circles) but still correctly predicts for the other three subgroups.
\end{itemize}

Combining per-group values gives:
\begin{table}[h!]
\centering
\caption*{$\operatorname{G-BoP}_{\text{suff}}$}
\begin{tabular}{|c|c|c|}
\hline
$s' \backslash s$ & $s=0$ & $s=1$ \\
\hline
$s'=0$ & 0 & {\color{red}$-1$} \\
\hline
$s'=1$ & 0 & {\color{red}1} \\
\hline
\end{tabular}
\end{table}
such that we get:

\begin{align*}
    \gamma_X(h_0, h_p) = \min_{s \in \{0,1\}} \operatorname{G-BoP}_{\text{suff}}(h_0, h_p, s) = -1.
\end{align*}

\paragraph{Explanation (incomprehensiveness)}

In this setting, we evaluate incomprehensiveness by measuring the degradation in model predictions when the most important feature is removed.

The generic classifier is $h_0(\mathbf{X}) = \mathds{1}(\mathbf{X_1} + \mathbf{X_2} > 0) \cdot \mathds{1}(X_1 < 0)$ and the personalized classifier is $h_p(\mathbf{X}, \mathbf{S}) = \mathbf{S} \cdot \mathbf{S'}$. The most important feature for $h_0$ is $\mathbf{X_1}$ and for $h_p$ is $\mathbf{S}$.

\begin{itemize}
\item \textbf{Removing $\mathbf{X_1}$ from $h_0$:} Without $\mathbf{X_1}$, the classifier reduces to the constant function $h_0(\mathbf{X_{\setminus X_1}}) = 0$. This leads to an incorrect prediction when $s = 1$ and $s' = 1$.

\item \textbf{Removing $\mathbf{S}$ from $h_p$:} The personalized model becomes $h_p(\mathbf{X}, \mathbf{S_{\setminus S}}) = \mathbf{S'}$, which ignores $\mathbf{S}$. This leads to an incorrect prediction when $s = 0$ and $s' = 1$, since the true label is $y = 0$ but $h_p = 1$.
\end{itemize}

All other combinations yield correct predictions even when the important feature is removed.

\begin{table}[h!]
\centering
\caption*{$\operatorname{G-BoP}_{\text{incomp}}$}
\begin{tabular}{|c|c|c|}
\hline
$s' \backslash s$ & $s=0$ & $s=1$ \\
\hline
$s'=0$ & 0 & 0\\
\hline
$s'=1$ & {\color{red}1} & {\color{red}-1} \\
\hline
\end{tabular}
\end{table}

This yields the minimum group benefit of personalization is:
\[
\gamma_X^{\text{incomp}}(h_0, h_p) = \min_{s, s' \in \{0, 1\}} \operatorname{G-BoP}_{\text{incomp}}(h_0, h_p, s, s') = -1.
\]

\end{proof}

\begin{figure}
    \includegraphics[width=1\linewidth]{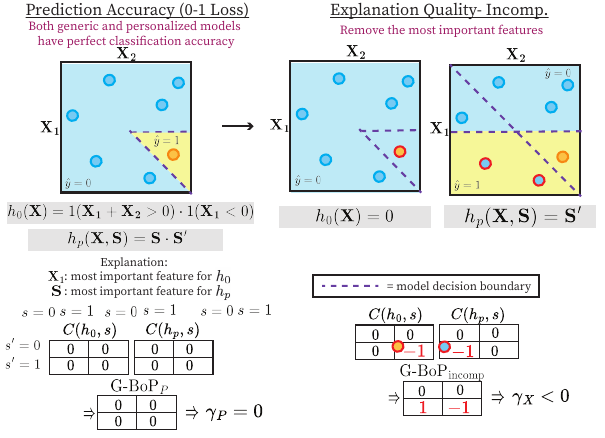}
    \vspace{-2em}
    \caption{Comparing a generic model ($h_0$) and a personalized model ($h_p$) on prediction and explanation (incomprehensiveness). Both achieve perfect accuracy, but removing each most important features yields different prediction performances. We find that $\gamma_P = 0$ while $\gamma_X < 0$.}
    \label{fig:proof_fig_th2_long}
    \vspace{-0.5em}
\end{figure}

\subsection{Proof for Theorem \ref{thm:BopX_to_Bop}:}

See Figure \ref{fig:4_2} for a visualization of Theorem \ref{thm:BopX_to_Bop} for a linear model with $h_0$ and $h_p$ Bayes optimal regressors.

\begin{figure*}
    \centering
    \includegraphics[width=1\linewidth]{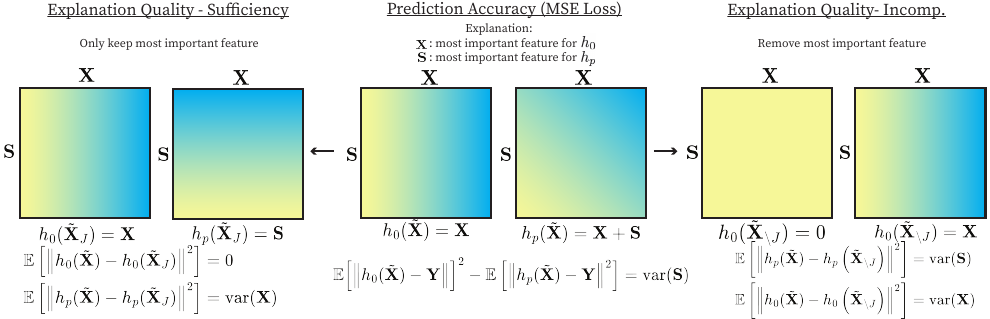}
    \vspace{-1.5em}
    \caption{For a linear model, absence of benefit in explanation quality means that there is also an absence of benefit in prediction accuracy, as illustrated here (see Theorem~\ref{thm:BopX_to_Bop}). We consider a linear model $\mathbf{Y} = \mathbf{X} + \mathbf{S} + \epsilon$, with $h_0$ and $h_p$ Bayes optimal regressors. In this example, absence of benefit of personalization for the explanation quality, $\text{BoP-X}^{\text{suff}}=0$ evaluated in terms of sufficiency (left column) means: $\mathbb{E}[\|h_0(\mathbf{\tilde X}) - h_0(\mathbf{\tilde X_J}) \|^2]  = \mathbb{E}[\|h_p(\mathbf{\tilde X}) - h_p(\mathbf{\tilde X_J}) \|^2]  \Rightarrow \text{var}(\mathbf{X}) =0$. Then, absence of benefit of personalization for the explanation quality, $\text{BoP-X}^{\text{comp}}=0$ evaluated in terms of comprehensiveness (right column) means: $\mathbb{E}[\|h_0(\mathbf{\tilde X}) - h_0(\mathbf{\tilde X_{\setminus J}}) \|^2]  = \mathbb{E}[\|h_0(\mathbf{\tilde X}) - h_0(\mathbf{\tilde X_{\setminus J}}) \|^2]  \Rightarrow \text{var}(\mathbf{S}) = \text{var}(\mathbf{X}) \Rightarrow \text{var}(\mathbf{S}) =0$. This allows us to conclude that, in terms of prediction accuracy (middle column): $\text{MSE}_0 = \text{MSE}_p$ and hence there is also no benefit of personalization in prediction :$\text{BoP-P}=0$.}
    \label{fig:4_2}
    \vspace{-1em}
\end{figure*}

\begin{proof}
    A Bayes optimal  regressor using a subset of variables from indices in $J \subseteq [1, \ldots, t+k]$ would be given as:
    \begin{equation} \label{eqn:bayes}
        \hat{y} = h^{*}_J(\mathbf{X}_J, \mathbf{S}_J) = \sum_{\substack{j \in J, \\ j \leq t}} \alpha_j \textbf{X}_j + \sum_{\substack{j \in J, \\ j 
        \geq t+1}} \alpha_j \textbf{S}_{j-t}, 
    \end{equation}
    where $h^{*}_J$ represents a Bayes optimal regressor for the given subset $J$, and $\mathbf{X}_J$ and $\mathbf{S}_J$ are sub-vectors of $\mathbf{X}$ and $\mathbf{S}$, using the indices in $J$. 
    

    In what follows, we denote $\setminus J$ as a shorthand notation for $[1, \ldots t+k] \setminus J$. 
    
    From \eqref{eqn:bayes} 
    and the definition of the true response
    $\textbf{Y} = \sum_{\substack{ j \leq t}} \alpha_j \textbf{X}_j + \sum_{\substack{j 
        \geq t+1}} \alpha_j \textbf{S}_{j-t}, +\epsilon$
    we obtain: 
    \begin{align}
        \text{MSE}(h_0) &= \sum_{j=t+1}^{t+k} \alpha_j^2 \var(\textbf{S}_{t+j}) + \var (\epsilon), \label{eqn:mse_h0} \\ 
        \text{MSE}(h_p) &= \var (\epsilon). \label{eqn:mse_hp}
    \end{align}

    We define $J_0$ and $J_p$ as a set of important features for $h_0$ and $h_p$. Note that $J_0$ and $J_p$ are the same across all samples for the additive model. Then, the sufficiency of the explanation for $h_0$ and $h_p$ is written as: 
        \begin{align}
       \mathbb{E}[\| h_0(\mathbf{\tilde X})- h_0(\mathbf{\tilde X_{J_0}})\|^2] 
       &=  
       \sum_{\substack{j \in \setminus J_0, \\ j \leq t}}\alpha_j^2 \var(\mathbf{X_{t}})  \\ 
        \mathbb{E}[\| h_p(\mathbf{\tilde X})- h_p(\mathbf{\tilde X_{J_p}})\|^2] 
        &= \sum_{\substack{j \in \setminus J_p, \\ j \leq t}}\alpha_j^2 \var(\mathbf{X_{t}}) + \sum_{\substack{j \in \setminus J_p, \\ j \geq t+1}}\alpha_j^2 \var(\mathbf{S_{j-t}}).
    \end{align}
    Similarly, the comprehensiveness of the explanation for $h_0$ and $h_p$ is written as: 
   \begin{align}
         \mathbb{E}[\| h_0(\mathbf{\tilde X})- h_0(\mathbf{\tilde X}_{\setminus J_0})\|^2] 
         &=  \sum_{\substack{j \in J_0, \\ j \leq t}}\alpha_j^2 \var(\mathbf{X}_{t})  
         \\ 
        \mathbb{E}[\| h_p(\mathbf{\tilde X})- h_p(\mathbf{\tilde X}_{\setminus J_p})\|^2]  &= \sum_{\substack{j \in J_p, \\ j \leq t}}\alpha_j^2 \var(\mathbf{X}_{t}) + \sum_{\substack{j \in  J_p, \\ j \geq t+1}}\alpha_j^2 \var(\mathbf{S}_{j-t}).
    \end{align}
    Then, our assumption of $\text{BoP-X} = 0$ for sufficiency becomes: 
    \begin{align} 
          \mathbb{E}[\| h_0(\mathbf{\tilde X})- h_0(\mathbf{ \tilde X}_{J_0})\|^2] 
         &=
          \mathbb{E}[\| h_p(\mathbf{ \tilde X})- h_p(\mathbf{ \tilde X}_{J_p})\|^2] \\
          \Rightarrow 
          \sum_{\substack{j \in \setminus J_0, \\ j \leq t}}\alpha_j^2 \var(\mathbf{X}_{t}) 
          &= 
          \sum_{\substack{j \in \setminus J_p, \\ j \leq t}}\alpha_j^2 \var(\mathbf{X}_{t}) + \sum_{\substack{j \in \setminus J_p, \\ j \geq t+1}}\alpha_j^2 \var(\mathbf{S}_{j-t})
          \label{eqn:bopx_suff}
     \end{align}

     Similarly, our assumption of $\text{BoP-X} = 0$ for comprehensiveness becomes: 
     \begin{align} 
          \mathbb{E}[\| h_0(\mathbf{ \tilde X})- h_0(\mathbf{ \tilde X}_{\setminus J_0})\|^2] 
         &=
          \mathbb{E}[\| h_p(\mathbf{ \tilde X})- h_p(\mathbf{ \tilde X}_{\setminus J_p})\|^2]\\
          \Rightarrow 
          \sum_{\substack{j \in J_0, \\ j \leq t}}\alpha_j^2 \var(\mathbf{X}_{t})   
          &= 
         \sum_{\substack{j \in J_p, \\ j \leq t}}\alpha_j^2 \var(\mathbf{X}_{t}) + \sum_{\substack{j \in  J_p, \\ j \geq t+1}}\alpha_j^2 \var(\mathbf{S}_{j-t}).
          \label{eqn:bopx_compr}
     \end{align}

    Summing both equations:
     \begin{align}
\sum_{\substack{j \in \setminus J_0 \\ j \leq t}} \alpha_j^2 \, \operatorname{Var}(\mathbf{X}_t) 
+ \sum_{\substack{j \in J_0 \\ j \leq t}} \alpha_j^2 \, \operatorname{Var}(\mathbf{X}_t) 
&=
\sum_{\substack{j \in \setminus J_p \\ j \leq t}} \alpha_j^2 \, \operatorname{Var}(\mathbf{X}_t) 
+ \sum_{\substack{j \in \setminus J_p \\ j \geq t+1}} \alpha_j^2 \, \operatorname{Var}(\mathbf{S}_{j - t}) \notag \\
&\quad 
+ \sum_{\substack{j \in J_p \\ j \leq t}} \alpha_j^2 \, \operatorname{Var}(\mathbf{X}_t) 
+ \sum_{\substack{j \in J_p \\ j \geq t+1}} \alpha_j^2 \, \operatorname{Var}(\mathbf{S}_{j - t}) \notag \\
\Rightarrow \operatorname{Var}(\mathbf{X}) 
&= \operatorname{Var}(\mathbf{X}) + \operatorname{Var}(\mathbf{S}) \notag \\
\Rightarrow \operatorname{Var}(\mathbf{S}) 
&= 0.
\end{align}

Since $\var(\mathbf{S})=0$, we have that $\text{MSE}(h_0) = \text{MSE}(h_p)$ and thus: $\text{BoP-P} = 0$ which concludes the proof.

We can make the same claim with similar logic for a classifier where $\mathbf{Y}$ is given as: 
\begin{equation}
     \mathbf{Y} = \mathds{1} (\alpha_1 \mathbf{X}_1 + \cdots \alpha_t \mathbf{X}_t + \alpha_{t+1} \mathbf{S}_1 + \cdots + \alpha_{t+k} \mathbf{S}_k + \epsilon > 0).
\end{equation}

The derivations above are made at the population level, i.e., without distinguishing subgroups in the data. However, the reasoning also applies for subgroups, where we define subgroups to be defined by $\mathds{1}(\mathbf{S} \geq 0)$ taking values in $\{0, 1\}$. In other words, if $\operatorname{G-BoP}_{\text{suff}}(h_0, h_p, s) = 0$ and $\operatorname{G-BoP}_{\text{incomp}}(h_0, h_p, s) = 0$ then $\operatorname{G-BoP}_{P}(h_0, h_p, s) = 0$ for any $s \in \{0, 1\}$. However, we note that we can only make a statement on $\gamma(h_0, h_p)$ (prediction accuracy) for the case where $\gamma_{\text{sufficiency}}(h_0, h_p) = 0$ and $\gamma_{\text{incomprehensiveness}}(h_0, h_p) = 0$ if the following is true: the group realizing the minima in the three $\gamma$'s is the same group.
\end{proof}

\allowdisplaybreaks
\section{Proof of Theorems on Lower Bounds for the Probability of error}

As in \citep{monteiro2022epistemic}, we will prove every theorem for the flipped hypothesis test defined as:
\begin{align*}
& H_0: 
&\gamma(h_0, h_p; \mathcal{D})
&\leq \epsilon \quad 
\Leftrightarrow \quad 
\text{Personalized $h_p$ performs worst: yields $\epsilon <0$ disadvantage} \\
& H_1: 
&\gamma(h_0, h_p; \mathcal{D}) 
&\geq 0 \quad 
\Leftrightarrow \quad 
\text{Personalized $h_p$ performs at least as good as generic $h_0$.}
\end{align*}
where we emphasize that $\epsilon < 0$.

As shown in \citep{monteiro2022epistemic}, proving the bound for the original hypothesis test is equivalent to proving the bound for the flipped hypothesis test, since estimating $\gamma$ is as hard as estimating $-\gamma$. In every section that follows, $H_0, H_1$ refer to the flipped hypothesis test.

Here, we first prove a proposition that is valid for all of the cases that we consider in the next sections.

\begin{proposition}\label{prop:lower_bound}
    Consider $H_0$ the set of distributions of data, for which the generic model $h_0$ performs better, i.e., the true $\gamma$ is such that $\gamma(h_0, h_p, \mathcal{D}) \leq \epsilon$, and $H_1$ the set of distributions of data points for which the personalized model performs better, i.e., the true $\gamma$ is such that $\gamma(h_0, h_p, \mathcal{D}) \geq 0$. Denote $H_0', H_1'$ the set of distributions that are mixtures of components in $H_0$ and $H_1$ respectively. Consider a decision rule $\Psi$ that represents any hypothesis test.
    We have the following bound on the probability of error $P_e$:
    \begin{align*}
    \min _{\Psi} 
    \max _{\substack{P_0 \in H_0' \\ P_1 \in H_1'}}
        P_e 
        \geq 
        \frac{1}{2}\left(1 - TV(P\parallel Q)\right),
    \end{align*}
for any $P \in H_0'$ and any $Q \in H_1'$. Here $TV$ refers to the total variation between probability distributions $P$ and $Q$.
\end{proposition}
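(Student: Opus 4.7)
The plan is to reduce the minimax problem to a two-point testing problem by fixing any pair $(P, Q)$ in $H_0 \times H_1$ and then invoking the classical characterization of total variation as the optimal separation achievable by a binary test (Le Cam's lemma). The underlying observation is that $\min$-$\max$ is always at least $\min$ at a fixed instance, and for a fixed instance the infimum of the sum of Type~I and Type~II error probabilities equals $1-\mathrm{TV}(P,Q)$, independently of the hypothesis test one chooses.

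Concretely, the first step would be to write out the probability of error for an arbitrary decision rule $\Psi$ evaluated against a single candidate pair $(P,Q)$ with $P \in H_0$ and $Q \in H_1$:
\begin{equation*}
P_e(\Psi; P, Q) = P(\Psi = 1) + Q(\Psi = 0).
\end{equation*}
Because $\max_{(P_0, P_1) \in H_0 \times H_1} P_e(\Psi; P_0, P_1) \geq P_e(\Psi; P, Q)$ for every admissible pair, taking the minimum over $\Psi$ on both sides preserves the inequality and yields
\begin{equation*}
\min_\Psi \max_{(P_0, P_1) \in H_0 \times H_1} P_e(\Psi; P_0, P_1) \ \geq \ \min_\Psi \bigl[ P(\Psi = 1) + Q(\Psi = 0)\bigr].
\end{equation*}

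The second step is the Le Cam identity: for any two probability measures $P, Q$ on a common measurable space,
\begin{equation*}
\inf_{\Psi} \bigl[ P(\Psi = 1) + Q(\Psi = 0)\bigr] = 1 - \mathrm{TV}(P, Q),
\end{equation*}
where the infimum is over all (possibly randomized) $\{0,1\}$-valued decision rules. I would derive this from the variational definition $\mathrm{TV}(P,Q) = \sup_{A} \bigl(Q(A) - P(A)\bigr)$ by identifying a test with the acceptance region $A = \{\Psi = 0\}$ and noting that $P(\Psi = 1) + Q(\Psi = 0) = 1 - \bigl(Q(A^c) - P(A^c)\bigr)$; the infimum over $\Psi$ (equivalently the supremum over $A$) then gives exactly $1 - \mathrm{TV}(P, Q)$, with the optimal $\Psi$ being the Neyman--Pearson likelihood-ratio test. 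Combining the two steps gives the claimed bound.

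The only real subtlety is ensuring that the pair $(P, Q)$ is actually admissible, i.e.\ that the chosen $P$ truly lies in $H_0$ (so $\gamma \leq \epsilon$ under $P$) and $Q$ in $H_1$ (so $\gamma \geq 0$ under $Q$); the proposition leaves the choice of $(P, Q)$ free and so this is really a constraint for the later theorems (Theorem~\ref{th:lower_bound} and its corollaries) that will instantiate $P$ and $Q$ explicitly and bound $\mathrm{TV}(P,Q)$ via Pinsker or direct moment-generating-function computations on the group-wise individual-BoP distributions. Thus the main obstacle for this proposition itself is purely the clean statement of Le Cam's identity; the substantive work is deferred to the constructions that follow.
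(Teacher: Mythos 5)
Your proposal is correct and follows essentially the same route as the paper: fix a two-point alternative $(P,Q) \in H_0 \times H_1$, lower-bound the worst case by that pair, and invoke the total-variation characterization of the minimal sum of Type~I and Type~II errors (Le Cam). The paper phrases the reduction via an explicit min--max swap followed by enlarging the optimization from decision-rule acceptance regions to all events, but this is the same argument as your direct two-point reduction.
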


\begin{proof}
    Consider $h_0$ and $h_p$ fixed. Take one decision rule $\Psi$ that represents any hypothesis test. Consider a dataset such that $H_0$ is true, i.e., $\mathcal{D} \sim P_0'$ and a dataset such that $H_1$ is true, i.e., $\mathcal{D} \sim P_1'$.

    It might seem counterintuitive to use two datasets to compute the same quantity $P_e$, i.e., one dataset to compute the first term in $P_e$, and one dataset to compute the second term in $P_e$. However, this is just a reflection of the fact that $P_e$ is a composite measure, with its two terms coming from two different settings: $H_0$ true or $H_0$ false, which are disjoint events. In the same way that $H_0$ cannot be simultaneously true and false, each term in $P_e$ considers one or the other case, and we use one or the other dataset. 
    
    We have:
    \begin{align*}
        P_e 
        &= \frac{1}{2}\left(
        \text{Pr}(\text{Rejecting $H_0$} | \text{$H_0$ true}) 
        +
        \text{Pr}(\text{Failing to reject $H_0$} | \text{$H_1$ true})\right)
        \\
        &= \frac{1}{2}\left(
        \text{Pr}(\Psi(h_0, h_p, \mathcal{D}, \epsilon) = 1 | \mathcal{D} \sim P_0) 
        +
        \text{Pr}(\Psi(h_0, h_p, \mathcal{D}, \epsilon) = 0 |
        \mathcal{D} \sim P_1)\right)
        \\
        &= \frac{1}{2}\left(
        \text{Pr}(\Psi(\mathcal{D}) = 1 | \mathcal{D} \sim P_0) 
        +
        \text{Pr}(\Psi(\mathcal{D}) = 0 |
        \mathcal{D} \sim P_1)\right)
~\text{simplifying notations}
        \\
        &= \frac{1}{2}\left(
        1 - \text{Pr}(\Psi(\mathcal{D}) = 0 | \mathcal{D} \sim P_0) 
        +
        \text{Pr}(\Psi(\mathcal{D}) = 0 |
        \mathcal{D} \sim P_1)\right)
~\text{complementary event}
        \\
        &= \frac{1}{2}\left(
        1 
        - P_0(E_\Psi) 
        +
        P_1(E_\Psi) \right)
~\text{writing $E_\Psi$ the event $\Psi(\mathcal{D}) = 0$}\\
        &= \frac{1}{2}\left(
        1 - (P_0(E_\Psi) - P_1(E_\Psi)) \right).
    \end{align*}

Now, we show that this equality holds when considering mixtures of components in $H_0$ and mixtures of components in $H_1$.

    Consider a dataset distributed as $P_0 \in H_0'$ that is a mixture $P_0=\sum_{j=1}^{J_0} w^{(j)}_0 P^{(j)}_0$, where $w^{(j)}_0  \geq 0$ and $\sum_{j=1}^{J_0} w^{(j)}_0 =1$, such that, for each component $P^{(j)}_0$ of the mixture, $H_0$ is true. 
    
    Consider another dataset distributed as $P_1 \in H_1'$ that is a mixture $P_1=\sum_{j=1}^{J_1} w^{(j)}_1 P^{(j)}_1$, where $w^{(j)}_1  \geq 0$ and $\sum_{j=1}^{J_1} w^{(j)}_1 =1$, such that, for each component $P^{(j)}_1$ of the mixture, $H_1$ is true.

For every $j, j'$, since $P^{(j)}_0\in H_0$ and $P^{(j')}_1 \in H_1$ we have:
\begin{align*}
         P_e  
        &= \frac{1}{2}\left(
        1 - (P^{(j)}_0(E_\Psi) - P^{(j')}_1(E_\Psi)) \right).
    \end{align*}

Since $\sum_{j=1}^{J_0} w^{(j)}_0 =1$ and $\sum_{j'=1}^{J_1} w^{(j')}_1 =1$ we have:

\begin{align*}
         P_e  
         &= \left(\sum_{j=1}^{J_0} w^{(j)}_0\right)
         \left(\sum_{j'=1}^{J_1} w^{(j')}_1\right) P_e \\
         &= \left(\sum_{j=1}^{J_0} w^{(j)}_0\right)
         \left(\sum_{j'=1}^{J_1} w^{(j')}_1 P_e\right)  \\
         &= \sum_{j=1}^{J_0} \sum_{j'=1}^{J_1} 
         w^{(j)}_0
         w^{(j')}_1 P_e \\
         &= \sum_{j=1}^{J_0} \sum_{j'=1}^{J_1}
         w^{(j)}_0
         w^{(j')}_1 \frac{1}{2}\left(
        1 - (P^{(j)}_0(E_\Psi) - P^{(j')}_1(E_\Psi)) \right)
        ~\text{(choosing to replace $P_e$ its expression involving $j, j'$)}\\
         &= \frac{1}{2}\sum_{j=1}^{J_0} \sum_{j'=1}^{J_1} 
         w^{(j)}_0
          \left(
        w^{(j')}_1 - w^{(j')}_1P^{(j)}_0(E_\Psi) + w^{(j')}_1 P^{(j')}_1(E_\Psi)) \right)
        \\
        &= \frac{1}{2}\sum_{j=1}^{J_0}  
         w^{(j)}_0
         \sum_{j'=1}^{J_1}
          \left(
        w^{(j')}_1 - w^{(j')}_1P^{(j)}_0(E_\Psi) + w^{(j')}_1 P^{(j')}_1(E_\Psi)) \right)
        \\
         &= \frac{1}{2}\sum_{j=1}^{J_0} 
         w^{(j)}_0
          \left(
        \sum_{j'=1}^{J_1}w^{(j')}_1 - \sum_{j'=1}^{J_1} w^{(j')}_1P^{(j)}_0(E_\Psi) + \sum_{j'=1}^{J_1} w^{(j')}_1 P^{(j')}_1(E_\Psi)) \right)
        \\
         &= \frac{1}{2}\sum_{j=1}^{J_0}  
         w^{(j)}_0
          \left(
        1 - P^{(j)}_0(E_\Psi) + \sum_{j'=1}^{J_1} w^{(j')}_1 P^{(j')}_1(E_\Psi)) \right)
        ~\text{(using $\sum_{j'=1}^{J_1} w^{(j')}_1 = 1$)}\\
         &= \frac{1}{2}\sum_{j=1}^{J_0}  
         w^{(j)}_0
          \left(
        1 - P^{(j)}_0(E_\Psi) + P_1(E_\Psi)) \right)
        ~\text{(definition of $P_1$)}\\
         &= \frac{1}{2}
          \left(
        \sum_{j=1}^{J_0}  
         w^{(j)}_0 - \sum_{j=1}^{J_0}  
         w^{(j)}_0P^{(j)}_0(E_\Psi) + \sum_{j=1}^{J_0}  
         w^{(j)}_0 P_1(E_\Psi)) \right)
        \\
         &= \frac{1}{2}
          \left(
        1 - \sum_{j=1}^{J_0}  
         w^{(j)}_0P^{(j)}_0(E_\Psi) +  P_1(E_\Psi)) \right)
        ~\text{(using $\sum_{j=1}^{J_0}w^{(j)}_0 = 1$)}\\
         &= \frac{1}{2}
          \left(
        1 - P_0(E_\Psi) +  P_1(E_\Psi)) \right)
        ~\text{(definition of $P_0$)}
\end{align*}

Hence, the equality holds for mixtures of components in $H_0, H_1$ respectively. We denote $H_0', H_1'$ the respective sets of such mixtures.

Now, we will bound this quantity:
\begin{align*}
    \min _{\Psi} 
    \max _{\substack{P_0 \in H_0' \\ P_1 \in H_1'}}
        P_e 
        &= 
    \min _{\Psi} 
    \max _{\substack{P_0 \in H_0' \\ P_1 \in H_1'}}\frac{1}{2}\left(
        1 - (P_0(E_\Psi) - P_1(E_\Psi)) \right)
            \\
        &\geq 
        \max _{\substack{P_0 \in H_0' \\ P_1 \in H_1'}}
        \min _{\Psi} 
         \left[\frac{1}{2}\left(
        1 
        - (P_0(E_\Psi) 
        -
        P_1(E_\Psi)) \right)
        \right]
        ~\text{using minmax inequality}
        \\
        &=
        \max _{\substack{P_0 \in H_0' \\ P_1 \in H_1'}}
         \left[ \frac{1}{2}\left(
        1 
        - \max_{\Psi} 
        (P_0(E_\Psi) 
        -
        P_1(E_\Psi)) \right)
        \right]~\text{to minimize over $\Psi$, we maximize $(P_0(E_\Psi) - P_1(E_\Psi))$}
        \\
        &\geq 
        \max _{\substack{P_0 \in H_0' \\ P_1 \in H_1'}}
         \left[ \frac{1}{2}\left(
        1 
        - \max_{\text{events $A$}}(P_0(A) 
        -
        P_1(A)) \right)
        \right]
        ~\text{because the max is now over all possible events $A$}
    \end{align*}

The maximization is broadened to consider all possible events \( A \). This increases the set over which the maximum is taken. Because \( \Psi \) is only a subset of all possible events, maximizing over all events \( A \) (which includes \( \Psi \)) will result in a value that is at least as large as the maximum over \( \Psi \). In other words, extending the set of possible events can only increase the maximum.

\begin{align*}
        &= 
        \max _{\substack{P_0 \in H_0' \\ P_1 \in H_1'}}
         \left[ \frac{1}{2}\left(
        1 
        - TV(P_0 \parallel P_1) \right)
        \right]
        ~\text{by definition of the total variation (TV)}\\
        &= \frac{1}{2}\left( 1 - \min _{\substack{P_0 \in H_0' \\ P_1 \in H_1'}}TV(P_0\parallel P_1) \right)\\
        & \geq \frac{1}{2}\left( 1 - TV(P \parallel Q) \right) ~\text{for any $P \in H_0'$ and $Q \in H_1'$}.
\end{align*}

This is true because the total variation distance \( TV(P \parallel Q) \) for any particular pair \( P \) and \( Q \) cannot be smaller than the minimum total variation distance across all pairs. We recall that, by definition, the total variation of two probability distributions $P, Q$ is the largest possible difference between the probabilities that the two probability distributions can assign to the same event $A$.

Since the total variation distance $TV$ gives values in the interval $[0, 1]$, the lower bound cannot exceed $1/2$. This is expected. In a binary hypothesis test with equal priors, a decision rule corresponding to random guessing would yield $P_e = 1/2$. Consequently, the optimal probability of error (the minimum over all decision rules $\Psi$) must lie in the interval $[0, 1/2]$.
\end{proof}

We note that, through this derivation, we have recovered Le Cam's standard result (see \citep{le2012asymptotic,yu1997assouad} Lemma 1]).

Next, we prove a lemma that will be useful for the follow-up proofs.

\begin{lemma}\label{lem:expect}
    Consider a random variable $a$ such that $\mathbb{E}[a] = 1$. Then:
    \begin{equation}
        \mathbb{E}[(a - 1)^2] = \mathbb{E}[a^2] - 1
    \end{equation}
\end{lemma}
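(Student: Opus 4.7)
The plan is to expand the square inside the expectation and apply linearity. Concretely, I would write $(a-1)^2 = a^2 - 2a + 1$ as an identity of random variables, then take expectations on both sides. Linearity of expectation gives
\begin{equation*}
\mathbb{E}[(a-1)^2] = \mathbb{E}[a^2] - 2\mathbb{E}[a] + \mathbb{E}[1] = \mathbb{E}[a^2] - 2\mathbb{E}[a] + 1.
\end{equation*}
Substituting the hypothesis $\mathbb{E}[a]=1$ collapses the middle two terms to $-2 + 1 = -1$, yielding $\mathbb{E}[(a-1)^2] = \mathbb{E}[a^2] - 1$.

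There is essentially no obstacle here: the argument is a two-line calculation and the only subtlety is making sure the constant $1$ inside the expectation is handled correctly (i.e., $\mathbb{E}[1]=1$) and that $\mathbb{E}[a^2]$ is finite, which is implicit since otherwise the left-hand side would not be well defined either. I would state the result as the standard variance identity $\operatorname{Var}(a) = \mathbb{E}[a^2] - (\mathbb{E}[a])^2$ applied at the specific value $\mathbb{E}[a]=1$, which makes the bookkeeping transparent and signals to the reader why the lemma is being recorded: it will be used later to bound a chi-square-type quantity $\mathbb{E}[(p^\epsilon/p - 1)^2]$ where the likelihood ratio $a = p^\epsilon/p$ satisfies $\mathbb{E}_p[a]=1$, tying it cleanly to the total variation / Pinsker-style bounds that Proposition~\ref{prop:lower_bound} feeds into.
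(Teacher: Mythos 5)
Your proof is correct and follows exactly the same route as the paper's: expand $(a-1)^2 = a^2 - 2a + 1$, apply linearity of expectation, and substitute $\mathbb{E}[a]=1$. The additional remarks about finiteness of $\mathbb{E}[a^2]$ and the connection to the variance identity are sensible but not needed beyond what the paper already does.
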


\begin{proof}
    We have that:
    \begin{align*}
        \mathbb{E}[(a - 1)^2] 
            &= \mathbb{E}[a^2 - 2a +1] \\
            &= \mathbb{E}[a^2] -2 \mathbb{E}[a] +1 
            ~\text{(linearity of the expectation)}\\
            &= \mathbb{E}[a^2] -2 +1 
            \text{($\mathbb{E}[a] = 1$ by assumption)}\\
             &= \mathbb{E}[a^2] -1. 
    \end{align*}
\end{proof}

\subsection{Proof for Theorem \ref{th:lower_bound}:  Any probability distribution and any number of samples in each group}\label{subsec:any_distribution}

Below, we find the lower bound for the probability of error for any probability distribution of the BoP, and any number of samples per group.

\begin{restatetheorem} [Theorem~\ref{th:lower_bound}, restated] 
The lower bound writes:
\begin{equation}
    \min _{\Psi} 
    \max _{\substack{P_0 \in H_0' \\ P_1 \in H_1'}}
        P_e 
        \geq \frac{1}{2}\left( 1 - \frac{1}{2\sqrt{d}}\left[
             \frac{1}{d}
            \sum_{j=1}^d \mathbb{E}_{p^\epsilon}\Bigg[
           \frac{p^{\epsilon}(B)}{p(B)}\Bigg]^{m_j}
            -1
            \right]^{\frac{1}{2}} \right)
\end{equation}
where $P_0$ is a mixture of distributions of data, for which the generic model $h_0$ performs better, i.e., the true $\gamma$ is such that $\gamma(h_0, h_p, \mathcal{D}) < 0$, and $P_1$ is a mixture of distribution of data points for which the personalized model performs better, i.e., the true $\gamma$ is such that $\gamma(h_0, h_p, \mathcal{D}) \geq \epsilon$. Dataset $\mathcal{D}$ is drawn from an unknown distribution and has $d$ groups where $d=2^k$, with each group having $m_j$ samples.
\end{restatetheorem}
\begin{proof}
    By Proposition~\ref{prop:lower_bound}, we have that:
 \begin{align*}
    \min _{\Psi} 
    \max _{\substack{P_0 \in H_0' \\ P_1 \in H_1'}}
        P_e 
        \geq \frac{1}{2}\left( 
        1 - TV(P\parallel Q) \right)
        \end{align*}
for any $P \in H_0'$ and any $Q \in H_1'$. We will design two probability distributions $P, Q$ defined on the $N$ data points $(\mathbf{X_1, S_1, Y_1}), ..., (\mathbf{X_N, S_N, Y_N})$ of the dataset $\mathcal{D}$ to compute an interesting right hand side term. 

An ``interesting'' right hand side term is a term that makes the lower bound as tight as possible, i.e., it relies on distributions $P, Q$ for which $TV(P\parallel Q)$ is small, i.e., probability distributions that are similar. To achieve this, we will first design the distribution $Q \in H_1'$, and then propose $P$ as a very small modification of $Q$, just enough to allow it to verify $P \in H_0'$.

Mathematically, $P$, $Q$ are distributions on the dataset $\mathcal{D}$, i.e., on $N$ i.i.d. realizations of the random variables $\mathbf{X, S, Y}$. 
Thus, we wish to design probability distributions on $(\mathbf{X_1, S_1, Y_1}), ..., (\mathbf{X_N, S_N, Y_N})$.

However, we note that the dataset distribution is only meaningful in terms of how each triplet $(\mathbf{X_i, S_i, Y_i})$ impacts the value of the individual BOP $\mathbf{B_i}$. Indeed, we recall that $\mathbf{B_i}$ is a function of the data point $\mathbf{Z_i} = (\mathbf{X_i,S_i,Y_i})$, which we denote as $f$ such that $\mathbf{B_i} = f(\mathbf{Z_i})=\text{cost}(h_0, \tilde{\mathbf{X}_i}, \mathbf{Y}_i) - \text{cost}(h_p, \tilde{\mathbf{X}_i}, \mathbf{Y}_i)$. Hence, any probability distribution on $\mathbf{Z_i}$ will yield a probability distribution on $\mathbf{B_i}$ and any distribution on the dataset $\mathbf{Z_1}, ..., \mathbf{Z_N}$ will yield a distribution on $\mathbf{B_1}, ..., \mathbf{B_N}$.

Conversely, let be given $\tilde P(b_1, ..., b_N) = \Pi_{i=1}^N \tilde P_i(b_i)$ a distribution on $\mathbf{B_1}, ..., \mathbf{B_N}$ defined by $N$ independent distributions $\tilde P_i$ for $i=1, .., N$, such that the support of each $\tilde P_i$ is restricted to the image of $f$. We propose to build a probability distribution $P(z_1, ..., z_N) = \Pi_{i=1}^N P_i(z_i)$ on $\mathbf{Z_1}, ..., \mathbf{Z_N}$ that will ensure that $f(\mathbf{Z_1}), ..., f(\mathbf{Z_N})$ is distributed as $\tilde P$.



First, for each $P_i$ we restrict $P_i$ so that, for every value $b_i$ that $\mathbf{B_i}$ can take according to $\tilde P_i$, there exists a unique $z_i$ with positive density, concentrated as a Dirac at $z_i$, such that we have $f(z_i) = b_i$. Existence is guaranteed since $\mathbf{B_i}$ takes values in the image of $f$. Uniqueness is guaranteed because we can assign $0$ mass to the potential non-unique values.
Equivalently, $f$ is a bijection from $\text{supp}(P_i)$ to the set of values taken by $\mathbf{B}_i$ for each $i$. We note that the construction of $P_i$ (as a collection of Dirac measures centered at points $z_i$ in the domain of $f$) makes $\text{supp}(P_i)$ a discrete set of points. 


By this construction, we have restricted the support such that $f$ acts as a bijection between $\text{supp}(P_i) \subset \mathcal{Z}$ and $\text{supp}(\tilde{P}_i) \subset \mathcal{B}$ for all $i$ (and analogously for $Q_i$ and $\tilde{Q}_i$). Let $g = f^{-1}$ denote the inverse mapping defined on this restricted domain. 

We define the distributions $P$ and $Q$ over the full dataset as the push-forward measures of $\tilde{P}$ and $\tilde{Q}$ under the component-wise mapping $g$, meaning $P = g_{\#} \tilde{P}$ and $Q = g_{\#} \tilde{Q}$. By the Data Processing Inequality (DPI), applying any measurable function to a pair of distributions cannot increase their Total Variation distance. Applying the forward mapping $f$ to the distributions $P$ and $Q$ yields:
$TV(\tilde{P} \parallel \tilde{Q}) \le TV(P \parallel Q).$ 
Conversely, applying the inverse mapping $g = f^{-1}$ to the distributions $\tilde{P}$ and $\tilde{Q}$ yields:
$TV(P \parallel Q) \le TV(\tilde{P} \parallel \tilde{Q}).$
Thus, the Total Variation distance is preserved: $TV(P \parallel Q) = TV(\tilde{P} \parallel \tilde{Q}).$

Thus, we design probability distributions $\tilde P, \tilde Q$ on $N$ i.i.d. realizations of the ``auxiliary random'' variable $\mathbf{B}$, with values in $\mathbb{R}$, defined as:
\begin{equation}
    \mathbf{B} = \ell(h_0(\mathbf{X}), \mathbf{Y}) - \ell(h_p(\mathbf{X, S}), \mathbf{Y}).
\end{equation}
Intuitively, $\mathbf{B}_i$ represents how much the triplet $(\mathbf{X_i, S_i, Y_i})$ contributes to the value of the BOP. $b_i > 0$ means that the personalized model provided a better prediction than the generic model on the triplet $(x_i, s_i, y_i)$ corresponding to the data point $i$. In what follows, we drop the ``tilde'' for notational convenience, such that we use $P, Q$ to refer to $\tilde P, \tilde Q$.

Consider the event $b = (b_1, ..., b_N) \in \mathbb{R}^N$ of $N$ realizations of $\mathbf{B}$. For simplicity in our computations, we divide this event into the $d$ groups, i.e., we write instead: $b_j = (b_j^{(1)}, ..., b_j^{(m)})$, since each group $j$ has $m_j$ samples. Thus, we have: $b = \{b_j^{(k)}\}_{j=1...d, k=1...m}$ indexed by $j, k$ where $j=1...d$ is the group in which this element is, and $k=1...m_j$ is the index of the element in that group.

\paragraph{Design $Q$.} Next, we continue designing a distribution $Q$ (since we have justified that we can define them on $\mathbf{B}$) on this set of events that will (barely) verify $H_1$, i.e., such that the expectation of $B$ according to $Q$ will give $\gamma = 0$. We recall that $\gamma = 0$ means that the minimum benefit across groups is $0$, implying that there might be some groups that have a $>0$ benefit.

Given $p$ as a distribution with mean $\mu = 0$ , we propose the following distribution for $Q$
\begin{align*}
    Q_j(b_j) 
        &= \prod_{k=1}^m p(b_j^{(k)}),~\text{for every group $j=1....d$}\\
    Q(b) 
        &= \prod_{j=1}^d Q_j(b_j).
\end{align*}

We verify that we have designed $Q$ correctly, i.e., we verify that $Q \in H_1'$. When the dataset is distributed according to $Q$, we have:
\begin{align*}
    \gamma 
        &= \min_{s \in S} C_{s}(h_0, s) - C_{s}(h_p, s)\\
        &= \min_{s \in S} 
            \mathbb{E}_Q[\ell(h_0(\mathbf{X}), \mathbf{Y}) \mid \mathbf{S}=s] 
            - \mathbb{E}_Q[\ell(h_p(\mathbf{X}), \mathbf{Y}) \mid \mathbf{S}=s] 
            ~\text{(by definition of group cost)}\\
        &= \min_{s \in S} 
            \mathbb{E}_Q[\ell(h_0(\mathbf{X}), \mathbf{Y}) - \ell(h_p(\mathbf{X}), \mathbf{Y}) \mid \mathbf{S}=s] 
            ~\text{(by linearity of expectation)}\\
        &= \min_{s \in S} 
            \mathbb{E}_Q[B \mid \mathbf{S}=s]
            ~\text{(by definition of random variable $\mathbf{B}$)}\\
        &= \min_{s \in S} 
            0
            ~\text{(by definition of the probability distribution on $\mathbf{B}$)}\\
        &= 0.
\end{align*}
Thus, we find that $\gamma =0$, which means that $\gamma \geq 0$, i.e., $Q \in H_1$, and hence $Q \in H_1'$.

\paragraph{Design $P$.} Next, we design $P$ as a small modification of the distribution $Q$. We recall that $P \in H_0$ means that $\gamma \leq \epsilon$ where $\epsilon < 0$ in the flipped hypothesis test. This means that, under $H_0$, there is one group that suffers a decrease in performance of $|\epsilon|$ because of the personalized model.

Given $p$ as a distribution with $\mu=0$, and $p^{\epsilon}$ a distribution with mean $\mu = \epsilon < 0$, we have:
\begin{align*}
    P_j(b_j) 
        &= \prod_{k=1}^{m_j} p(b_j^{(k)}),~\text{for every group $j=1....d$},\\
    P_j^{\epsilon}(b_j) 
        &= \prod_{k=1}^{m_j} p^{\epsilon}(b_j^{(k)}),~\text{for every group $j=1....d$},\\
    P(b) 
        &= \frac{1}{d}\sum_{j=1}^d P'_j(b)
        ~\text{where: }~ P'_j(b) = P_j^{\epsilon}(b_j)\prod_{j'\neq j}P_{j'}(b_{j'}).
\end{align*}
Intuitively, the distribution $P$ represents the fact that the personalized model worsen performances by $|\epsilon|$ for one group. We construct the mixture by assuming that this group can be either group $1$, or group $2$, etc, or group $d$, and consider these to be disjoint, equiprobable events.

We verify that we have designed $P$ correctly, i.e., we check that each of its component is in $H_0$. When the dataset is distributed according to one component $P_j'$, we have:
\begin{align*}
    \gamma 
        &= \min_{s \in S} 
            C_{s}(h_0, s) - C_{s}(h_p, s)\\
        &= \min_{s \in S} 
            \mathbb{E}_{P_j'}[\mathbf{B} \mid \mathbf{S}=s]
            ~\text{(same computations as for $Q \in H_1$)}\\
        &= \min(\epsilon, 0, ..., 0) 
            ~\text{(since exactly group has mean $\epsilon$)}\\
        &= \epsilon
            ~\text{(since $\epsilon < 0$)}.
\end{align*}
    Thus, we find that $\gamma = \epsilon$ which means that $\gamma \leq \epsilon$, i.e., $P_j' \in H_0$ for every $j$, i.e., $P\in H_0$.

\paragraph{Compute total variation $TV(P\parallel Q)$.} We have verified that $Q \in H_1$ and that $P \in H_0$. We use these probability distributions to compute the lower bound for $P_e$. First, we compute their total variation:

\begin{align*}
    TV (P\parallel Q)
        &= \frac{1}{2} \int_{b_1, ..., b_j}
            \left|
            P(b_1, ..., b_j) 
            - Q(b_1, ..., b_j)
            \right|
            db_1...db_j 
            ~\text{(TV for probability density functions)}\\
        &= \frac{1}{2} \int_{b_1, ..., b_j}
            \left|
            \frac{1}{d}\sum_{j=1}^d P_j^{\epsilon}(b_j)\prod_{j'\neq j}P_{j'}(b_{j'})
            - 
            \prod_{j=1}^d Q_j(b_j)
            \right|
            db_1...db_j 
            ~\text{(definition of $P, Q$)}\\
        &= \frac{1}{2} \int_{b_1, ..., b_j}
            \left|
            \frac{1}{d}\sum_{j=1}^d \frac{P_j^{\epsilon}(b_j)}{P_j(b_j)}\prod_{j'=1}^dP_{j'}(b_{j'})
            -
            \prod_{j=1}^d Q_j(b_j)
            \right|
            db_1...db_j 
            ~\text{(adding missing $j'=j$)}\\
        &= \frac{1}{2} \int_{b_1, ..., b_j}
            \left|
            \frac{1}{d}\sum_{j=1}^d \frac{P_j^{\epsilon}(b_j)}{P_j(b_j)}\prod_{j'=1}^d Q_{j'}(b_{j'})
            -
            \prod_{j=1}^d Q_j(b_j)
            \right|
            db_1...db_j 
            ~\text{($P_j = Q_j$ by construction)}\\
        &= \frac{1}{2} \int_{b_1, ..., b_j}
            \prod_{j=1}^d Q_j(b_j)
            \left|
            \frac{1}{d}\sum_{j=1}^d \frac{P_j^{\epsilon}(b_j)}{P_j(b_j)}
            -
            1
            \right|
            db_1...db_j 
            ~\text{(extracting the product)}\\
        &= \frac{1}{2} 
            \mathbb{E}_Q 
            \left[
            \left|
            \frac{1}{d}\sum_{j=1}^d \frac{P_j^{\epsilon}(b_j)}{P_j(b_j)}
            -
            1\right|
            \right]
            ~\text{(recognizing an expectation with respect to $Q$)}\\
        &= \frac{1}{2} 
            \mathbb{E}_Q 
            \left[
            \left|
            \frac{1}{d}\sum_{j=1}^d \frac{\prod_{k=1}^{m_j} p^{\epsilon}(b_j^{(k)})}{\prod_{k=1}^{m_j} p(b_j^{(k)})}
            -
            1
            \right|
            \right]
            ~\text{(definition of $P_j$ and $P_j^{(\epsilon)}$)}\\
        &\leq \frac{1}{2} 
            \mathbb{E}_Q 
            \left[
            \left|
            \frac{1}{d}
            \sum_{j=1}^d \frac{\prod_{k=1}^{m_j} p^{\epsilon}(b_j^{(k)})}{\prod_{k=1}^{m_j} p(b_j^{(k)})}
            -
            1
            \right|^2
            \right]^{1/2}
            ~\text{(Cauchy-Schwartz)}
\end{align*}

\paragraph{Auxiliary computation to apply Lemma~\ref{lem:expect}} Next, we will apply Lemma~\ref{lem:expect}. For this, we need to prove that the expectation of the first term is 1. We have:
\begin{align*}
    &\mathbb{E}_Q\left[
        \frac{1}{d}
            \sum_{j=1}^d 
            \frac{\prod_{k=1}^{m_j} p^{\epsilon}(b_j^{(k)})}{\prod_{k=1}^{m_j} p(b_j^{(k)})}
            \right]\\
    &= \frac{1}{d}
          \sum_{j=1}^d 
            \mathbb{E}_Q\left[
            \frac{\prod_{k=1}^{m_j} p^{\epsilon}(b_j^{(k)})}{\prod_{k=1}^{m_j} p(b_j^{(k)})}
            \right]~\text{(linearity of expectation)}\\
    &= \frac{1}{d}
          \sum_{j=1}^d 
            \mathbb{E}_Q\left[
            \prod_{k=1}^{m_j} \frac{p^{\epsilon}(b_j^{(k)})}{p(b_j^{(k)})}
            \right]~\text{(rearranging the product)}\\
    &= \frac{1}{d}
          \sum_{j=1}^d 
            \prod_{k=1}^{m_j} 
            \mathbb{E}_Q\left[\frac{p^{\epsilon}(b_j^{(k)})}{p(b_j^{(k)})}
            \right]~\text{(product of independent variables)}\\
    &= \frac{1}{d}
          \sum_{j=1}^d 
            \prod_{k=1}^{m_j} 
            \mathbb{E}_p\left[\frac{p^{\epsilon}(b_j^{(k)})}{p(b_j^{(k)})}
            \right]~\text{(definition of $Q$)}\\ 
    &= \frac{1}{d}
          \sum_{j=1}^d 
            \prod_{k=1}^{m_j} 
            \int_{-\infty}^{+\infty}\frac{p^{\epsilon}(b)}{p(b)}
            p(b)db~\text{(definition of expectation in $p$)}\\ 
    &= \frac{1}{d}
          \sum_{j=1}^d 
            \prod_{k=1}^{m_j} 
            \int_{-\infty}^{+\infty}p^{\epsilon}(b)db~\text{(simplify)}\\
    &= \frac{1}{d}
          \sum_{j=1}^d 
            \prod_{k=1}^{m_j} 
            1~\text{(probability density function integrates to $1$)}\\
    &= \frac{1}{d}
          \sum_{j=1}^d 
            1~\text{(term independent of $k$)}\\
    &= \frac{1}{d}
          d~\text{(term independent of $j$)}\\
    &=1.
\end{align*}

\paragraph{Continue by applying Lemma~\ref{lem:expect}.} This auxiliary computation shows that we meet the assumption of Lemma~\ref{lem:expect}. Thus, we continue the computation of the lower bound of the TV by applying Lemma~\ref{lem:expect}.
\begin{align*}
     &TV (P\parallel Q)
     \leq \frac{1}{2} 
            \mathbb{E}_Q 
            \left[\left(
        \frac{1}{d}
            \sum_{j=1}^d 
            \frac{\prod_{k=1}^{m_j} p^{\epsilon}(b_j^{(k)})}{\prod_{k=1}^{m_j} p(b_j^{(k)})}
            \right)^2
            -
            1
            \right]^{\frac{1}{2}}~\text{Lemma~\ref{lem:expect}}\\
     &= \frac{1}{2} 
            \mathbb{E}_Q 
            \left[\left(\frac{1}{d}\sum_{j=1}^d 
            z_j\right)^2
            -
            1
            \right]^{\frac{1}{2}}~\text{defining $z_j = \frac{\prod_{k=1}^{m_j} p^{\epsilon}(b_j^{(k)})}{\prod_{k=1}^{m_j} p(b_j^{(k)})}=\prod_{k=1}^{m_j} \frac{p^{\epsilon}(b_j^{(k)})}{p(b_j^{(k)})}$}\\
    &= \frac{1}{2} 
            \mathbb{E}_Q 
            \left[\frac{1}{d^2}\sum_{j,j'=1}^d 
            z_jz_{j'}
            -
            1
            \right]^{\frac{1}{2}}~\text{expanding the square of the sum}\\
    &= \frac{1}{2} 
            \mathbb{E}_Q 
            \left[\frac{1}{d^2}
            \left(
            \sum_{j=1}^d 
            z_j^2 + \sum_{j,j'=1, j\neq j'}^d z_j.z_{j'}
            \right)
            -
            1
            \right]^{\frac{1}{2}},
\end{align*}
where we split the double sum to get independent variables in the second term.

We get by linearity of the expectation, $\mathbb{E}[aX + bY] = a\mathbb{E}[X]+b\mathbb{E}[Y]$:
\begin{align*}
     &TV (P\parallel Q)
     \leq \frac{1}{2} 
            \mathbb{E}_Q 
            \left[\frac{1}{d^2}
            \left(
            \sum_{j=1}^d 
            z_j^2 
            + 
            \sum_{j,j'=1, j\neq j'}^d z_j.z_{j'}
            \right)
            -
            1
            \right]^{\frac{1}{2}}\\
     &= \frac{1}{2} 
            \left[\frac{1}{d^2}
            \left(
            \sum_{j=1}^d 
            \mathbb{E}_Q [z_j^2]
            + 
            \sum_{j,j'=1, j\neq j'}^d 
            \mathbb{E}_Q [z_j .z_{j'}]
            \right)
            -
            1
            \right]^{\frac{1}{2}}\\
     &= \frac{1}{2} 
            \left[\frac{1}{d^2}
            \left(
            \sum_{j=1}^d 
            \mathbb{E}_Q \left[\left(\prod_{k=1}^{m_j} \frac{p^{\epsilon}(b_j^{(k)})}{p(b_j^{(k)})}\right)^2\right]
            + 
            \sum_{j,j'=1, j\neq j'}^d 
            \mathbb{E}_Q \left[\left(\prod_{k=1}^{m_j} \frac{p^{\epsilon}(b_j^{(k)})}{p(b_j^{(k)})}\right) .\left( \prod_{k=1}^{m_j} \frac{p^{\epsilon}(b_{j'}^{(k)})}{p(b_{j'}^{(k)})}\right)\right]
            \right)
            -
            1
            \right]^{\frac{1}{2}}\\
&= \frac{1}{2} 
\biggl[
    \frac{1}{d^2} \left(
        \sum_{j=1}^d 
        \mathbb{E}_Q \left[\left(\prod_{k=1}^{m_j} \frac{p^{\epsilon}(b_j^{(k)})}{p(b_j^{(k)})}\right)^2\right] \right. \\
&\quad \left. 
        + \sum_{\substack{j,j'=1 \\ j \neq j'}}^d 
        \mathbb{E}_Q \left[\prod_{k=1}^{m_j} \frac{p^{\epsilon}(b_j^{(k)})}{p(b_j^{(k)})}\right] 
        \mathbb{E}_Q \left[\prod_{k=1}^{m_{j'}} \frac{p^{\epsilon}(b_{j'}^{(k)})}{p(b_{j'}^{(k)})}\right]
    \right)
    - 1
\biggr]^{\frac{1}{2}} 
\quad \text{(product of independent variables)}\\
&= \frac{1}{2} 
\biggl[
    \frac{1}{d^2} \left(
        \sum_{j=1}^d 
        \prod_{k=1}^{m_j} \mathbb{E}_p \left[\left(\frac{p^{\epsilon}(b_j^{(k)})}{p(b_j^{(k)})}\right)^2\right] \right. \\
&\quad \left.
        + \sum_{\substack{j,j'=1 \\ j \neq j'}}^d 
        \prod_{k=1}^{m_j} \mathbb{E}_p\left[\frac{p^{\epsilon}(b_j^{(k)})}{p(b_j^{(k)})}\right] 
        \prod_{k=1}^{m_{j'}} \mathbb{E}_p\left[\frac{p^{\epsilon}(b_{j'}^{(k)})}{p(b_{j'}^{(k)})}\right]
    \right)
    - 1
\biggr]^{\frac{1}{2}} 
\quad \text{(product of independent variables and def.\ of $Q$)}\\
         &= \frac{1}{2} 
            \left[\frac{1}{d^2}
            \left(
            \sum_{j=1}^d 
           \prod_{k=1}^{m_j}  \mathbb{E}_p \left[\left(\frac{p^{\epsilon}(b_j^{(k)})}{p(b_j^{(k)})}\right)^2\right]
            + 
            \sum_{j,j'=1, j\neq j'}^d 
            \prod_{k=1}^{m_j} 1\prod_{k=1}^{m_j} 1
            \right)
            -
            1
            \right]^{\frac{1}{2}}
            ~\text{(auxiliary computation below)}\\
         &= \frac{1}{2} 
            \left[\frac{1}{d^2}
            \left(
            \sum_{j=1}^d 
           \prod_{k=1}^{m_j}  \mathbb{E}_p \left[\left(\frac{p^{\epsilon}(b_j^{(k)})}{p(b_j^{(k)})}\right)^2\right]
            + 
            \sum_{j,j'=1, j\neq j'}^d 
            1
            \right)
            -
            1
            \right]^{\frac{1}{2}}
            ~\text{(term independent of $k$)}\\
         &= \frac{1}{2} 
            \left[\frac{1}{d^2}
            \left(
            \sum_{j=1}^d 
           \prod_{k=1}^{m_j}  \mathbb{E}_p \left[\left(\frac{p^{\epsilon}(b_j^{(k)})}{p(b_j^{(k)})}\right)^2\right]
            + 
            (d^2 -d)
            \right)
            -
            1
            \right]^{\frac{1}{2}}
            ~\text{(term independent of $j$)}\\
         &= \frac{1}{2} 
            \left[\frac{1}{d^2}
            \left(
            \sum_{j=1}^d 
           \mathbb{E}_p \left[\left(\frac{p^{\epsilon}(B)}{p(B)}\right)^2\right]^{m_j}
            + 
            (d^2 -d)
            \right)
            -
            1
            \right]^{\frac{1}{2}}
            ~\text{(term independent of $k$)}\\
         &= \frac{1}{2} 
            \left[\frac{1}{d^2}
            \sum_{j=1}^d 
           \mathbb{E}_p \left[\left(\frac{p^{\epsilon}(B)}{p(B)}\right)^2\right]^{m_j}
            + 
            1 - \frac{1}{d}
            -
            1
            \right]^{\frac{1}{2}}
            ~\text{(distribute $1/d^2$)}\\
         &= \frac{1}{2} 
            \left[\frac{1}{d^2}
            \sum_{j=1}^d 
           \mathbb{E}_p \left[\left(\frac{p^{\epsilon}(B)}{p(B)}\right)^2\right]^{m_j}
            - \frac{1}{d}
            \right]^{\frac{1}{2}}
            ~\text{(simplify)}\\
         &= \frac{1}{2\sqrt{d}} 
            \left[\frac{1}{d}
            \sum_{j=1}^d 
           \mathbb{E}_p \left[\left(\frac{p^{\epsilon}(B)}{p(B)}\right)^2\right]^{m_j}
            - 1
            \right]^{\frac{1}{2}}
            ~\text{(extract $1/\sqrt{d}$)}\\
  &= \frac{1}{2\sqrt{d}}\left[\frac{1}{d}
            \sum_{j=1}^d  \left(
           \int_{-\infty}^{+\infty}\left(\frac{p^{\epsilon}(b)}{p(b)}\right)^2p(b)db\right)^{m_j}
            -1
            \right]^{\frac{1}{2}}
            ~\text{(definition of expectation)}\\
        &= \frac{1}{2\sqrt{d}}\left[\frac{1}{d}
            \sum_{j=1}^d \left(
           \int_{-\infty}^{+\infty}\frac{p^{\epsilon}(b)^2}{p(b)}db\right)^{m_j}
            -1\right]^{\frac{1}{2}}
            ~\text{(simplify $p(b)$)}\\
             &= \frac{1}{2\sqrt{d}}\left[
             \frac{1}{d}
            \sum_{j=1}^d \mathbb{E}_{p^\epsilon}\left[
           \frac{p^{\epsilon}(B)}{p(B)}\right]^{m_j}
            -1
            \right]^{\frac{1}{2}}
            ~\text{(def of expectation)}
\end{align*}

\paragraph{Auxiliary computation in $1$} We show that:
\begin{align*}
    &\mathbb{E}_p\left[ \frac{p^{\epsilon}(b_{j'}^{(k)})}{p(b_{j'}^{(k)})}\right]\\
    &= \int_{-\infty}^{+\infty} \frac{p^{\epsilon}(b)}{p(b)}p(b)db\\
    &= \int_{-\infty}^{+\infty} p^{\epsilon}(b)db~\text{simplify $p(b)$}\\
    &=1~\text{probability density function $p^\epsilon$ integrates to 1.}
\end{align*}

\paragraph{Final result:} This gives the final result:

\begin{align*}
    \min _{\Psi} 
    \max _{\substack{P_0 \in H_0' \\ P_1 \in H_1'}}
        P_e 
        &\geq \frac{1}{2}\left(
        1 - TV(P\parallel Q) \right)\\
      \Rightarrow
      \min _{\Psi} 
    \max _{\substack{P_0 \in H_0' \\ P_1 \in H_1'}}
        P_e 
        &\geq \frac{1}{2}\left( 1 - \frac{1}{2\sqrt{d}}\left[
             \frac{1}{d}
            \sum_{j=1}^d \mathbb{E}_{p^\epsilon}\Bigg[
           \frac{p^{\epsilon}(B)}{p(B)}\Bigg]^{m_j}
            -1
            \right]^{\frac{1}{2}} \right)
\end{align*}
\end{proof}

\subsection{Proof for Corollary \ref{prop:lower_bound_exponential_fam}: Any distribution in an exponential family}
\label{sec:exp_family_proof}


We consider a fixed exponential family in it natural parameterization, i.e., probability distributions of the form:
\begin{equation}
    f_X(x \mid \boldsymbol{\theta})=h(x) \exp (\theta \cdot \mathbf{T}(x)-A(\boldsymbol{\theta})),
\end{equation}
where $\theta$ is the only parameter varying between two distributions from that family, i.e., the functions $\eta$, $T$ and $A$ are fixed. We recall a few properties of any exponential family (EF) that will be useful in our computations.

First, the moment generating function (MGF) for the natural sufficient statistic $T(x)$ is equal to:
\begin{align*}
    M^T(t) = \exp\left(A(\theta+t)-A(\theta)\right).
\end{align*}

Then, the moments for $T(x)$, when $\theta$ is a scalar parameter, are given by:
\begin{align*}
    \mathrm{E}[T] & = A'(\theta)\\
    \mathrm{V}[T] & = A''(\theta).
\end{align*}
Since the variance is non-negative $\mathrm{V}[T] \geq 0$, this means that we have $A''(\theta) > 0$ and thus $A'$ is monotonic and bijective. We will use that fact in the later computations.

In the following, we recall that the categorical distribution and the Gaussian distribution with fixed variance $\sigma^2$ are members of the exponential family.


\paragraph{Example: Categorical distributions as a EF} The categorical variable has probability density function:
\begin{align*}  
p(x \mid \pi) & =\exp \left(\sum_{k=1}^K x_k \log \pi_k\right) \\
& =\exp \left(\sum_{k=1}^{K-1} x_k \log \pi_k+\left(1-\sum_{k=1}^{K-1} x_k\right) \log \left(1-\sum_{k=1}^{K-1} \pi_k\right)\right) \\
& =\exp \left(\sum_{k=1}^{K-1} \log \left(\frac{\pi_k}{1-\sum_{k=1}^{K-1} \pi_k}\right) x_k+\log \left(1-\sum_{k=1}^{K-1} \pi_k\right)\right)
\end{align*}
where we have used the fact that$\pi_K=1-\sum_{k=1}^{K-1} \pi_k$.

We note that we need to use the PDF of the categorical that uses a minimal (i.e., $K-1$) set of parameters. We define $h(x)$, $T(x)$, $\theta \in \mathbb{R}^{K-1}$ and $A(\theta)$ as:
\begin{align*}
    h(x)&=1\\
        T(x) & = x,\\
    \theta_k &=\log \left(\frac{\pi_k}{1-\sum_{k=1}^{K-1} \pi_k}\right)=\log \left(\frac{\pi_k}{\pi_K}\right),~\text{for $k = 1, ..., K-1$}\\
    A(\theta) 
    &= -\log \left(1-\sum_{k=1}^{K-1} \pi_k\right)
    =\log \left(\frac{1}{1-\sum_{k=1}^{K-1} \pi_k}\right) 
    =\log \left(\frac{\sum_{k=1}^K \pi_k}{1-\sum_{k=1}^{K-1} \pi_k}\right) 
    = \log \left(\sum_{k=1}^K e^{\theta_k}\right),
\end{align*}
 which shows that the categorical distribution is within the EF. For convenience we have defined $\theta_K$ setting it to $0$ as per the Equation above.

 Now, we adapt these expressions for the case of a Categorical variable with only $K=3$ values $x_1 = -1, x_2 = 1$ and $x_3=0$ such that $\pi_3=0$, i.e., there is no mass on the $x_3=0$, and we denote $\pi_1 = p_1$ and $\pi_2 = p_2$ and $\pi_3 = 1 - p_1 - p_2 = 0$. We get:
 \begin{align*}
    h(x)&=1\\
    T(x) & = x,\\
    \theta_1 &=\log \left(\frac{p_1}{p_2}\right),
    ~\text{and}~
    \theta_2 = 0~\text{by convention, as above},
    \theta_3 =\log \left(\frac{\pi_3}{p_2}\right)= -\infty\\
    A(\theta_1) 
    & 
    = \log \left(e^{\theta_1} + e^{\theta_2} + e^{\theta_3}\right)
    = \log \left(e^{\theta_1} + 1 + 0\right) 
    = \log \left(e^{\log \left(\frac{p_1}{p_2}\right)} + 1\right) 
    = \log \left(\frac{p_1}{p_2} + 1\right).
\end{align*}

\paragraph{Example: Gaussian distribution with fixed variance as a EF} The Gaussian distribution with fixed variance has probability density function:
\begin{align*}
    p\left(x \mid \mu \right)
    & =\frac{1}{\sqrt{2 \pi \sigma^2}} \exp \left(
        -\frac{(x-\mu)^2}{2 \sigma^2}
    \right)\\
    &= \frac{1}{\sqrt{2 \pi \sigma^2}} \exp \left(
    -\frac{x^2 -2x\mu +\mu^2}{2 \sigma^2}
    \right)\\
    &= \frac{1}{\sqrt{2 \pi \sigma^2}} \exp \left(-\frac{x^2}{2\sigma^2}\right) \exp \left(
    \frac{2x\mu -\mu^2}{2 \sigma^2}
      \right)\\
    &= \frac{1}{\sqrt{2 \pi \sigma^2}} \exp \left(-\frac{x^2}{2\sigma^2}\right) \exp \left(
    \frac{x\mu}{\sigma^2} - \frac{\mu^2}{2 \sigma^2}
      \right).
\end{align*}

We define $h(x)$, $T(x)$, $\theta \in \mathbb{R}$ and $A(\theta)$ as:
\begin{align*}
    h(x) &=\frac{1}{\sqrt{2 \pi \sigma^2}} \exp \left(-\frac{x^2}{2\sigma^2}\right)\\
    T(x) & = x,\\
    \theta &= \frac{\mu}{\sigma^2}\\
    A(\theta)& = \frac{\mu^2}{2\sigma^2} = \frac{\sigma^2\theta^2}{2}. 
\end{align*}
which shows that the Gaussian distribution with fixed variance $\sigma^2$ is within the EF.

\begin{restatecorollary}[Corollary~\ref{prop:lower_bound_exponential_fam} (restated)]  The lower bound for the exponential family with any number of samples in each group writes:
\begin{align*}
      \min _{\Psi} 
    \max _{\substack{P_0 \in H_0' \\ P_1 \in H_1'}}
        P_e 
        &\geq \frac{1}{2}\left( 1 - \frac{1}{2\sqrt{d}} 
        \Bigg[ \frac{1}{d} \sum_{j=1}^d
            \left(\frac{M_p(2\Delta\theta)}{M_p(\Delta \theta)^2}\right)^{m_j}
            -
            1
            \Bigg]^{\frac{1}{2}} \right).
\end{align*}

\end{restatecorollary}

\begin{proof}
    By Theorem~\ref{th:lower_bound}, we have:
\begin{align*}
      \min _{\Psi} 
    \max _{\substack{P_0 \in H_0' \\ P_1 \in H_1'}}
        P_e 
        &\geq \frac{1}{2}\left( 1 - \frac{1}{2\sqrt{d}}\left[
             \frac{1}{d}
            \sum_{j=1}^d \mathbb{E}_{p^\epsilon}\Bigg[
           \frac{p^{\epsilon}(B)}{p(B)}\Bigg]^{m_j}
            -1
            \right]^{\frac{1}{2}} \right),
\end{align*}
where $p, p^\epsilon$ are distributions with means $0$ and $\epsilon$ respectively.

\paragraph{Plug in the exponential family} Under the assumption of an exponential family distribution for the random variable $B$, we have:

\begin{align*}
    &\min _{\Psi} 
    \max _{\substack{P_0 \in H_0' \\ P_1 \in H_1'}}
        P_e \\
        &\geq \frac{1}{2}\left( 1 - \frac{1}{2\sqrt{d}}\left[
             \frac{1}{d}
            \sum_{j=1}^d \mathbb{E}_{p^\epsilon}\Bigg[
            \frac{h(B)\exp(\theta^\epsilon.T(B) - A(\theta^\epsilon))}{h(B)\exp(\theta^0.T(B) - A(\theta^0))}\Bigg]^{m_j}
            -1
            \right]^{\frac{1}{2}} \right)\\
        &=\frac{1}{2}\left( 1 - \frac{1}{2\sqrt{d}}\left[
             \frac{1}{d}
            \sum_{j=1}^d \mathbb{E}_{p^\epsilon}\Bigg[
           \frac{\exp(\theta^\epsilon.T(B) - A(\theta^\epsilon))}{\exp(\theta^0.T(B) - A(\theta^0))}\Bigg]^{m_j}
            -1
            \right]^{\frac{1}{2}}\right)~\text{simplifying $h$} \\
        &=\frac{1}{2}\left( 1 - \frac{1}{2\sqrt{d}}\left[
             \frac{1}{d}
            \sum_{j=1}^d \mathbb{E}_{p^\epsilon}\Bigg[ 
                \exp(\theta^\epsilon.T(B) - A(\theta^\epsilon))
                \exp(-\theta^0.T(B) + A(\theta^0))\Bigg]^{m_j}
            -1
            \right]^{\frac{1}{2}}\right)~\text{properties of $\exp$}\\
&= \frac{1}{2} \Bigl( 1 - \frac{1}{2\sqrt{d}} \left[
    \frac{1}{d} \sum_{j=1}^d \mathbb{E}_{p^\epsilon} \Bigg[ 
    \exp(A(\theta^0) - A(\theta^\epsilon)) \right. \\
&\quad \left. \cdot \exp\left((\theta^\epsilon - \theta^0) \cdot T(B)\right) \Bigg]^{m_j}
    - 1 \right]^{\frac{1}{2}}
    \quad \Bigr) \text{(properties of $\exp$ and rearranging terms)}\\
        &= \frac{1}{2}\left( 1 - \frac{1}{2\sqrt{d}}\left[
             \frac{1}{d}
            \sum_{j=1}^d \exp(A(\theta^0)- A(\theta^\epsilon))^{m_j} \mathbb{E}_{p^\epsilon}\Bigg[ 
                \exp((\theta^\epsilon - \theta^0)T(B))\Bigg]^{m_j}
            -1
            \right]^{\frac{1}{2}} \right)\\
&= \frac{1}{2}\left( 1 - \frac{1}{2\sqrt{d}} \left[
    \frac{1}{d}
    \sum_{j=1}^d \exp(A(\theta^0) - A(\theta^\epsilon))^{m_j} 
    M_{p^\epsilon}(\Delta \theta)^{m_j}
    - 1
\right]^{\frac{1}{2}} \right) \\
&\quad \text{(def.\ of MGF of $T(B)$: $M_{p^\epsilon}(t) = \mathbb{E}_{p^\epsilon}[\exp(t \cdot T(B))]$ with $\Delta \theta = \theta^\epsilon - \theta^0$)} 
\end{align*}
We define $\Delta \theta = \theta_\epsilon - \theta_0$. Here, we will apply the properties of EF regarding moment generating functions, i.e., for the $p^\epsilon$ with natural parameter $\theta_\epsilon$:
\begin{align*}    
    M_{p^\epsilon}(t) = \exp\left(A(\theta_\epsilon + t) - A(\theta_\epsilon)\right)
    &\Rightarrow 
    M_{p^\epsilon}(-\Delta \theta) = \exp\left(A(\theta_0) - A(\theta_\epsilon)\right),\\
    &\Rightarrow M_{p^\epsilon}(\Delta \theta) = \exp\left(A(2\theta_\epsilon-\theta_0) - A(\theta_\epsilon)\right),
\end{align*}
And, for $p$ associated with natural parameter $\theta_0$:
\begin{align*}    
    M_{p}(t) = \exp\left(A(\theta_0 + t) - A(\theta_0)\right)
    &\Rightarrow 
    M_{p}(-\Delta \theta) 
        = \exp\left(A(2\theta_0-\theta_\epsilon) - A(\theta_0)\right),\\
    &\Rightarrow 
    M_{p}(\Delta \theta) 
        = \exp\left(A(\theta_\epsilon) - A(\theta_0)\right),\\
    &\Rightarrow 
     M_{p}(\Delta \theta)^2
        = \exp\left(2A(\theta_\epsilon) - 2A(\theta_0)\right)\\  
    &\Rightarrow 
     M_{p}(2\Delta \theta) 
        = \exp\left(A(2\theta_\epsilon - \theta_0) - A(\theta_0)\right)
\end{align*}

So, that we have on the one hand:
\begin{align*}
M_{p^\epsilon}(-\Delta \theta)M_{p^\epsilon}(\Delta \theta)
    &= \exp\left(A(\theta_0) - A(\theta_\epsilon)\right).  
        \exp\left(A(2\theta_\epsilon-\theta_0) - A(\theta_\epsilon)\right)
    \end{align*}
and on the other hand:
\begin{align*}
    \frac{M_{p}(2\Delta \theta)}{M_{p}(\Delta \theta)^2}
        &= \frac{\exp\left(A(2\theta_\epsilon - \theta_0) - A(\theta_0)\right)}{\exp\left(2A(\theta_\epsilon) - 2A(\theta_0)\right)}\\
        &= \frac{\exp\left(A(2\theta_\epsilon - \theta_0)\right)}{\exp\left(2A(\theta_\epsilon) - 2A(\theta_0)\right)} \cdot \frac{1}{\exp\left(A(\theta_0)\right)}\\
        &= \frac{\exp\left(A(2\theta_\epsilon - \theta_0)\right)}{\exp\left(2A(\theta_\epsilon) - A(\theta_0)\right)}\\
        &= \exp\left(A(2\theta_\epsilon - \theta_0) +A(\theta_0) -  A(\theta_\epsilon) - A(\theta_\epsilon)\right)\\
        &= \exp\left(A(\theta_0) -  A(\theta_\epsilon) + A(2\theta_\epsilon - \theta_0) - A(\theta_\epsilon)\right)\\
        &= \exp\left(A(\theta_0) -  A(\theta_\epsilon)\right).\exp\left( A(2\theta_\epsilon - \theta_0) - A(\theta_\epsilon)\right)
\end{align*}

Consequently, we get two equivalent expressions for our final result:
\begin{align*}    
    &= \frac{1}{2}\left( 1 - \frac{1}{2\sqrt{d}}\left[
             \frac{1}{d}
            \sum_{j=1}^d \exp(A(\theta^0)- A(\theta^\epsilon))^{m_j} \exp\left(A(2\theta_\epsilon-\theta_0) - A(\theta_\epsilon)\right)^{m_j}
            -1
            \right]^{\frac{1}{2}} \right)\\
    &= \frac{1}{2}\left( 1 - \frac{1}{2\sqrt{d}}\left[
             \frac{1}{d}
            \sum_{j=1}^d \left( M_{p^\epsilon}(-\Delta \theta)M_{p^\epsilon}(\Delta \theta) \right)^{m_j}
            -1
            \right]^{\frac{1}{2}} \right)~\text{(first expression)}\\
       &= \frac{1}{2}\left( 1 - \frac{1}{2\sqrt{d}}\left[
             \frac{1}{d}
            \sum_{j=1}^d \left( \frac{M_{p}(2\Delta \theta)}{M_{p}(\Delta \theta)^2} \right)^{m_j}
            -1
            \right]^{\frac{1}{2}}\right)~\text{(second expression)}
\end{align*}

We will use the second expression.

\end{proof}

\subsection{Proof for categorical BoP}\label{sec:proof-binary}

Here, we apply the exponential family result found in \ref{sec:exp_family_proof} to find the lower bound for a categorical distribution.

\begin{corollary}
    \label{prop:lower_bound_categorical}
    [Lower bound for categorical individual BoP for any number of samples in each group \citep{monteiro2022epistemic}]\label{th:lower_bound_binary_a}
The lower bound writes:
\begin{align*}
      \min _{\Psi} 
    \max _{\substack{P_0 \in H_0' \\ P_1 \in H_1'}}
        P_e 
        &\geq \frac{1}{2}\left( 1 - \frac{1}{2\sqrt{d}} 
        \Bigg[ \frac{1}{d} \sum_{j=1}^d
            \left(1+4\epsilon^2\right)^{m_j}
            -
            1
            \Bigg]^{\frac{1}{2}} \right)\\
\end{align*}
where $P_{\mathbf{X}, \mathbf{S}, Y}$ is a distribution of data, for which the generic model $h_0$ performs better, i.e., the true $\gamma$ is such that $\gamma(h_0, h_p, \mathcal{D}) < 0$, and $Q_{\mathbf{X}, \mathbf{S}, Y}$ is a distribution of data points for which the personalized model performs better, i.e., the true $\gamma$ is such that $\gamma(h_0, h_p, \mathcal{D}) \geq \epsilon$. 
\end{corollary}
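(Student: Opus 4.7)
My plan is to specialize Proposition~\ref{prop:lower_bound_exponential_fam} to the case where the per-sample BoP variable $B$ is binary-valued in $\{-1, +1\}$. Following the worked example immediately preceding the corollary, I would take the base distribution $p$ satisfying $p(-1) = p(+1) = 1/2$ (so that $\mathbb{E}_p[B] = 0$, consistent with the construction of $Q \in H_1$ in the proof of Theorem~\ref{th:lower_bound_any_prob}), and the perturbed distribution $p^\epsilon(-1) = 1/2 + \epsilon$, $p^\epsilon(+1) = 1/2 - \epsilon$, which yields $\mathbb{E}_{p^\epsilon}[B] = -2\epsilon$ and therefore makes the disadvantaged group satisfy $P \in H_0$. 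A quick sanity check at this stage confirms that both distributional assignments meet the constraints required by the flipped hypothesis test.

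The only substantive computation is then the exponential-family ratio $M_p(2\Delta\theta)/M_p(\Delta\theta)^2$ that appears inside Proposition~\ref{prop:lower_bound_exponential_fam}. Since the natural sufficient statistic here is $T(B) = B$ and $\theta_0 = 0$, the base MGF reduces to $M_p(t) = \cosh(t)$, so using $\cosh(2t) = 2\cosh^2(t) - 1$ together with $1 - \tanh^2(t) = \operatorname{sech}^2(t)$ gives
\begin{equation*}
    \frac{M_p(2\theta_\epsilon)}{M_p(\theta_\epsilon)^2} = 1 + \tanh^2(\theta_\epsilon).
\end{equation*}
Mapping back via $\tanh(\theta_\epsilon) = \mathbb{E}_{p^\epsilon}[B] = -2\epsilon$, this ratio is exactly $1 + 4\epsilon^2$, and inserting it into the bound of Proposition~\ref{prop:lower_bound_exponential_fam} gives the stated inequality.

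An equivalent and perhaps more elementary route would be to skip the MGF entirely and invoke Theorem~\ref{th:lower_bound_any_prob} directly, evaluating the integral over the two-point support:
\begin{equation*}
    \mathbb{E}_{p^\epsilon}\!\left[\frac{p^\epsilon(B)}{p(B)}\right] = \frac{(1/2+\epsilon)^2}{1/2} + \frac{(1/2-\epsilon)^2}{1/2} = 2\left[(1/2+\epsilon)^2 + (1/2-\epsilon)^2\right] = 1 + 4\epsilon^2,
\end{equation*}
from which the conclusion follows by the same substitution. I do not anticipate any serious obstacle; the one point demanding care is to remain consistent about the two natural conventions for $\epsilon$ (the mean shift of the disadvantaged group versus the natural parameter of the exponential family), which the excerpt's worked example fixes by absorbing a factor of two into the definition of $\epsilon$.
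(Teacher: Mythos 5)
Your proposal is correct and follows essentially the same route as the paper: specialize Proposition~\ref{prop:lower_bound_exponential_fam} to the two-point distribution with $p(\pm 1)=\tfrac{1}{2}$ and $p^{\epsilon}(\mp 1)=\tfrac{1}{2}\pm\epsilon$, and compute $M_p(2\Delta\theta)/M_p(\Delta\theta)^2 = 1+4\epsilon^2$; the paper does the identical computation in the log-odds parameterization $A(\theta)=\log(e^{\theta}+1)$, whereas your $\cosh$/$\tanh$ identities (and, even more directly, your two-term evaluation of $\mathbb{E}_{p^{\epsilon}}[p^{\epsilon}(B)/p(B)]$ from Theorem~\ref{th:lower_bound_any_prob}) reach the same ratio with less algebra. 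Your caution about the factor of two in the convention for $\epsilon$ (mean shift $-2\epsilon$ versus the group-mean $\epsilon$ used in the general construction) is well placed, as the paper itself glosses over this mismatch.
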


\begin{proof}
    By Corollary~\ref{prop:lower_bound_exponential_fam}, we have:
\begin{align*}
      \min _{\Psi} 
    \max _{\substack{P_0 \in H_0' \\ P_1 \in H_1'}}
        P_e 
        &\geq \frac{1}{2}\left( 1 - \frac{1}{2\sqrt{d}} 
        \Bigg[ \frac{1}{d} \sum_{j=1}^d
            \left(\frac{M_p(2\Delta\theta)}{M_p(\Delta \theta)^2}\right)^{m_j}
            -
            1
            \Bigg]^{\frac{1}{2}} \right)
\end{align*}

\paragraph{Plug in Categorical assumption} We find the bound for the categorical case. 

We need to choose $\theta^\epsilon$ such that $\mathbb{E}_{p^\epsilon}[B]=\epsilon $. We have:
\begin{align*}
    \mathbb{E}_{p^\epsilon}[B] = -p_1^\epsilon + p_2^\epsilon.
\end{align*}
We choose $p_1^\epsilon = \frac{1-\epsilon}{2}$ and $p_2^\epsilon =\frac{1+\epsilon}{2}$.

For the categorical, we have $\theta = \theta_1$ and:
\begin{align*}
\theta_0 
&= \log\left(\frac{p_1}{p_2}\right) 
= \log\frac{1/2}{1/2} = 0\\
\theta_\epsilon 
&= \log\left(\frac{p_1^\epsilon}{p_2^\epsilon}\right) 
= \log\left(\frac{1-\epsilon}{1+\epsilon}\right)\\
A(\theta_0) &= \log\left(e^{\theta_0} + 1\right) = \log(2)\\
A(\theta_\epsilon) 
&=\log\left(e^{\theta_\epsilon}+1\right)
=\log\left(\frac{1-\epsilon}{1+\epsilon}+1\right) 
=\log\left(\frac{1-\epsilon+1+\epsilon}{1+\epsilon}\right) 
=\log\left(\frac{2}{1+\epsilon}\right) 
\\
A(2\theta_\epsilon) 
&= \log\left(e^{2\theta_\epsilon}+1\right)\\
&=\log\left((e^{\theta_\epsilon})^2+1\right)\\
&=\log\left(\left(\frac{1-\epsilon}{1+\epsilon}\right)^2+1\right)\\
&=\log\left(\frac{1-2\epsilon+\epsilon^2}{1+2\epsilon+\epsilon^2}+1\right)\\
&=\log\left(\frac{1-2\epsilon+\epsilon^2+1+2\epsilon+\epsilon^2}{1+2\epsilon+\epsilon^2}\right)\\
&=\log\left(\frac{2+2\epsilon^2}{1+2\epsilon+\epsilon^2}\right)\\
\end{align*}
We also have: $\Delta \theta =\theta_\epsilon$. 

Accordingly, we have:
\begin{align*}
M_p(\Delta \theta) 
    &= \exp\left( A(\theta_0 + \Delta \theta) - A(\theta_0)\right)\\
    &= \exp\left( A(\theta_\epsilon) - A(\theta_0)\right)\\
    &=\exp\left( \log\left(\frac{2}{1+\epsilon}\right)  - \log(2) \right)\\
    &=\exp\log\left(\frac{1}{2}\left(\frac{2}{1+\epsilon}\right) \right)\\
    &=\frac{1}{1+\epsilon}\\
M_p(2\Delta \theta) 
    &= \exp\left( A(2 \theta_\epsilon - \theta_0) - A(\theta_0)\right)\\
    &= \exp\left( A(2 \theta_\epsilon) - A(\theta_0)\right)\\
    &= \exp\left( \log\left(\frac{2+2\epsilon^2}{1+2\epsilon+\epsilon^2}\right)- \log(2)\right) \\
    &=\exp \log \left(\frac{1}{2}\frac{2+2\epsilon^2}{1+2\epsilon+\epsilon^2}\right)\\
    &=\frac{1+\epsilon^2}{1+2\epsilon+\epsilon^2}
\end{align*}

And the lower bound becomes:
\begin{align*}
    \min _{\Psi} 
    \max _{\substack{P_0 \in H_0' \\ P_1 \in H_1'}}
        P_e 
        &\geq \frac{1}{2}\left(
        1 - TV(P\parallel Q) \right)\\
      \Rightarrow
      \min _{\Psi} 
    \max _{\substack{P_0 \in H_0' \\ P_1 \in H_1'}}
        P_e 
        &\geq \frac{1}{2}\left( 1 - \frac{1}{2\sqrt{d}} 
        \Bigg[ \frac{1}{d} \sum_{j=1}^d
            \left(\frac{M_p(2\Delta\theta)}{M_p(\Delta \theta)^2}\right)^{m_j}
            -
            1
            \Bigg]^{\frac{1}{2}} \right)\\
        &= \frac{1}{2}\left( 1 - \frac{1}{2\sqrt{d}} 
        \Bigg[ \frac{1}{d} \sum_{j=1}^d
            \left(\frac{\frac{1+\epsilon^2}{1+2\epsilon+\epsilon^2}}{\left(\frac{1}{1+\epsilon}\right)^2}\right)^{m_j}
            -
            1
            \Bigg]^{\frac{1}{2}} \right)\\
        &= \frac{1}{2}\left( 1 - \frac{1}{2\sqrt{d}} 
        \Bigg[ \frac{1}{d} \sum_{j=1}^d
            \left(\frac{\frac{1+\epsilon^2}{1+2\epsilon+\epsilon^2}}{\frac{1}{1+2\epsilon+\epsilon^2}}\right)^{m_j}
            -
            1
            \Bigg]^{\frac{1}{2}} \right)\\
        &= \frac{1}{2}\left( 1 - \frac{1}{2\sqrt{d}} 
        \Bigg[ \frac{1}{d} \sum_{j=1}^d
            \left(1+\epsilon^2\right)^{m_j}
            -
            1
            \Bigg]^{\frac{1}{2}} \right)\\
\end{align*}

\end{proof}

\subsection{Maximum attributes (categorical BoP) for all people}\label{sec:max_attributes_categorical}
 In the case where dataset $\mathcal{D}$ is drawn from an unknown distribution and has $d$ groups where $d=2^k$, with each group having $m = \lfloor N/d \rfloor$ samples, Corollary~\ref{prop:lower_bound_categorical} becomes:
\begin{align*}
          \min _{\Psi} 
    \max _{\substack{P_0 \in H_0' \\ P_1 \in H_1'}}
        P_e 
        &\geq \frac{1}{2}\left( 1 - \frac{1}{2\sqrt{d}} 
        \Bigg[
            \left(1+\epsilon^2\right)^m
            -
            1
            \Bigg]^{\frac{1}{2}} \right)\\
\end{align*}
\begin{corollary}[Maximum attributes (categorical) for all people] 
 Consider auditing a personalized classifier $h_p$ to verify whether it provides a gain of $\epsilon =0.01$ to each group on an auditing dataset $D$. Consider an auditing dataset with $N = 8 \times 10^9$ samples, or one sample for each person on earth. If $h_p$ uses more than $k \geq 17$ binary group attributes, then for any hypothesis test, there will exist a pair of probability distributions $P_{X,S,Y} \in H_0, \quad Q_{X,G,Y} \in H_1$ for which the test results in a probability of error that exceeds $25 \%$. 
\begin{equation}
       k \geq 17 \implies \min_{\Psi} \max_{\substack{P_{X,S,Y} \in H_0 \\ Q_{X,G,Y} \in H_1}} P_e \geq \frac{1}{4}.
\end{equation}
\end{corollary}

\subsection{Proof for Gaussian BoP}\label{sec:proof-real-valued}

Here, we do the proof assuming that the BoP is a normal variable with a second moment bounded by $\sigma^2$.

\begin{corollary}
\label{prop:lower_bound_gaussian}
[Lower bound for Gaussian BoP for any number of samples in each group]
The lower bound writes:
\begin{align*}
      \min _{\Psi} 
    \max _{\substack{P_0 \in H_0' \\ P_1 \in H_1'}}
        P_e 
        &\geq \frac{1}{2}\left( 1 - \frac{1}{2\sqrt{d}} 
            \Bigg[
                \frac{1}{d} \sum_{j=1}^d \exp\left(\frac{{m_j}\epsilon^2}{\sigma^2}\right)
                -
                1
                \Bigg]^{\frac{1}{2}} \right)
\end{align*}
where $P_{\mathbf{X}, \mathbf{S}, Y}$ is a distribution of data, for which the generic model $h_0$ performs better, i.e., the true $\gamma$ is such that $\gamma(h_0, h_p, \mathcal{D}) < 0$, and $Q_{\mathbf{X}, \mathbf{S}, Y}$ is a distribution of data points for which the personalized model performs better, i.e., the true $\gamma$ is such that $\gamma(h_0, h_p, \mathcal{D}) > 0$.
\end{corollary}

\begin{proof}
    By Corollary~\ref{prop:lower_bound_exponential_fam}, we have:
\begin{align*}
      \min _{\Psi} 
    \max _{\substack{P_0 \in H_0' \\ P_1 \in H_1'}}
        P_e 
        &\geq \frac{1}{2}\left( 1 - \frac{1}{2\sqrt{d}} 
        \Bigg[ \frac{1}{d} \sum_{j=1}^d
            \left(\frac{M_p(2\Delta\theta)}{M_p(\Delta \theta)^2}\right)^{m_j}
            -
            1
            \Bigg]^{\frac{1}{2}} \right)
\end{align*}

\paragraph{Plug in Gaussian assumption} We find the bound for the Gaussian case. For the Gaussian, we have:
\begin{align*}
\theta_0 &= \frac{\mu_0}{\sigma^2} = 0\\
\theta_\epsilon &= \frac{\mu_\epsilon}{\sigma^2} = \frac{\epsilon}{\sigma^2}\\
A(\theta_0) &= \frac{\sigma^2\theta_0^2}{2} = 0\\
A(\theta_\epsilon) &= \frac{\sigma^2\theta_\epsilon^2}{2} = \frac{\epsilon^2}{2\sigma^2}\\
A(2\theta_\epsilon) &= \frac{\sigma^24\theta_\epsilon^2}{2} = \frac{2\epsilon^2}{\sigma^2}
\end{align*}
because $\mu_0 = 0$ and $\mu_\epsilon = \epsilon$ by construction. Thus, we also have: $\Delta \theta =\theta_\epsilon$. 

Accordingly, we have:
\begin{align*}
M_p(\Delta \theta) 
    &= \exp\left( A(\theta_0 + \Delta \theta) - A(\theta_0)\right)
    = \exp\left( A(\theta_\epsilon) - A(\theta_0)\right)
    = \exp\left(\frac{\epsilon^2}{2\sigma^2}\right)\\
M_p(2\Delta \theta) 
    &= \exp\left( A(\theta_0 +2 \Delta \theta) - A(\theta_0)\right)
    = \exp\left( A(2\theta_\epsilon - \theta_0)\right)
    = \exp\left( A(2\theta_\epsilon)\right)
    = \exp\left(\frac{2\epsilon^2}{\sigma^2}\right)   
\end{align*}

And the lower bound becomes:
\begin{align*}
    \min _{\Psi} 
    \max _{\substack{P_0 \in H_0' \\ P_1 \in H_1'}}
        P_e 
        &\geq \frac{1}{2}\left(
        1 - TV(P\parallel Q) \right)\\
      \Rightarrow
      \min _{\Psi} 
    \max _{\substack{P_0 \in H_0' \\ P_1 \in H_1'}}
        P_e 
        &\geq \frac{1}{2}\left( 1 - \frac{1}{2\sqrt{d}} 
        \Bigg[ \frac{1}{d} \sum_{j=1}^d
            \left(\frac{M_p(2\Delta\theta)}{M_p(\Delta \theta)^2}\right)^{m_j}
            -
            1
            \Bigg]^{\frac{1}{2}} \right)\\
      &= \frac{1}{2}\left( 1 - \frac{1}{2\sqrt{d}} 
        \Bigg[ \frac{1}{d} \sum_{j=1}^d
            \left(\frac{\exp\left(\frac{2\epsilon^2}{\sigma^2}\right)}{\exp\left(\frac{\epsilon^2}{2\sigma^2}\right)^2}\right)^{m_j}
            -
            1
            \Bigg]^{\frac{1}{2}} \right)\\
      &= \frac{1}{2}\left( 1 - \frac{1}{2\sqrt{d}} 
        \Bigg[ \frac{1}{d} \sum_{j=1}^d
            \left(\frac{\exp\left(\frac{2\epsilon^2}{\sigma^2}\right)}{\exp\left(\frac{2\epsilon^2}{2\sigma^2}\right)}\right)^{m_j}
               -
            1
            \Bigg]^{\frac{1}{2}} \right)\\
      &= \frac{1}{2}\left( 1 - \frac{1}{2\sqrt{d}} 
        \Bigg[ \frac{1}{d} \sum_{j=1}^d
            \left(\frac{\exp\left(\frac{2\epsilon^2}{\sigma^2}\right)}{\exp\left(\frac{\epsilon^2}{\sigma^2}\right)}\right)^{m_j}
            -
            1
            \Bigg]^{\frac{1}{2}} \right)\\
      &= \frac{1}{2}\left( 1 - \frac{1}{2\sqrt{d}} 
        \Bigg[ \frac{1}{d} \sum_{j=1}^d
            \exp\left(\frac{\epsilon^2}{\sigma^2}\right)^{m_j}
            -
            1
            \Bigg]^{\frac{1}{2}} \right)\\
      &= \frac{1}{2}\left( 1 - \frac{1}{2\sqrt{d}} 
        \Bigg[
            \frac{1}{d} \sum_{j=1}^d \exp\left(\frac{{m_j}\epsilon^2}{\sigma^2}\right)
            -
            1
            \Bigg]^{\frac{1}{2}} \right)
\end{align*}
In the case where each group has a different standard deviation of their BoP distribution, this becomes:

\begin{align*}
    &= \frac{1}{2}\left( 1 - \frac{1}{2\sqrt{d}} 
        \Bigg[
            \frac{1}{d} \sum_{j=1}^d \exp\left(\frac{{m_j}\epsilon^2}{\sigma_j^2}\right)
            -
            1
            \Bigg]^{\frac{1}{2}} \right).
\end{align*}
\end{proof}

\subsection{Maximum attributes (Gaussian BoP) for all people}\label{sec:max_attributes_gaussian}
 In the case where dataset $\mathcal{D}$ is drawn from an unknown distribution and has $d$ groups where $d=2^k$, with each group having $m = \lfloor N/d \rfloor$ samples, Corollary~\ref{prop:lower_bound_gaussian} becomes:
\begin{align*}
            \min _{\Psi} 
    \max _{\substack{P_0 \in H_0' \\ P_1 \in H_1'}}
        P_e 
        &\geq \frac{1}{2}\left( 1 - \frac{1}{2\sqrt{d}} 
        \Bigg[ \exp\left(\frac{{m}\epsilon^2}{\sigma^2}\right)
            -
            1
            \Bigg]^{\frac{1}{2}} \right)
\end{align*}
\begin{corollary}[Maximum attributes (Gaussian BoP) for all people] 
 Consider auditing a personalized classifier $h_p$ to verify if it provides a gain of $\epsilon =0.01$ to each group on an auditing dataset $D$. Consider an auditing dataset with $\sigma = 0.1$ and $N = 8 \times 10^9$ samples, or one sample for each person on earth. If $h_p$ uses more than $k \geq 23$ binary group attributes, then for any hypothesis test there will exist a pair of probability distributions $P_{X,S,Y} \in H_0, \quad Q_{X,S,Y} \in H_1$ for which the test results in a probability of error that exceeds $25 \%$. 
\begin{equation}
       k \geq 23 \implies \min_{\Psi} \max_{\substack{P_{X,S,Y} \in H_0 \\ Q_{X,G,Y} \in H_1}} P_e \geq \frac{1}{4}.
\end{equation}
\end{corollary}

\subsection{Proof for the symmetric generalized normal distribution}\label{subsec:symm_gauss}

We solve for the bound assuming the BoP is a symmetric generalized Gaussian distribution.

\paragraph{Symmetric Generalized Gaussian} The symmetric generalized Gaussian distribution, also known as the exponential power distribution, is a generalization of the Gaussian distributions that includes the Laplace distribution. A probability distribution in this family has probability density function:
\begin{equation}
p(x | \mu, \alpha, \beta ) = 
    \frac {\beta }{2\alpha \Gamma (1/\beta )}
    \exp\left(-\left(\frac{|x-\mu |}{\alpha }\right)^\beta\right),
\end{equation}
with mean and variance:
\begin{equation}
    \mathbb{E}[X] = \mu, \quad V[X] = \frac {\alpha ^{2}\Gamma (3/\beta )}{\Gamma (1/\beta )}.
\end{equation}
We can write the standard deviation $\sigma = \alpha\sqrt{\frac{\Gamma(1/\beta)}{\Gamma(3/\beta)}}=\alpha\gamma(\beta)$ where we introduce the notation $\gamma(\beta) = \sqrt{\frac{\Gamma(1/\beta)}{\Gamma(3/\beta)}}$. This notation will become convenient in our computations.

\paragraph{Example: Laplace} The Laplace probability density function is given by:
\begin{equation}
    f(x\mid \mu ,b)
        =\frac{1}{2b}
        \exp \left(-\frac {|x-\mu |}{b}\right)
\end{equation}
which is in the family for $\alpha = b $ and $\beta = 1$, since the gamma function verifies $\Gamma(1) = (1-1)! = 0!= 1$.

\begin{proposition}\label{prop:lower-bound-gaussian}[Lower bound for symmetric generalized Gaussian BoP for any number of samples in each group]
The lower bound writes:
\begin{align*}
      \min _{\Psi} 
    \max _{\substack{P_0 \in H_0' \\ P_1 \in H_1'}}
        P_e 
        &\geq \frac{1}{2}\left( 1 - \frac{1}{2\sqrt{d}}\left[
             \frac{1}{d}
            \sum_{j=1}^d \mathbb{E}_{p^\epsilon}\Bigg[
            \exp\left(-
            \frac{|B - \epsilon|^\beta-|B|^\beta}{\alpha^\beta}
            \right)\Bigg]^{m_j}
            -1
            \right]^{\frac{1}{2}} \right)
\end{align*}
where $P_{\mathbf{X}, \mathbf{S}, Y}$ is a distribution of data, for which the generic model $h_0$ performs better, i.e., the true $\gamma$ is such that $\gamma(h_0, h_p, \mathcal{D}) < 0$, and $Q_{\mathbf{X}, \mathbf{S}, Y}$ is a distribution of data points for which the personalized model performs better, i.e., the true $\gamma$ is such that $\gamma(h_0, h_p, \mathcal{D}) > 0$.
\end{proposition}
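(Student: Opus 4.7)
The plan is to use the general-distribution lower bound established in Theorem~\ref{th:lower_bound_any_prob} and specialize it by computing the likelihood ratio $p^{\epsilon}(B)/p(B)$ for the symmetric generalized Gaussian family. Unlike in the Gaussian case covered by Corollary~\ref{prop:lower_bound_gaussian}, the symmetric generalized Gaussian with $\beta \neq 2$ is not in the exponential family in its location parameter, so Proposition~\ref{prop:lower_bound_exponential_fam} does not apply and the moment generating function shortcut is unavailable. However, Theorem~\ref{th:lower_bound_any_prob} already reduces the problem to evaluating $\mathbb{E}_{p^\epsilon}[p^\epsilon(B)/p(B)]$, so only the density ratio needs to be worked out.

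The first step is to fix $\alpha, \beta$ and write $p$ as the symmetric generalized Gaussian with mean $\mu=0$ and $p^{\epsilon}$ as the same family with mean $\mu=\epsilon$, exactly matching the construction used throughout Theorem~\ref{th:lower_bound_any_prob} (where $p$ has mean $0$ and $p^\epsilon$ encodes the shifted group). By the explicit form of the density, both distributions share the same normalizing constant $\beta/(2\alpha\Gamma(1/\beta))$, which therefore cancels in the ratio, leaving
\begin{equation}
\frac{p^{\epsilon}(b)}{p(b)} \;=\; \exp\!\left(-\frac{|b-\epsilon|^{\beta}-|b|^{\beta}}{\alpha^{\beta}}\right).
\end{equation}

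The second step is a direct substitution into the bound of Theorem~\ref{th:lower_bound_any_prob}. Since the derivation there is distribution-agnostic and only uses that $p, p^\epsilon$ are probability densities and that $\mathbb{E}_p[p^\epsilon(B)/p(B)] = 1$ (both of which hold here), plugging in the ratio above yields
\begin{equation}
\min_{\Psi}\max_{\substack{P_0\in H_0\\ P_1\in H_1}} P_e \;\geq\; 1 - \frac{1}{2\sqrt{d}}\left[\frac{1}{d}\sum_{j=1}^{d}\mathbb{E}_{p^{\epsilon}}\!\left[\exp\!\left(-\frac{|B-\epsilon|^{\beta}-|B|^{\beta}}{\alpha^{\beta}}\right)\right]^{m_j} - 1\right]^{\!\frac{1}{2}},
\end{equation}
which is the claimed bound.

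There is essentially no analytic obstacle: the work has already been done in Theorem~\ref{th:lower_bound_any_prob}, and the proposition is a specialization. The only subtle point worth verifying in passing is that the construction of $P \in H_0$ and $Q \in H_1$ in the general proof is valid for this family, which follows because the symmetric generalized Gaussian with mean $\mu = 0$ is symmetric around $0$ (so $Q$ induces zero group BoP) and translating its mean to $\epsilon$ yields a density with mean exactly $\epsilon$ (so $P$ realizes a group whose BoP equals $\epsilon$). Consequently, the Laplace bound in Corollary~\ref{cor:laplace} follows as the $\beta=1$, $\alpha=b$ instance of this proposition.
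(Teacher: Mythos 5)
Your proposal is correct and follows the paper's own proof essentially verbatim: both invoke Theorem~\ref{th:lower_bound_any_prob} and then substitute the symmetric generalized Gaussian densities, letting the common normalizing constant cancel so the likelihood ratio reduces to $\exp\left(-\left(|B-\epsilon|^{\beta}-|B|^{\beta}\right)/\alpha^{\beta}\right)$. Your added remarks — that the location family is not exponential for $\beta \neq 2$ (hence the MGF route is unavailable) and that the $P$/$Q$ construction remains valid by symmetry — are accurate and, if anything, slightly more careful than the paper's presentation.
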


\begin{proof}
    By Theorem~\ref{th:lower_bound}, we have:
\begin{align*}
      \min _{\Psi} 
    \max _{\substack{P_0 \in H_0' \\ P_1 \in H_1'}}
        P_e 
        &\geq \frac{1}{2}\left( 1 - \frac{1}{2\sqrt{d}}\left[
             \frac{1}{d}
            \sum_{j=1}^d \mathbb{E}_{p^\epsilon}\Bigg[
           \frac{p^{\epsilon}(B)}{p(B)}\Bigg]^{m_j}
            -1
            \right]^{\frac{1}{2}} \right)
\end{align*}
\paragraph{Plug in the symmetric generalized Gaussian distribution} Under the assumption that the random variable $B$ follows an exponential power distribution, we continue the computations as:
\begin{align*}
    \min _{\Psi} 
    \max _{\substack{P_0 \in H_0' \\ P_1 \in H_1'}}
        P_e &\geq \frac{1}{2}\left( 1 - \frac{1}{2\sqrt{d}}\left[
             \frac{1}{d}
            \sum_{j=1}^d \mathbb{E}_{p^\epsilon}\Bigg[
            \frac{ \exp\left(-\left(\frac{|B - \epsilon|}{\alpha}\right)^\beta\right)}{ \exp\left(-\left(\frac{|B|}{\alpha}\right)^\beta\right)}\Bigg]^{m_j}
            -1
            \right]^{\frac{1}{2}} \right)\\
       &= \frac{1}{2}\left( 1 - \frac{1}{2\sqrt{d}} \left[
    \frac{1}{d}
    \sum_{j=1}^d \mathbb{E}_{p^\epsilon} \Bigg[
    \exp\left(-\left(\frac{|B - \epsilon|}{\alpha}\right)^\beta\right)
    \cdot \exp\left(\left(\frac{|B|}{\alpha}\right)^\beta\right)
    \Bigg]^{m_j}
    - 1
\right]^{\frac{1}{2}} \right)\\
&\quad \text{(property of $\exp$)}\\
        &= \frac{1}{2}\left( 1 - \frac{1}{2\sqrt{d}}\left[
             \frac{1}{d}
            \sum_{j=1}^d \mathbb{E}_{p^\epsilon}\Bigg[
            \exp\left(-\left(\frac{|B - \epsilon|}{\alpha}\right)^\beta
            +
            \left(\frac{|B|}{\alpha}\right)^\beta\right)\Bigg]^{m_j}
            -1
            \right]^{\frac{1}{2}} \right)~\text{(property of exp)}\\
        &= \frac{1}{2}\left( 1 - \frac{1}{2\sqrt{d}}\left[
             \frac{1}{d}
            \sum_{j=1}^d \mathbb{E}_{p^\epsilon}\Bigg[
            \exp\left(-
            \frac{|B - \epsilon|^\beta-|B|^\beta}{\alpha^\beta}
            \right)\Bigg]^{m_j}
            -1
            \right]^{\frac{1}{2}} \right)~\text{(property of exp)}\\
\end{align*}
\end{proof}

\subsection{Proof for Laplace BoP}\label{sec:proof-real-valued-laplace}
Here, we do the proof assuming that the BoP is a Laplace distribution (for more peaked than the normal variable).
\begin{corollary}
\label{cor:lower_bound_laplace}
[Lower bound for a Laplace BoP for any number of samples in each group]
The lower bound writes:
\
\begin{align*} 
\min _{\Psi} 
    \max _{\substack{P_0 \in H_0' \\ P_1 \in H_1'}}
        P_e 
        &\geq  \frac{1}{2}\left( 1 - \frac{1}{2\sqrt{d}}\left[
             \frac{1}{d}
            \sum_{j=1}^d 
            \exp\left(\frac{m_j\epsilon }{b} \right)
            -1
            \right]^{\frac{1}{2}} \right)
\end{align*}
where $P_{\mathbf{X}, \mathbf{S}, Y}$ is a distribution of data, for which the generic model $h_0$ performs better, i.e., the true $\gamma$ is such that $\gamma(h_0, h_p, \mathcal{D}) < 0$, and $Q_{\mathbf{X}, \mathbf{S}, Y}$ is a distribution of data points for which the personalized model performs better, i.e., the true $\gamma$ is such that $\gamma(h_0, h_p, \mathcal{D}) > 0$.
\end{corollary}

\begin{proof}
    By Proposition~\ref{prop:lower-bound-gaussian}, we have:
\begin{align*}
    \min _{\Psi} 
    \max _{\substack{P_0 \in H_0' \\ P_1 \in H_1'}} P_e \geq \frac{1}{2}\left( 1 - \frac{1}{2\sqrt{d}}\left[
             \frac{1}{d}
            \sum_{j=1}^d \mathbb{E}_{p^\epsilon}\Bigg[
            \exp\left(-
            \frac{|B - \epsilon|^\beta-|B|^\beta}{\alpha^\beta}
            \right)\Bigg]^{m_j}
            -1
            \right]^{\frac{1}{2}} \right)
\end{align*}
Plugging in our values of $\alpha$ and $\beta$ shown to satisfy the Laplace probability density function we get:
    \begin{align*}
    &= \frac{1}{2}\left( 1 - \frac{1}{2\sqrt{d}}\left[
             \frac{1}{d}
            \sum_{j=1}^d \mathbb{E}_{p^\epsilon}\Bigg[
            \exp\left(-
            \frac{|B - \epsilon|-|B|}{b}
            \right)\Bigg]^{m_j}
            -1
            \right]^{\frac{1}{2}} \right)\\
    \end{align*}
\paragraph{Using bounds} Since we are finding the worst case lower bound, we will find functions that upper and lower bound $|B - \epsilon| - |B|$. This function is lower bounded by $\epsilon$ and upper bounded by $-\epsilon$ since $\epsilon < 0$. Indeed, since $\epsilon < 0$, there are three cases: 
\begin{itemize}
    \item $0 < B < B - \epsilon$: this gives $|B - \epsilon| - |B|=  B - \epsilon - B = -\epsilon$
    \item $B < 0 < B - \epsilon$ : this gives $|B - \epsilon| - |B|=  B - \epsilon + B = 2 B -\epsilon > 2 \epsilon - \epsilon = \epsilon$ since $0 < B - \epsilon$.
    \item $  B < B - \epsilon < 0$: this gives $|B - \epsilon| - |B|=-B + \epsilon + B =  \epsilon$.
\end{itemize}
Thus, we have:
$ \epsilon \leq |B - \epsilon| - |B| \leq -\epsilon$ 
and:
\begin{align*}
    &
    \epsilon 
    \leq 
    |B - \epsilon| - B| 
    \leq 
    -\epsilon\\
    & \Rightarrow 
    \frac{\epsilon}{b} 
    \leq 
   \frac{|B - \epsilon|-|B|}{b}
    \leq 
    -\frac{\epsilon}{b}\\
    & \Rightarrow 
    -\frac{\epsilon}{b} 
    \geq 
   -\frac{|B - \epsilon|-|B|}{b}
    \geq 
    \frac{\epsilon}{b}\\
    & \Rightarrow 
    \exp\left(-\frac{\epsilon}{b} \right) 
    \geq 
   \exp\left(-\frac{|B - \epsilon|-|B|}{b}\right) 
    \geq 
    \exp\left(\frac{\epsilon}{b}\right)
\end{align*}

Thus, applying the expectation gives:
\begin{align*}
      &  
      \mathbb{E}_{p^\epsilon}\left[
      \exp\left(-\frac{\epsilon}{b} \right) 
      \right]
    \geq 
   \mathbb{E}_{p^\epsilon}\left[
   \exp\left(-\frac{|B - \epsilon|-|B|}{b}\right) 
   \right]
    \geq 
    \mathbb{E}_{p^\epsilon}\left[
    \exp\left(\frac{\epsilon}{b}\right)
    \right]\\
     & \Rightarrow 
      \exp\left(-\frac{\epsilon}{b} \right) 
    \geq 
   \mathbb{E}_{p^\epsilon}\left[
   \exp\left(-\frac{|B - \epsilon|-|B|}{b}\right) 
   \right]
    \geq 
    \exp\left(\frac{\epsilon}{b}\right)
\end{align*}
because the lower and upper bounds do not depend on $B$.

All the terms in these inequalities are positive, and the power function is increasing on positive numbers. Thus, we get:
\begin{align*}
    & 
      \exp\left(-\frac{\epsilon}{b} \right)^{m_j} 
    \geq 
   \mathbb{E}_{p^\epsilon}\left[
   \exp\left(-\frac{|B - \epsilon|-|B|}{b}\right) 
   \right]^{m_j}
    \geq 
    \exp\left(\frac{\epsilon}{b}\right)^{m_j}\\
    &\Rightarrow
    \frac{1}{d}\sum_{j=1}^d
    \exp\left(-\frac{\epsilon}{b} \right)^{m_j} 
    \geq 
      \frac{1}{d}\sum_{j=1}^d
   \mathbb{E}_{p^\epsilon}\left[
   \exp\left(-\frac{|B - \epsilon|-|B|}{b}\right) 
   \right]^{m_j}
    \geq 
      \frac{1}{d}\sum_{j=1}^d
    \exp\left(\frac{\epsilon}{b}\right)^{m_j}\\
     &\Rightarrow
    \frac{1}{d}\sum_{j=1}^d
    \exp\left(-\frac{m_j\epsilon}{b} \right)
    \geq 
      \frac{1}{d}\sum_{j=1}^d
   \mathbb{E}_{p^\epsilon}\left[
   \exp\left(-\frac{|B - \epsilon|-|B|}{b}\right) 
   \right]^{m_j}
    \geq 
      \frac{1}{d}\sum_{j=1}^d
    \exp\left(\frac{m_j\epsilon}{b}\right)\\  
     &\Rightarrow
    \frac{1}{d}\sum_{j=1}^d
    \exp\left(-\frac{m_j\epsilon}{b} \right) -1
    \geq 
      \frac{1}{d}\sum_{j=1}^d
   \mathbb{E}_{p^\epsilon}\left[
   \exp\left(-\frac{|B - \epsilon|-|B|}{b}\right) 
   \right]^{m_j} -1
    \geq 
      \frac{1}{d}\sum_{j=1}^d
    \exp\left(\frac{m_j\epsilon}{b}\right) -1 \\   
        &\Rightarrow
    \left(\frac{1}{d}\sum_{j=1}^d
    \exp\left(-\frac{m_j\epsilon}{b} \right) -1\right)^{\frac{1}{2}} \\
    & \geq 
      \left(\frac{1}{d}\sum_{j=1}^d
   \mathbb{E}_{p^\epsilon}\left[
   \exp\left(-\frac{|B - \epsilon|-|B|}{b}\right) 
   \right]^{m_j} -1\right)^{\frac{1}{2}} \\
    &\geq 
      \left(\frac{1}{d}\sum_{j=1}^d
    \exp\left(\frac{m_j\epsilon}{b}\right) -1\right)^{\frac{1}{2}} \\  
&\Rightarrow
-\frac{1}{2\sqrt{d}} \left( \frac{1}{d} \sum_{j=1}^d 
    \exp\left(-\frac{m_j \epsilon}{b} \right) - 1 \right)^{\frac{1}{2}} \\
&\leq 
-\frac{1}{2\sqrt{d}} \left( \frac{1}{d} \sum_{j=1}^d 
    \mathbb{E}_{p^\epsilon} \left[ 
    \exp\left( -\frac{|B - \epsilon| - |B|}{b} \right)
    \right]^{m_j} - 1 \right)^{\frac{1}{2}} \\
&\leq 
-\frac{1}{2\sqrt{d}} \left( \frac{1}{d} \sum_{j=1}^d 
    \exp\left( \frac{m_j \epsilon}{b} \right) - 1 \right)^{\frac{1}{2}}   
\end{align*}

\paragraph{Back to Probability of error} To maximize $P_e$, we take the function that gives us the lower bound. Plugging this upper bound back into our equation for $P_e$:
\begin{align*} 
\min _{\Psi} 
    \max _{\substack{P_0 \in H_0' \\ P_1 \in H_1'}}
        P_e 
        &\geq  \frac{1}{2}\left( 1 - \frac{1}{2\sqrt{d}}\left[
             \frac{1}{d}
            \sum_{j=1}^d 
            \exp\left(-\frac{m_j\epsilon }{b} \right)
            -1
            \right]^{\frac{1}{2}} \right)
\end{align*}
In the case where each group has a different scale parameter of their BoP distribution, this becomes:
\begin{align*} 
\min _{\Psi} 
    \max _{\substack{P_0 \in H_0' \\ P_1 \in H_1'}}
        P_e 
        &\geq \frac{1}{2}\left( 1 - \frac{1}{2\sqrt{d}}\left[
             \frac{1}{d}
            \sum_{j=1}^d 
            \exp\left(-\frac{m_j\epsilon }{b_j} \right)
            -1
            \right]^{\frac{1}{2}} \right),
\end{align*}
such that for the unflipped hypothesis testing with $\epsilon>0$ we get:
\begin{align*} 
\min _{\Psi} 
    \max _{\substack{P_0 \in H_0' \\ P_1 \in H_1'}}
        P_e 
        &\geq  \frac{1}{2}\left( 1 - \frac{1}{2\sqrt{d}}\left[
             \frac{1}{d}
            \sum_{j=1}^d 
            \exp\left(\frac{m_j\epsilon }{b_j} \right)
            -1
            \right]^{\frac{1}{2}} \right)
\end{align*}
\end{proof}

\subsection{Maximum attributes (Laplace BoP) for all people}\label{sec:max_attributes_laplace}
 In the case where dataset $\mathcal{D}$ is drawn from an unknown distribution and has $d$ groups where $d=2^k$, with each group having $m = \lfloor N/d \rfloor$ samples, Corollary~\ref{cor:lower_bound_laplace} becomes:
\begin{align*} 
\min _{\Psi} 
    \max _{\substack{P_0 \in H_0' \\ P_1 \in H_1'}}
        P_e 
        &\geq \frac{1}{2}\left( 1 - \frac{1}{2\sqrt{d}}\left[
            \exp\left(\frac{m\epsilon }{b} \right)
            -1
            \right]^{\frac{1}{2}} \right).
\end{align*}

\begin{corollary}[Maximum attributes (Laplace) for all people] 
 Consider auditing a personalized classifier $h_p$ to verify if it provides a gain of $\epsilon =0.01$ to each group on an auditing dataset $D$. Consider an auditing dataset with $\sigma=0.1$ and $N = 8 \times 10^9$ samples, or one sample for each person on earth. If $h_p$ uses more than $k \geq 26$ binary group attributes, then for any hypothesis test there will exist a pair of probability distributions $P_{X,S,Y} \in H_0, \quad Q_{X,G,Y} \in H_1$ for which the test results in a probability of error that exceeds $25 \%$. 
\begin{equation}
       k \geq 26 \implies \min_{\Psi} \max_{\substack{P_{X,S,Y} \in H_0 \\ Q_{X,G,Y} \in H_1}} P_e \geq \frac{1}{4}.
\end{equation}
\end{corollary}

\section{Limits on Attributes and Sample Size}\label{app:H}

This section derives theoretical limits on the number of personal attributes and the sample size required per group to ensure that the probability of error remains below a practitioner-specified threshold.

\begin{corollary}\label{cor:k}
Let $N$ be the number of participants, and assume that each group $j = 1, \dots, d$ has $m_j = m = \left\lfloor \frac{N}{d} \right\rfloor$ samples. To ensure that the probability of error verifies $\min \max P_e \leq \frac{1}{4}$, the number of binary attributes $k$ must be chosen such that $k \leq k_{\max}$, where:
    \begin{align*}
        k_{max} =
        \left\{
        \begin{array}{l}
         \frac{1}{\ln 2} W\left(N \log( \epsilon^2+1)\right)~\text{(Categorical BoP)}\\
         \frac{1}{\ln 2}W\left(\frac{\epsilon^2 N}{\sigma^2}\right)~\text{(Gaussian BoP, variance $\sigma^2$)}\\
         \frac{1}{\ln 2}W\left(\frac{\epsilon N}{b}\right)~\text{(Laplace BoP, scale $b$)},
         \end{array}
         \right.
    \end{align*}
where $W$ is the Lambert W function.
\end{corollary}



\section{MIMIC-III Experiment Results} 
\label{sec:MIMIC_Results} 

Below is all supplementary material for the MIMIC-III experiment. This includes $\operatorname{G-BoP}$ distribution plots and plots showing how incomprehensiveness and sufficiency change over the number of features removed.

\subsection{Experiment Plots}\label{subsec:mimic_plots}

\textbf{Experiment Setup.} We assume that the practitioner uses a 70/30 train-test split for both tasks and compare two neural network models: a personalized model with one-hot encoded group attributes ($h_p$) and a generic model without them ($h_0$). Regression outputs are normalized to zero mean and unit variance.

\textbf{Explanation Method and Explanation Evaluation metric.} We assume that the practitioner generates the most important features of our models using Integrated Gradients from Captum as our explanation method~\citep{sundararajan2017axiomaticattributiondeepnetworks}. We assume that they use sufficiency and incomprehensiveness as our explanation evaluation metrics, where $50\%$ of features are either kept or removed.

Integrated Gradients extracts the most important features of each model by computing input-feature attributions by integrating gradients along a path from a baseline to the input. To evaluate $\text{BoP}_{X}$ using sufficiency and incomprehensiveness, we set $r$ such that $50\%$ of features are kept or removed. Plots below depict how sufficiency and incomprehensiveness change for different values of $r$, as well as show the individual BoP distributions. We use Integrated Gradients for its efficiency, interpretability, and broad adoption, though our framework supports any attribution method.

In the following section, we show supplementary plots for the regression task on the auditing dataset. We show the distribution of the BoP across participants for all three metrics we evaluate. We overlay Laplace and Gaussian distributions to see which fit the individual BoP distribution best, illustrating that prediction and incomprehensiveness are best fit by Laplace distributions and sufficiency by a Gaussian distribution. Additionally, we show how incomprehensiveness and sufficiency change for the number of important attributes $r$ that are kept are removed.

\begin{figure}[ht]
    \centering
    \includegraphics[width=0.8\textwidth]{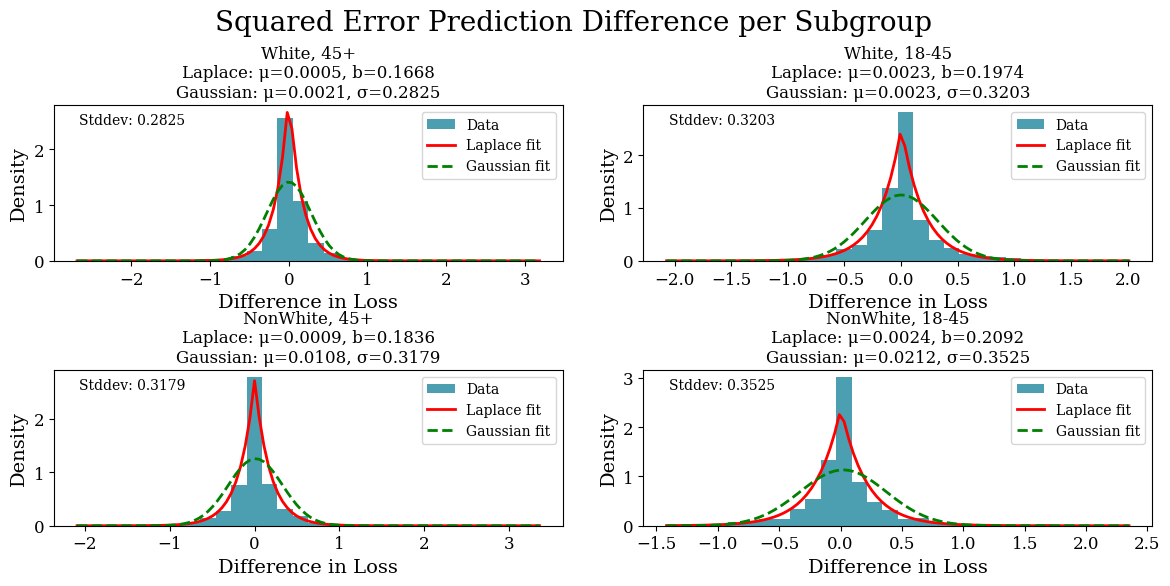}
    \caption{Individual prediction cost for all groups using the square error loss function.}
    \label{fig:mimic-regr-pred}
\end{figure}

\begin{figure}[ht]
    \centering
    \includegraphics[ width=0.8\textwidth]{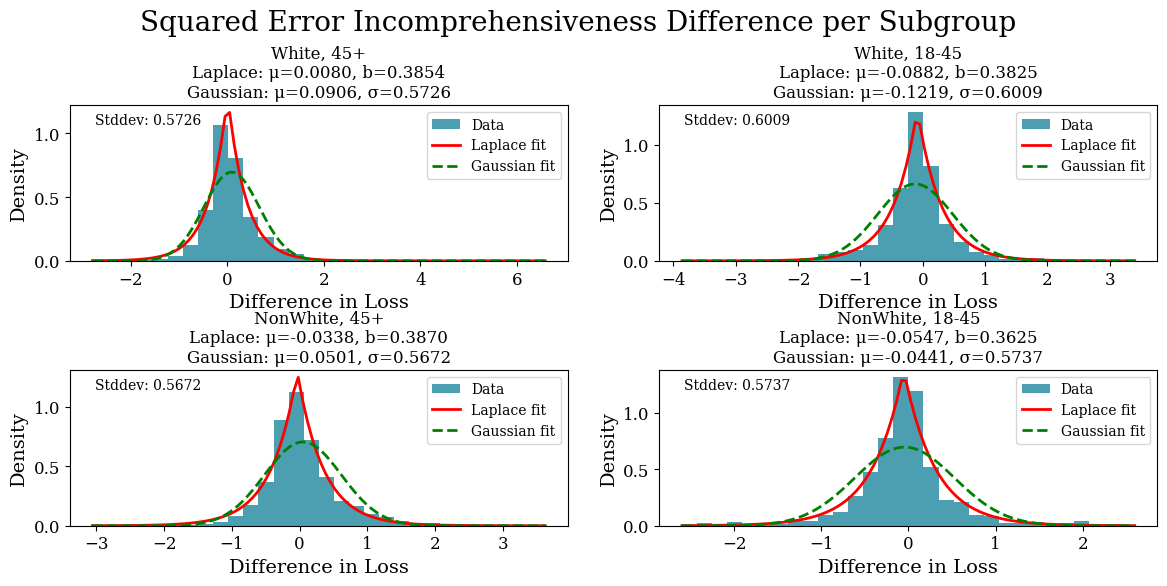}
    \caption{Individual incomprehensiveness cost for all groups using the square error loss function.}
    \label{fig:mimic-regr-incomp}
\end{figure}

\begin{figure}[ht]
    \centering

    \includegraphics[width=0.84\textwidth]{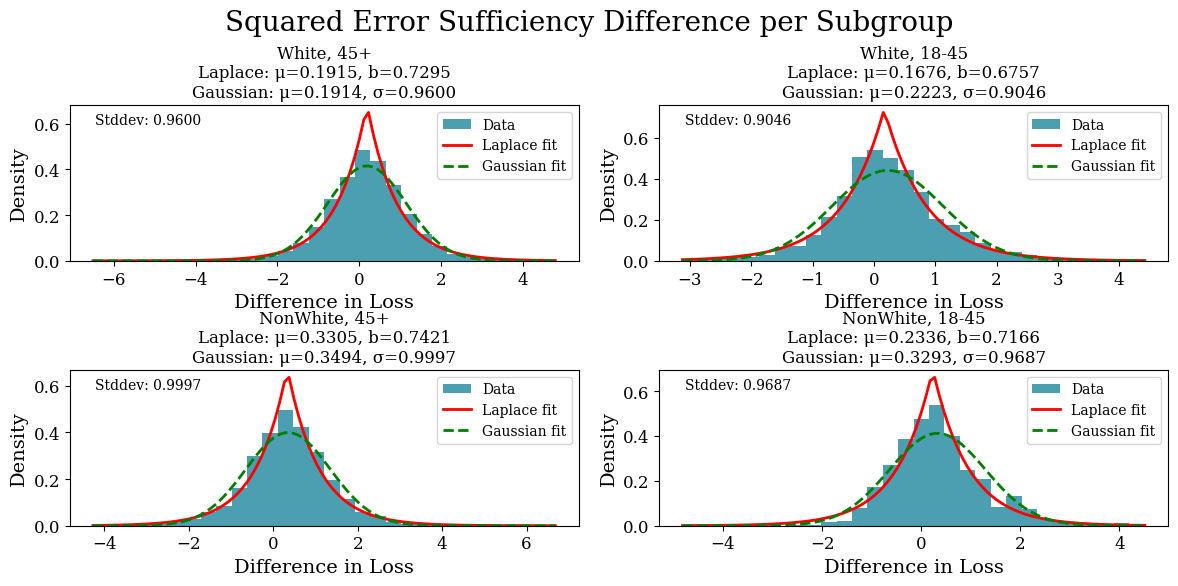}
    \caption{Individual sufficiency cost for all groups using the square error loss function.}
    \label{fig:mimic-regr-suff}
\end{figure}

\begin{figure}[ht]
    \centering
    \includegraphics[width=1\textwidth]{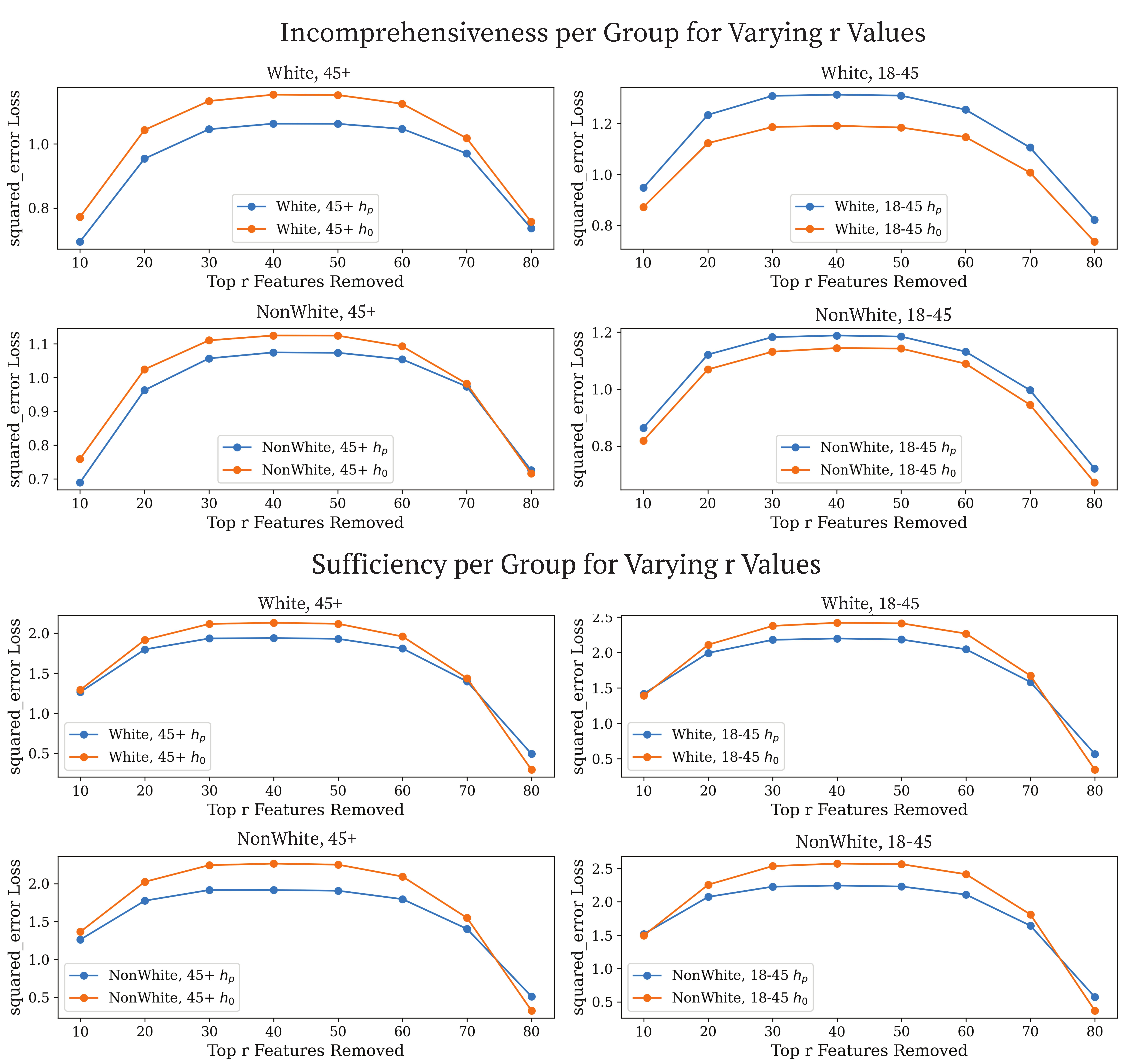}
    \caption{Values of Sufficiency and Incomprehensiveness across varying $r$ top features selected using the square error loss function. Values are found for $h_0$ and $h_p$.}
    \label{fig:mimic-r-varying}
\end{figure}


\section{Additional Dataset Results}\label{sec:Additional_Experiments} 

The following are the experimental results for $\operatorname{G-BoP}_{P}$ and $\operatorname{G-BoP}_{X}$ on the UCI Heart \citep{heart_disease_45} and MIMIC-III Kidney injury dataset \citep{mimic-3} utilizing three explainer methods through Captum: Integrated Gradients \cite{sundararajan2017axiomaticattributiondeepnetworks}, Shapley Value Sampling \citep{shapely}, and DeepLIFT \citep{deeplift}. Interestingly, we see a large amount of agreement across these explainer methods: in nearly all cases, groups that benefited or were harmed remain consistent across methods, although the amount by which this occurs varies.
We compute $\epsilon_{lim}$, the value of $\epsilon$ for which the lower bound of $P_e$ surpasses $25\%$ for the Shapley Value Sampling Method on the UCI Heart dataset to illustrate the full pipeline.

\begin{table}[t]
\caption{Experimental results on the UCI Heart test set, with columns for DeepLift (D.L.), Integrated Gradients (I.G.), and Shapley Value Sampling (S.V.S.). The classification task is predicting heart disease 
presence and the regression task is predicting ST depression induced by exercise. All available features are used, 
and negative entries appear in \textcolor{darkred}{red}. Using our framework, we computed 
$\epsilon_{\lim}$ (for the S.V.S.\ explainer method) where the lower bound on $P_e$ surpasses 25\%. 
In classification, $\epsilon_{\lim} = 0.1156$ for all metrics; in regression, 
$\epsilon_{\lim} = 0.0163$ for prediction (Laplace), $0.02$ for incomprehensiveness (Laplace), 
and $0.153$ for sufficiency (Gaussian). Given an $\epsilon = 0.002$, none of these tests are reliable.}
\centering
\resizebox{\linewidth}{!}{
\footnotesize
\setlength{\tabcolsep}{4pt} 
\begin{tabular}{lccccccc}
\toprule
\multicolumn{8}{c}{\textbf{Classification Results}} \\
\cmidrule(lr){1-8}
\textbf{Group} 
    & \textbf{Prediction} 
    & \textbf{Incomp. D.L} 
    & \textbf{Suff. D.L} 
    & \textbf{Incomp. I.G.} 
    & \textbf{Suff. I.G.} 
    & \textbf{Incomp. S.V.S.} 
    & \textbf{Suff. S.V.S.} \\
\midrule
\textbf{Female, 45+}   
  & 0.0000 
  & 0.0000 
  & \textcolor{darkred}{-0.0435}
  & 0.0000
  & \textcolor{darkred}{-0.0435}
  & 0.0000
  & \textcolor{darkred}{-0.0870} \\

\textbf{Female, 18--45} 
  & 0.0000 
  & \textcolor{darkred}{-0.1429}
  & 0.0000
  & \textcolor{darkred}{-0.1429}
  & 0.0000
  & 0.0000
  & 0.0000 \\

\textbf{Male, 45+}     
  & 0.0588 
  & \textcolor{darkred}{-0.0588}
  & \textcolor{darkred}{-0.0784}
  & \textcolor{darkred}{-0.0588}
  & \textcolor{darkred}{-0.1373}
  & \textcolor{darkred}{-0.0588}
  & \textcolor{darkred}{-0.1176} \\

\textbf{Male, 18--45}  
  & 0.1000 
  & 0.1000
  & 0.1000
  & 0.1000
  & 0.1000
  & 0.1000
  & 0.1000 \\

\textbf{All Pop.}      
  & 0.0440 
  & \textcolor{darkred}{-0.0330}
  & \textcolor{darkred}{-0.0440}
  & \textcolor{darkred}{-0.0330}
  & \textcolor{darkred}{-0.0750}
  & \textcolor{darkred}{-0.0220}
  & \textcolor{darkred}{-0.0769} \\
\midrule
\textbf{Minimal BoP}
  & 0.0000   
  & \textcolor{darkred}{-0.1429}  
  & \textcolor{darkred}{-0.0784}  
  & \textcolor{darkred}{-0.1429}  
  & \textcolor{darkred}{-0.1373}  
  & \textcolor{darkred}{-0.0588}  
  & \textcolor{darkred}{-0.1176}  
  \\
\bottomrule
\end{tabular}}

\vspace{1em} 

\centering
\resizebox{\linewidth}{!}{
\footnotesize
\setlength{\tabcolsep}{4pt} 
\begin{tabular}{lccccccc}
\toprule
\multicolumn{8}{c}{\textbf{Regression Results}} \\
\cmidrule(lr){1-8}
\textbf{Group} 
    & \textbf{Prediction} 
    & \textbf{Incomp. D.L} 
    & \textbf{Suff. D.L} 
    & \textbf{Incomp. I.G} 
    & \textbf{Suff. I.G} 
    & \textbf{Incomp. S.V.S.}
    & \textbf{Suff. S.V.S.} \\
\midrule
\textbf{Female, 45+}    
  & \textcolor{darkred}{-0.3077}
  & 0.3528
  & 0.1385
  & 0.0980
  & 0.2040
  & 0.1747
  & 0.3332 \\

\textbf{Female, 18--45} 
  & 0.0521
  & \textcolor{darkred}{-0.0004}
  & 0.1067
  & \textcolor{darkred}{-0.0438}
  & 0.1774
  & \textcolor{darkred}{-0.0207}
  & 0.0222 \\

\textbf{Male, 45+}      
  & 0.0914
  & 0.0286
  & 0.0531
  & 0.0173
  & 0.1381
  & 0.0315
  & 0.1617 \\

\textbf{Male, 18--45}   
  & \textcolor{darkred}{-0.1410}
  & 0.1239
  & 0.4293
  & 0.1384
  & 0.4365
  & 0.1360
  & 0.3592 \\

\textbf{All Pop.}       
  & \textcolor{darkred}{-0.0363}
  & 0.0791
  & 0.1833
  & 0.0523
  & 0.2035
  & 0.0779
  & 0.2258 \\
\midrule
\textbf{Minimal BoP}
  & \textcolor{darkred}{-0.3077}   
  & \textcolor{darkred}{-0.0004}   
  & 0.0531   
  & \textcolor{darkred}{-0.0438}   
  & 0.1381  
  & \textcolor{darkred}{-0.0207}   
  & 0.0222
  \\
\bottomrule
\end{tabular}}
\vspace{0.5em}
\label{tab:edited_results_regression}
\end{table}

\begin{table}[t]
\caption{Experimental results on the MIMIC-III Kidney test set, with columns for DeepLift (D.L.), Integrated Gradients (I.G.), and Shapley Value Sampling (S.V.S.); negative values appear in \textcolor{darkred}{red}. The regression task predicts hours to the next continuous renal replacement therapy (CRRT). For classification, the target is patient mortality during the same hospital admission. Features include recent lab measurements (e.g., sodium, potassium, creatinine) prior to CRRT, along with patient age, hours in the ICU at CRRT administration, and the Sequential Organ Failure Assessment (SOFA) score at admission.}
\centering
\resizebox{\linewidth}{!}{
\footnotesize
\setlength{\tabcolsep}{4pt} 

\begin{tabular}{lccccccc}
\toprule
\multicolumn{8}{c}{\textbf{Classification Results}} \\
\cmidrule(lr){1-8}
\textbf{Group} 
    & \textbf{Prediction} 
    & \textbf{Incomp. D.L} 
    & \textbf{Suff. D.L} 
    & \textbf{Incomp. I.G} 
    & \textbf{Suff. I.G} 
    & \textbf{Incomp. S.V.S.} 
    & \textbf{Suff. S.V.S.} \\
\midrule
\textbf{Female, 45+}    
  & 0.0392 
  & 0.0392 
  & \textcolor{darkred}{-0.0784} 
  & 0.0392 
  & \textcolor{darkred}{-0.0784} 
  & 0.0392 
  & \textcolor{darkred}{-0.0196} \\

\textbf{Female, 18--45} 
  & 0.0000 
  & 0.0000 
  & 0.3636 
  & 0.0000 
  & 0.3636 
  & 0.0000 
  & 0.3636 \\

\textbf{Male, 45+}      
  & 0.0164 
  & \textcolor{darkred}{-0.0164} 
  & 0.0820 
  & \textcolor{darkred}{-0.0164} 
  & 0.0984 
  & \textcolor{darkred}{-0.0164} 
  & 0.0000 \\

\textbf{Male, 18--45}   
  & 0.0000 
  & 0.0000 
  & \textcolor{darkred}{-0.0833} 
  & 0.0000 
  & \textcolor{darkred}{-0.0833} 
  & 0.0000 
  & 0.1667 \\

\textbf{All Pop.}       
  & 0.0224 
  & 0.0074 
  & 0.0296 
  & 0.0074 
  & 0.0370 
  & 0.0074
  & 0.0370 \\
\midrule
\textbf{Minimal BoP}    
  & 0.0000    
  & \textcolor{darkred}{-0.0164}   
  & \textcolor{darkred}{-0.0833}   
  & \textcolor{darkred}{-0.0164}   
  & \textcolor{darkred}{-0.0833}   
  & \textcolor{darkred}{-0.0164}   
  & \textcolor{darkred}{-0.0196}   
  \\
\bottomrule
\end{tabular}}

\vspace{1em} 

\centering
\resizebox{\linewidth}{!}{
\footnotesize
\setlength{\tabcolsep}{4pt} 
\begin{tabular}{lccccccc}
\toprule
\multicolumn{8}{c}{\textbf{Regression Results}} \\
\cmidrule(lr){1-8}
\textbf{Group} 
    & \textbf{Prediction} 
    & \textbf{Incomp. D.L} 
    & \textbf{Suff. D.L} 
    & \textbf{Incomp. I.G} 
    & \textbf{Suff. I.G} 
    & \textbf{Incomp. S.V.S.}
    & \textbf{Suff. S.V.S.} \\
\midrule
\textbf{Female, 45+}    
  & 0.7582 
  & 0.1440 
  & \textcolor{darkred}{-0.5722} 
  & 0.1322 
  & \textcolor{darkred}{-0.6185}
  & 0.1380
  & \textcolor{darkred}{-0.5414} \\[0.5ex]

\textbf{Female, 18--45} 
  & 0.5639 
  & 0.0177 
  & \textcolor{darkred}{-0.3325} 
  & 0.0404 
  & \textcolor{darkred}{-0.2543}
  & 0.0649
  & \textcolor{darkred}{-0.3107} \\[0.5ex]

\textbf{Male, 45+}      
  & 0.3449  
  & 0.0258 
  & \textcolor{darkred}{-0.1180} 
  & 0.0299 
  & \textcolor{darkred}{-0.1368}
  & 0.0310
  & \textcolor{darkred}{-0.1518} \\[0.5ex]

\textbf{Male, 18--45}   
  & 0.4869  
  & \textcolor{darkred}{-0.1016} 
  & \textcolor{darkred}{-0.1639} 
  & \textcolor{darkred}{-0.0997} 
  & \textcolor{darkred}{-0.1571}
  & \textcolor{darkred}{-0.0892}
  & \textcolor{darkred}{-0.2124} \\[0.5ex]

\textbf{All Pop.}       
  & \textcolor{darkred}{-0.0093} 
  & 0.0595 
  & \textcolor{darkred}{-0.3097} 
  & 0.0584 
  & \textcolor{darkred}{-0.3311}
  & 0.0635
  & \textcolor{darkred}{-0.3167} \\[0.5ex]
\midrule
\textbf{Minimal BoP}    
  & \textcolor{darkred}{-0.0093}    
  & \textcolor{darkred}{-0.1016}    
  & \textcolor{darkred}{-0.5722}    
  & \textcolor{darkred}{-0.0997}    
  & \textcolor{darkred}{-0.6185}    
  & \textcolor{darkred}{-0.0892}    
  & \textcolor{darkred}{-0.5414}    
  \\
\bottomrule
\end{tabular}}
\vspace{0.5em}
\label{tab:edited_results_regression_2}
\end{table}


\end{document}